\documentclass{article}

\usepackage{microtype}
\usepackage{graphicx}
\usepackage{subcaption}
\usepackage{booktabs} 

\usepackage{hyperref}

\usepackage[preprint]{icml2026}

\usepackage{amsmath}
\usepackage{amssymb}
\usepackage{mathtools}
\usepackage{amsthm}

\usepackage{color}

\usepackage{times}
\usepackage{bm}
\usepackage{natbib}
\usepackage{algorithm}
\usepackage{algorithmic}
\usepackage{mathrsfs}
\usepackage{graphicx}
\usepackage{subcaption}
\usepackage{bbm}
\usepackage[flushleft]{threeparttable}
\usepackage{comment}
\usepackage{arydshln}
\usepackage{pstricks}
\usepackage{tikz}
\usepackage[inline]{enumitem}

\usepackage{xcolor}         
\definecolor{hcolor}{cmyk}{0, 0.85, 0.85, 0.50}

\usepackage{transparent}
\definecolor{adbskyblue}{HTML}{c1d8f0}
\definecolor{adbskyyellow}{HTML}{F2D22E}
\definecolor{adbskyred}{HTML}{ecccad}

\newcommand{\xcolorbox}[2]{%
  \tikz[baseline=(T.base)]\node[fill=#1, fill opacity=0.3, text opacity=1, inner sep=2pt, anchor=base] (T) {\ensuremath{#2}};%
}

\usepackage[capitalize,noabbrev]{cleveref}

\newcommand{\new}{\textrm{new}}

\def\T{{ \mathrm{\scriptscriptstyle T} }}

\def\##1\#{\begin{align}#1\end{align}}
\def\$#1\${\begin{align*}#1\end{align*}}

\newcommand{\tr}{\operatorname{tr}}

\def\T{{\top}} 

\newcommand{\Rom}[1]{\text{\uppercase\expandafter{\romannumeral #1\relax}}}

\newcommand{\maxiter}{K}
\newcommand{\ndata}{n}
\newcommand{\ndim}{p}
\newcommand{\aspratio}{\rho}
\newcommand{\effreg}{\tau}
\newcommand{\effnoise}{\gamma}

\newcommand{\siglev}{r}
\newcommand{\niter}{t}
\newcommand{\effprojmat}{Q}
\newcommand{\deteffnoise}{D}

\newcommand {\covmat}{\Sigma}

\newcommand {\sigval}{s}

\newcommand{\noiselev}{\sigma^2}

\newcommand{\cD}{\mathcal{D}}
\newcommand{\RR}{\mathbb{R}}
\newcommand{\EE}{\mathbbm{E}}
\newcommand{\argmin}{\mathop{\mathrm{argmin}}}

\renewcommand{\hat}{\widehat}

\theoremstyle{plain}
\newtheorem{theorem}{Theorem}[section]

\newtheorem{lemma}[theorem]{Lemma}
\newtheorem{corollary}[theorem]{Corollary}
\theoremstyle{definition}
\newtheorem{definition}[theorem]{Definition}
\newtheorem{assumption}[theorem]{Assumption}
\theoremstyle{remark}

\usepackage[textsize=tiny]{todonotes}

\icmltitlerunning{Why Self-Training Helps and Hurts}

\begin{document}

\twocolumn[
 
  \icmltitle{Why Self-Training Helps and Hurts: Denoising vs. Signal Forgetting}



  \icmlsetsymbol{equal}{*}

  \begin{icmlauthorlist}
    \icmlauthor{Mingqi Wu}{yyy}
    \icmlauthor{Archer Y. Yang}{yyy}
    \icmlauthor{Qiang Sun}{sch}
  \end{icmlauthorlist}

  \icmlaffiliation{yyy}{Mcgill University and Mila}
  \icmlaffiliation{sch}{University of Toronto and MBZUAI}

\icmlcorrespondingauthor{Mingqi Wu}{minkiw146@gmail.com}
  \icmlcorrespondingauthor{Archer Y. Yang}{archer.yang.yi@gmail.com}
  \icmlcorrespondingauthor{Qiang Sun}{qsunstats@gmail.com}

  \icmlkeywords{Machine Learning, ICML}

  \vskip 0.3in
]



\printAffiliationsAndNotice{}  

\begin{abstract}


  Iterative self-training (self-distillation) repeatedly refits a model on pseudo-labels generated by its own predictions. We study this procedure in overparameterized linear regression: an initial estimator is trained on noisy labels, and each subsequent iterate is trained on fresh covariates with noiseless pseudo-labels from the previous model. In the high-dimensional regime, we derive deterministic-equivalent recursions for the prediction risk and effective noise across iterations, and prove that the empirical quantities concentrate sharply around these limits. The recursion separates two competing forces: a systematic component that grows with iteration due to progressive \emph{signal forgetting}, and a stochastic component that decays due to \emph{denoising} via repeated data-dependent projections. Their interaction yields a $U$-shaped test-risk curve and an optimal early-stopping time. In spiked covariance models, iteration further acts as an iteration-dependent spectral filter that preserves strong eigendirections while suppressing weaker ones, inducing an implicit form of soft feature selection distinct from ridge regression. Finally, we propose an \emph{iterated} generalized cross-validation criterion and prove its uniform consistency for estimating the risk along the self-training trajectory, enabling fully data-driven selection of the stopping time and regularization. Experiments on synthetic covariances validate the theory and illustrate the predicted denoising–forgetting trade-off.

\end{abstract}

\section{Introduction}
\label{sec:introduction}

Iterative self-training, repeatedly refining a model using its own predictions, is a powerful yet paradoxical paradigm in modern machine learning. On the one hand, it has a long record of success in semi-supervised learning, where it leverages large unlabeled datasets to achieve state-of-the-art performance~\citep{scudder1965probability, yarowsky1995unsupervised, xie2020self, sohn2020fixmatch}. On the other hand, the same recursion can fail catastrophically. Recent work identifies a major failure mode, \emph{model collapse}, in which training on synthetic, self-generated outputs progressively degrades performance and can lead the model to forget the original data distribution~\citep{shumailov2023curse, dohmatob2024model}. Mitigating collapse, for example by mixing real data into each iteration, is an active research direction~\citep{garg2025preventing, gerstgrasser2024is}. These observations expose a fundamental tension that motivates our central question:
\begin{center}
\textit{{\color{hcolor}When does iteration improve generalization, \\and when does it hurt?}}
\end{center}
A useful intuition is to view each self-training step as a lossy teacher--student transfer. The student is trained only on the teacher's predictions, so each iteration necessarily discards information: some discarded components correspond to noise, but others contain signal. In high dimensions, this transfer resembles repeatedly compressing (or projecting) the current estimate through a fresh, data-dependent subspace. Unstable, noise-like components are removed quickly, whereas strong signal directions that are consistently supported by the data geometry persist for several iterations. When the signal is strong, early iterations remove more noise than signal and can improve generalization; once most noise has been removed, further iterations begin to erode signal as well, increasing systematic error. This trade-off yields the characteristic $U$-shaped risk curve and motivates early stopping.

These dynamics are closely connected to several core concepts in learning theory. Iterative self-training can be viewed as a form of knowledge distillation~\citep{Hinton2015Distilling}, where the estimator at step $t$ serves as the teacher for step $t{+}1$. It also instantiates weak-to-strong  generalization~\citep{burns2024weak}, in which a stronger model is trained using labels produced by a weaker supervisor. Despite strong empirical performance, a precise statistical characterization of the iterative mechanism remains open.

\begin{figure}[t]
    \centering
    \begin{subfigure}[b]{0.485\linewidth}
        \centering
        \includegraphics[width=\linewidth]{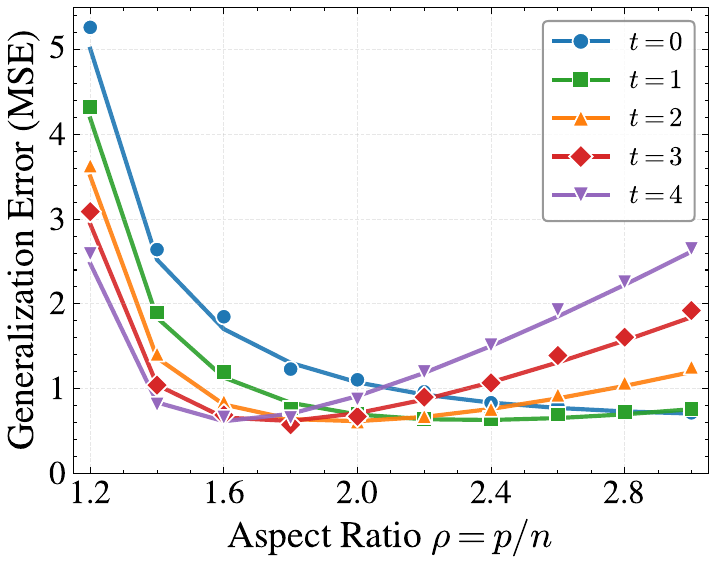}
        \caption{Error vs.\ aspect ratio}
        \label{fig:sim_aspect_ratio}
    \end{subfigure}
    \hfill
    \begin{subfigure}[b]{0.48\linewidth}
        \centering
        \includegraphics[width=\linewidth]{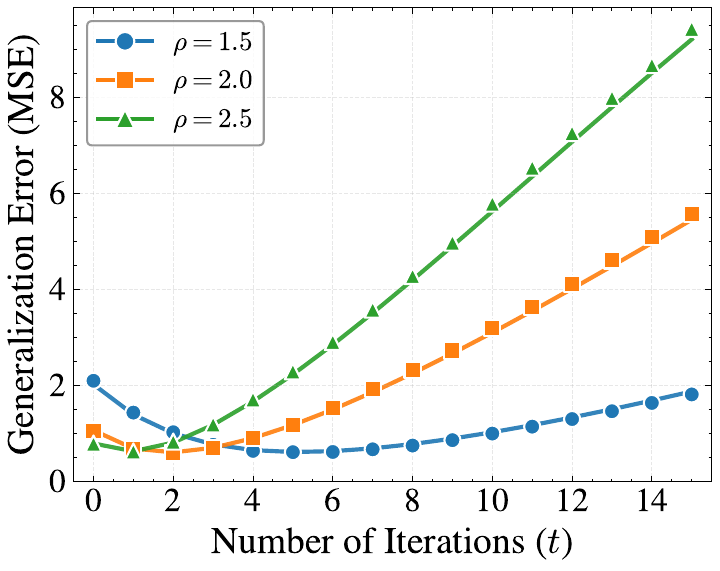}
        \caption{Error vs.\ iterations}
        \label{fig:sim_iterations}
    \end{subfigure}
    \caption{\small \textbf{Simulation results for the spiked covariance model ($s=25$).}
    Solid lines denote theoretical predictions; markers show simulation averages over 10 trials.
    \textbf{(a)} Iteration substantially reduces generalization error in the overparameterized regime.
    \textbf{(b)} The $U$-shaped curves match our theoretical predictions: initial iterations reduce stochastic error, while excessive iterations eventually lead to an accumulation of systematic error.}
    \label{fig:generalization_error_science_style}
\end{figure}

In this paper, we answer this question in an analytically tractable setting, overparameterized linear regression, using tools from high-dimensional asymptotic theory~\citep{hastie2022surprises, han2023distribution, chen2023sketched, wu2025ensemble}. We focus on the proportional asymptotic regime where $\ndim > \ndata$. Figure~\ref{fig:generalization_error_science_style} previews the central trade-off: early iterations denoise by attenuating stochastic error inherited from the initial noisy fit, but subsequent updates rely only on pseudo-labels and therefore gradually discard signal (\emph{signal forgetting}), which accumulates as systematic error. The resulting competition between denoising and signal forgetting produces a characteristic $U$-shaped test-error curve as a function of the number of iterations and implies an optimal stopping time.

Our main contributions are fourfold. (i) We derive a rigorous decomposition of the generalization error for iterative self-training in overparameterized linear regression, showing that the error splits into two parts: a systematic error that increases with iteration (\emph{signal forgetting}) and a stochastic error that decreases (\emph{denoising}). (ii) We prove that the competition between these two parts yields a characteristic $U$-shaped risk curve and implies an optimal number of iterations, explaining why excessive self-training is harmful without proposing a specific tuning procedure. (iii) We identify the mechanism driving these dynamics as an implicit regularization effect distinct from ridge regression: self-training preserves strong signal directions aligned with the feature covariance while suppressing stochastic components that lie in large, randomly oriented null spaces of the design matrices. This perspective explains both the denoising behavior and the student–teacher performance gap. (iv) We propose an iterated generalized cross-validation (GCV) procedure to estimate the optimal stopping time.

\paragraph{Related Work}

We briefly review the works most closely related to ours.

\textbf{Self-training.}
Self-training is a classical semi-supervised paradigm that augments limited labeled data with \emph{pseudo-labels} generated on unlabeled examples~\citep{scudder1965probability, yarowsky1995unsupervised}. It remains a core ingredient in modern deep-learning pipelines, including \emph{Noisy Student} and \emph{FixMatch}~\citep{xie2020self, sohn2020fixmatch}, with later methods improving pseudo-label reliability via adaptive selection or weighting~\citep{wang2023freematch, chen2023softmatch}. Extensions cover structured prediction tasks such as segmentation~\citep{chen2021semi}.

\textbf{Theory and collapse under recursion.}
Recent theory characterizes when self-training can amplify weak predictors under distributional assumptions~\citep{frei2022self, wei2021theoretical, zhang2022how}. A contrasting line studies \emph{model collapse} under recursive training on self-generated data~\citep{shumailov2023curse, bertrand2024on, dohmatob2024model}. Mitigation via mixing synthetic and fresh real data has been analyzed in overparameterized linear regression, including optimal mixing ratios and conditions for bounded error~\citep{garg2025preventing, he2025golden, gerstgrasser2024is, dey2024universality}.


\section{Preliminaries}\label{sec:pre}


This section introduces the setup for our analysis. We first define the high-dimensional linear model and the prediction risk. We then describe the \emph{ridge} and \emph{ridgeless} estimators, which underpin the iterative self-training procedure studied in this paper. Finally, we state our assumptions on the data-generating process.


\subsection{The linear model and prediction risk}
Consider a high-dimensional linear model with training data
$\cD_{0} := \{(x_i, y_i)\}_{i=1}^{\ndata}$. The observations are i.i.d. samples generated by: 
\begin{align}\label{eq:lm}
&y_i = x_i^\T \beta + \varepsilon_i, ~~  i = 1,\ldots, n, \\
&\nonumber \text{where }(x_i, \varepsilon_i) \sim P_x\times P_\varepsilon.
\end{align} 
Here $P_x$ is a distribution on $\RR^\ndim$, and  $P_\varepsilon$ is a distribution on $\RR$ with mean $0$ and variance $\noiselev$.  In matrix notation, we write
\begin{equation*}
    Y_{0} = X_{0} \beta + E_{0},
\end{equation*}
with  $Y_{0}=(y_1,\ldots, y_n)^\T \in \RR^{\ndata}$, $X_{0}=(x_1,\ldots, x_n)^\T \in \RR^{\ndata \times \ndim}$, and $E_{0} = (\varepsilon_1,\ldots, \varepsilon_n)^\T \in \RR^{\ndata}$.

To evaluate an estimator $\hat{\beta}$, we use the (out-of-sample) prediction risk, defined as the expected squared error on an independent test point $x_{\new}\sim P_x$:
\begin{equation}\label{eq:riskcon}
\!\!\!\! \mathcal{R}(\hat{\beta}; \beta) 
\!=\! \EE\!\left[\left(x_\new^\top \hat{\beta} - x_\new ^\top \beta \right)^2  \right]
\!=\! \left\| \covmat^{1/2}(\hat{\beta} - \beta) \right\|_{2}^{2},
\end{equation}
where $\covmat = \EE[x_\new x_\new^\top]$
 is the population feature covariance. The expectation is over $x_{\new}$ conditional on the training data.

\subsection{The iterative self-training algorithm}
This section introduces the iterative self-training procedure and the associated estimators. We begin with the ridge and ridgeless estimators~\citep{hastie2022surprises}.

\paragraph{Standard ridge and ridgeless estimators.}
We focus on two standard estimators. The ridge estimator is the solution to
\$
\hat{\beta}_{0}^{\lambda}
&:= \argmin_{b\in\RR^{\ndim}}
\left\{
\frac{1}{2\ndata}\|Y_{0}-X_{0}b\|_{2}^{2}
+\frac{\lambda}{2}\|b\|_{2}^{2}
\right\} \\
&= (X_{0}^{\top}X_{0}+\ndata\lambda I)^{-1}X_{0}^{\top}Y_{0},
\$
where $\lambda>0$ and $I\in\RR^{\ndim\times\ndim}$ is the identity matrix.

In the limit $\lambda\to 0^{+}$, ridge converges to the ridgeless (minimum-norm) least-squares estimator~\citep{hastie2022surprises},
\begin{align*}
\hat{\beta}_{0}
&:= \argmin_{b\in\RR^{\ndim}} \Big\{ \|b\|_{2}:
b \text{  minimizes } \|Y_{0}-X_{0}b\|_{2}^{2} \Big\} \\
&= (X_{0}^{\top}X_{0})^{+}X_{0}^{\top}Y_{0}
= \lim_{\lambda\to 0^{+}} (X_{0}^{\top}X_{0}+\ndata\lambda I)^{-1}X_{0}^{\top}Y_{0},
\end{align*}
where $(\cdot)^{+}$ denotes the Moore--Penrose pseudoinverse. 

\paragraph{Iterative self-training.}
We now introduce the main object of study: an iterative procedure that refines an initial linear estimator. Starting from $\hat{\beta}_{0}$ fit on noisy data, each iteration draws a fresh feature matrix $X_{\niter}$, produces pseudo-labels $Y_{\niter}=X_{\niter}\hat{\beta}_{\niter-1}$, and fits a new estimator on the resulting synthetic (noiseless) dataset. The procedure is summarized in Algorithm~\ref{alg:self_train}.

\begin{algorithm}[tb]
\caption{Iterative self-training.}
\label{alg:self_train}
\begin{algorithmic}
\STATE \textbf{Input:} $\cD_{0}:=\{X_{0},Y_{0}\}$, regularization parameters $\{\lambda_{\niter}\}_{\niter=0}^{\maxiter}$, iterations $\maxiter$.
\STATE \textbf{Initialization:} $\hat{\beta}_{0}\leftarrow (X_{0}^{\top}X_{0}+\ndata\lambda_{0} I)^{+}X_{0}^{\top}Y_{0}$.
\FOR{$\niter=1,2,\dots,\maxiter$}
\STATE Sample $X_{\niter}\in\RR^{\ndata_{\niter}\times\ndim}$ independently.
\STATE Set $Y_{\niter}\leftarrow X_{\niter}\hat{\beta}_{\niter-1}$.
\STATE Update $\hat{\beta}_{\niter}\leftarrow (X_{\niter}^{\top}X_{\niter}+\ndata_{\niter}\lambda_{\niter} I)^{+}X_{\niter}^{\top}Y_{\niter}$.
\ENDFOR
\STATE \textbf{Output:} $\{\hat{\beta}_{0},\hat{\beta}_{1},\dots,\hat{\beta}_{\maxiter}\}$.
\end{algorithmic}
\end{algorithm}

\paragraph{Self-training as self-distillation.}
Algorithm~\ref{alg:self_train} can be viewed as self-distillation: $\hat{\beta}_{\niter-1}$ acts as a teacher that generates pseudo-labels for fresh features, and $\hat{\beta}_{\niter}$ is the student trained on these labels. Because $Y_{\niter}$ contains no additive noise, each step can attenuate the stochastic error inherited from the initial fit, but it may also discard signal through repeated teacher--student transfer. Our analysis tracks the evolution of the prediction risk of $\{\hat{\beta}_{\niter}\}_{\niter=0}^{\maxiter}$ across iterations.

\subsection{Assumptions}
Our analysis relies on the following assumptions on the data-generating process.

\begin{assumption}[Independent feature matrices]\label{ass:indpX}
The feature matrices $\{X_{\niter}\}_{\niter\ge 0}$ used across iterations are mutually independent.
\end{assumption}

\begin{assumption}[Gaussian covariates]\label{ass:Gauss}
For each iteration $\niter\ge 0$, the feature matrix $X_{\niter}\in\RR^{\ndata_{\niter}\times\ndim}$ has i.i.d.\ rows $\{x_{\niter,i}\}_{i=1}^{\ndata_{\niter}}$ with
\[
x_{\niter,i}\sim \mathcal{N}(0,\covmat_{\niter}),
\]
where $\covmat_{\niter}\in\RR^{\ndim\times\ndim}$ is positive definite.
\end{assumption}

\begin{assumption}[Uniformly well-conditioned covariances]\label{ass:cov}
The training covariances $\{\covmat_{\niter}\}_{\niter\ge 0}$ and the test covariance $\covmat$ are uniformly well-conditioned: there exist constants $0<c_{\lambda}\le C_{\lambda}<\infty$ such that for every
$S\in \{\covmat_{\niter}\}_{\niter\ge 0}\cup\{\covmat\}$,
\[
c_{\lambda}\le \lambda_{\min}(S)\le \lambda_{\max}(S)\le C_{\lambda}.
\]
\end{assumption}

\begin{assumption}[Initial Gaussian noise]\label{ass:noise}
Noise is present only at $\niter=0$. Specifically, $E_{0}=(\varepsilon_1,\dots,\varepsilon_{\ndata})^\top$ satisfies
\[
E_{0}\sim \mathcal{N}(0,\noiselev I_{\ndata}),
\]
for some $\noiselev>0$, and is independent of $\{X_{\niter}\}_{\niter\ge 0}$.
\end{assumption}

Assumption~\ref{ass:indpX} isolates the teacher–student transfer effect: each iteration trains a new student on pseudo-labels using an independent feature matrix. This independence makes the dynamics analytically tractable and clarifies how information is discarded across iterations. Assumption~\ref{ass:Gauss} (Gaussian covariates) is primarily imposed for simplicity and can often be relaxed to bounded-moment conditions~\citep{hastie2022surprises, chen2023sketched}. Assumption~\ref{ass:cov} is a standard regularity condition that precludes degeneracy and ensures that the deterministic equivalents used in our high-dimensional analysis are well behaved. Finally, Assumption~\ref{ass:noise} (noise only at $\niter=0$) cleanly separates two forces: early iterations attenuate the stochastic error inherited from the initial noisy fit (denoising), while repeated self-training gradually erodes the signal component (signal forgetting).


\section{Warm-up: Risk Dynamics in the Spiked Covariance Model}\label{sec:toy}

To build intuition for the general theory in Section~\ref{sec:mainthm}, we first analyze the iterative self-training dynamics under the \emph{spiked covariance model}. This setting yields an interpretable description of how iteration denoises the estimate while gradually forgetting signal.

\textbf{Analysis framework.}
Unless otherwise stated, all results in this section share the following settings:
\begin{itemize}
    \item \emph{Ridgeless limit:} We study the minimum-norm interpolator by taking $\lambda_{\niter}\to 0^+$ for all $\niter\ge 0$.
    \item \emph{Asymptotic overparameterization:} We consider the high-dimensional limit $\ndim,\ndata\to\infty$ with
    $\aspratio := \ndim/\ndata > 1$ fixed, and define the effective regularization $\effreg := \aspratio-1>0$.
\end{itemize}

We specialize the general framework from Section~\ref{sec:pre} to the following concrete model.

\begin{assumption}[Single-spiked covariance model]\label{ass:spikedmodel}
The data-generating process satisfies:
\begin{itemize}
    \item \emph{Spiked covariance:} For all $\niter\ge 0$, the feature covariance is
    $\Sigma = (\sigval-1)u_1u_1^\top + I_p$, where $\sigval>1$ is the spike strength and $u_1$ is the leading eigenvector.
    \item \emph{Aligned signal:} The signal is aligned with the spike: $\beta=\siglev u_1$, where $\siglev^2=\|\beta\|_2^2$ is the signal power.
\end{itemize}
\end{assumption}

\subsection{Asymptotic risk dynamics}

We now specialize the general theory to the spiked covariance model. Theorem~\ref{thm:Gthm} establishes that the empirical prediction risk concentrates around a deterministic limit $\mathcal R^{*}_{t}$. The following result characterizes this limit, revealing its explicit decomposition into competing systematic and stochastic components:
\begin{theorem}[Asymptotic risk recursion for the spiked covariance model]\label{thm:spiked_risk}
Assume Assumptions~\ref{ass:indpX}--\ref{ass:noise} and Assumption~\ref{ass:spikedmodel}. In the limit $\ndim,\ndata\to\infty$ with $\ndim/\ndata\to \aspratio>1$ and $\effreg = \aspratio-1$, the deterministic prediction risk
$\mathcal{R}_{\niter}^{*} := \lim_{\ndim,\ndata\to\infty}\mathcal{R}_{\niter}$ admits the decomposition
\begin{equation}\label{eq:spikerisk}
\mathcal{R}_{\niter}^{*}=\xcolorbox{adbskyyellow}{\mathcal{B}_{\niter}^{*}}+\xcolorbox{adbskyred}{\mathcal{V}_{\niter}^{*}},
\end{equation}
where the systematic error   is
\begin{equation}\label{eq:signalerror}
\xcolorbox{adbskyyellow}{\mathcal{B}_{\niter}^{*}}
=\siglev^{2}\sigval\left(1-\left(\frac{\sigval}{\sigval+\effreg}\right)^{\niter+1}\right)^{2},
\end{equation}
and for $\niter\ge 1$, the stochastic error   satisfies 
\begin{equation}\label{eq:stocherror}
\xcolorbox{adbskyred}{\mathcal{V}_{\niter}^{*}}
=\frac{\mathcal{V}_{\niter-1}^{*}}{1+\effreg}
+\effreg\,\siglev^{2}\frac{\sigval^{2\niter+1}}{(\sigval+\effreg)^{2(\niter+1)}}, 
\end{equation}
with initial condition
$\mathcal{V}_{0}^{*}
=\frac{\noiselev}{\effreg}
+\frac{\effreg\,\sigval}{(\sigval+\effreg)^{2}}\siglev^{2}$.
\end{theorem}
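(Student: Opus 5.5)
The plan is to unroll the recursion directly: write $\hat\beta_t$ as a product of random projections applied to $\hat\beta_0$, and evaluate the resulting quadratic forms with standard ridgeless deterministic equivalents. In the ridgeless limit, each self-training step with noiseless labels gives
\[
\hat\beta_t = (X_t^\top X_t)^+X_t^\top X_t\,\hat\beta_{t-1} = P_t\hat\beta_{t-1},
\]
where $P_t$ is the orthogonal projection onto the row space of $X_t$. Hence $\hat\beta_t = P_t\cdots P_1\hat\beta_0$, with $\hat\beta_0 = P_0\beta + X_0^+E_0$. Since $\Sigma = \Sigma_t$ for every $t$, the risk is $\|\Sigma^{1/2}(\hat\beta_t-\beta)\|_2^2$.

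\textbf{Splitting off the systematic part.} I would define the systematic part as the squared error of the conditional mean along the spike. Because $\Sigma$ is rotationally invariant in $u_1^\perp$ and $\beta = r u_1$, the key one-step fact is the ridgeless deterministic equivalent
\[
\mathbb{E}[P_t]\approx \Sigma(\Sigma+\tau I)^{-1}.
\]
Here $\tau=\rho-1$ solves the ridgeless fixed-point equation $1=\frac1n\tr\,\Sigma(\Sigma+\tau I)^{-1}$. For a single spike with $p-1$ unit eigenvalues this gives $p/(1+\tau)\approx n$, so $\tau=\rho-1$. Applying this equivalent to $u_1$ yields $P_t u_1 \approx \frac{s}{s+\tau}u_1 + w_t$, where $w_t$ is a fluctuation of order one in norm that is asymptotically orthogonal to $u_1$ and spread isotropically over $u_1^\perp$. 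Iterating, the $u_1$-coefficient of $\hat\beta_t$ concentrates at $r\,(s/(s+\tau))^{t+1}$. Weighting by $s$ then gives $\mathcal B_t^*$ as in \eqref{eq:signalerror}.

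\textbf{Stochastic part.} The stochastic part $\mathcal V_t^*$ collects all remaining energy, i.e. the $\Sigma$-weighted norm of $\hat\beta_t$ outside $u_1$. On $u_1^\perp$ we have $\Sigma=I$, so this is just the Euclidean norm there. I would derive \eqref{eq:stocherror} one step at a time, conditioning on $\hat\beta_{t-1} = a_{t-1}u_1 + v_{t-1}$ with $v_{t-1}\perp u_1$ generic and independent of $X_t$. Two contributions appear.
\begin{enumerate*}[label=(\roman*)]
\item A bulk vector $v$ independent of $X_t$ is projected onto a uniformly random subspace of dimension $n$ inside essentially $p$ dimensions, so $\|P_tv\|^2\approx (n/p)\|v\|^2 = \|v\|^2/(1+\tau)$; this is the contraction term.
\item Projecting $a_{t-1}u_1$ creates fresh leakage $w_t$ of squared norm $a_{t-1}^2\,\tau s/(s+\tau)^2$. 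This is the standard ridgeless variance-type term $\|P_tu_1\|^2 - (s/(s+\tau))^2$, computed from $\|P_tu_1\|^2 = u_1^\top P_t u_1\approx s/(s+\tau)$. Substituting $a_{t-1}^2 = r^2 (s/(s+\tau))^{2t}$ gives exactly $\tau r^2 s^{2t+1}/(s+\tau)^{2(t+1)}$.
\end{enumerate*}
The cross terms between $P_tv_{t-1}$ and $w_t$ vanish asymptotically by independence and rotational invariance. For the initial step, $\|\Sigma^{1/2}X_0^+E_0\|^2\to\sigma^2/\tau$, the known ridgeless variance for an isotropic bulk. Adding the leakage of $P_0\beta$, namely $r^2\tau s/(s+\tau)^2$, gives $\mathcal V_0^*$.

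\textbf{Main obstacle.} The hard step is justifying item (i) at iteration $t\ge 2$. There $v_{t-1}$ is not exactly isotropic in $u_1^\perp$, and $P_t$ is not Haar-distributed, because its row space is tilted toward $u_1$. I would handle this by decomposing $P_t$ in the basis $\{u_1\}\oplus u_1^\perp$. Conditional on $X_tu_1$, the component of $X_t$ on $u_1^\perp$ is a standard Gaussian matrix independent of everything else. Hence any fixed vector in $u_1^\perp$ sees an isotropic projection, up to a rank-one correction whose effect is $O(1/p)$. Alternatively, I would invoke the general deterministic equivalent of Theorem~\ref{thm:Gthm} and verify that its recursion specializes to these formulas. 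Concentration at each step and a union over finitely many $t$ then pass the result to the limit.
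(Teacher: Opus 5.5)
Your proposal is correct, but it takes a different route from the paper. The paper never works with the projections $P_t$. It takes the general deterministic equivalent of Theorem~\ref{thm:Gthm} as given and sets $\Sigma_t=\Sigma$, so that $Q_{t:0}=Q^{t+1}$. It then gets $\tau=\rho-1$ from the trace equation and reads $\mathcal B_t^*$ off the first term of \eqref{eq:detrisk}. Evaluating every trace on the unit-eigenvalue bulk gives $\mathcal V_t^*=(D_t^{*2}+\mathcal V_{t-1}^*)/(1+\tau)^2$. After substituting the limiting $D_t$ recursion, the contraction $1/(1+\tau)$ appears only through the cancellation $\tau/(1+\tau)^2+1/(1+\tau)^2=1/(1+\tau)$.

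You instead track two scalars of the actual iterate: the spike coefficient $a_t=u_1^\top\hat\beta_t$ and the complement energy $\|(I-u_1u_1^\top)\hat\beta_t\|_2^2$. You use the exact identity $\mathcal R(\hat\beta_t)=s(a_t-r)^2+\|(I-u_1u_1^\top)\hat\beta_t\|_2^2$ together with two quadratic-form limits: $u_1^\top P_tu_1\to s/(s+\tau)$, and $v^\top P_tv\to\|v\|_2^2/(1+\tau)$ for $v\perp u_1$ independent of $X_t$. The initial condition adds $\|\Sigma^{1/2}X_0^+E_0\|_2^2\to\sigma^2/\tau$.

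What each route buys:
\begin{itemize}
\item Your route is self-contained and elementary; it needs only a Wishart plus Sherman--Morrison computation. It makes the mechanism explicit: contraction by $n/p$, plus leakage $\tau s/(s+\tau)^2$ per unit of squared spike mass. It gives the limit of the actual risk without using the $D_t$ recursion. However, the statement defines $\mathcal R_t^*$ as the limit of \eqref{eq:detrisk}, so to match it literally you still need Theorem~\ref{thm:Gthm} to identify your limit with that one.
\item The paper's route is pure algebra once the general theorem is granted. It extends directly to ridge $\lambda_t>0$ and to general, time-varying $\Sigma_t$. Yours relies on the ridgeless projection structure and on the symmetry of $\Sigma$ on $u_1^\perp$.
\end{itemize}

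Your ``main obstacle'' is milder than you suggest. Since $O^\top\Sigma O=\Sigma$ for every orthogonal $O$ fixing $u_1$, we have $O^\top P_tO\overset{d}{=}P_t$. Hence for any fixed $v\perp u_1$, $v^\top P_tv$ has the law of the same form evaluated at a uniformly random vector of norm $\|v\|_2$ in $u_1^\perp$. That form concentrates at $\|v\|_2^2\,(n-u_1^\top P_tu_1)/(p-1)\to\|v\|_2^2/\rho$. So you do not need isotropy of $v_{t-1}$, only its independence from $X_t$. The same invariance gives $u_1^\top P_tv_{t-1}=w_t^\top v_{t-1}\to 0$. You use this implicitly when you let the spike coefficient evolve as $a_t=\frac{s}{s+\tau}a_{t-1}$, and it should be stated explicitly.
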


\begin{figure}[t]
\centering
\begin{subfigure}[b]{0.48\linewidth}
\centering
\includegraphics[width=\linewidth]{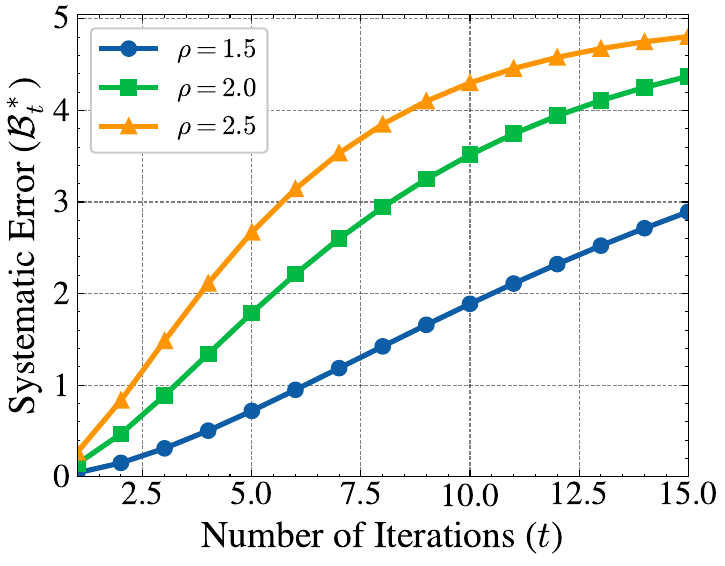}
\caption{Systematic error $\mathcal{B}_{\niter}^{*}$}
\label{fig:signal_error}
\end{subfigure}
\hfill
\begin{subfigure}[b]{0.48\linewidth}
\centering
\includegraphics[width=\linewidth]{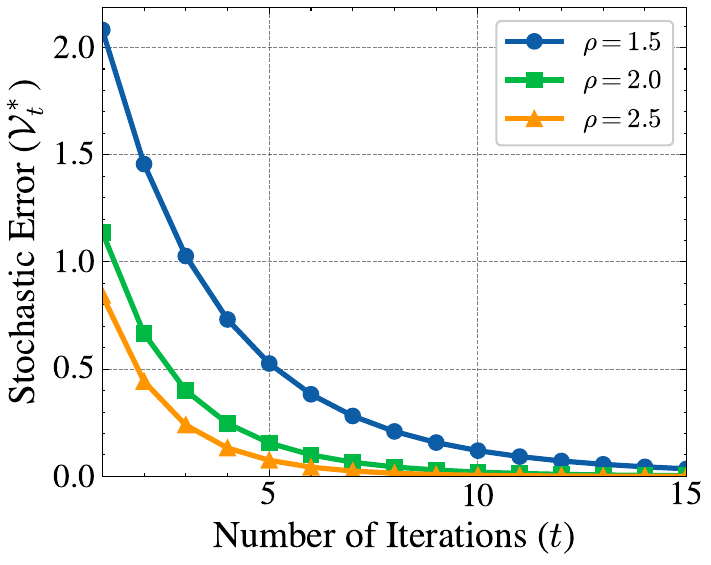}
\caption{Stochastic error $\mathcal{V}_{\niter}^{*}$}
\label{fig:stochastic_error}
\end{subfigure}
\caption{\small \textbf{Systematic vs.\ stochastic errors across iterations.}
The systematic error. \textbf{(a)} increases with $\niter$, reflecting signal forgetting, while the stochastic error. \textbf{(b)} decreases due to denoising. Each curve corresponds to a different aspect ratio $\aspratio$.}
\label{fig:error_vs_iterations}
\end{figure}

Theorem~\ref{thm:spiked_risk} decomposes the risk into two competing forces (as shown in Figure \ref{fig:error_vs_iterations}).
\textbf{Systematic error} $\xcolorbox{adbskyyellow}{\mathcal{B}_{\niter}^{*}}$ (signal error)  quantifies \emph{signal forgetting}: it tracks the deterministic decay of the estimator's projection onto the true signal direction $u_{1}$. This term increases with $t$.  As $\niter\to\infty$, it converges to
$\siglev^{2}\sigval$ -- the risk of the null estimator $\hat{\beta}=0$ -- implying that excessive iteration eventually erases the signal learned from the initial data.  \textbf{Stochastic error} $\xcolorbox{adbskyred}{\mathcal{V}_{\niter}^{*}}$ (noise error) captures the error arising from (i) the initial noise and (ii) the resampling of  feature matrices $X_t$ at each iteration.   
Crucially, even though the pseudo-labels $Y_{\niter}=X_{\niter}\hat{\beta}_{\niter-1}$ are noiseless conditional on $\hat{\beta}_{\niter-1}$, the student model injects fresh estimation error at every step by projecting the teacher's predictions onto a random, rank-deficient subspace.



The following corollary formalizes that iteration \emph{denoises} the stochastic component:
\begin{corollary}[Stochastic error reduction]\label{lm:stochreduct}
Under the assumptions of Theorem~\ref{thm:spiked_risk}, the stochastic error vanishes asymptotically: $\mathcal{V}_{\niter}^{*}\to 0$ as $\niter\to\infty$.
\end{corollary}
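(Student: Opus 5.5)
We take the recursion \eqref{eq:stocherror} from Theorem~\ref{thm:spiked_risk} as given and study it as a linear first-order recursion with a geometrically vanishing forcing term. Write $a := 1/(1+\effreg) \in (0,1)$ (since $\effreg=\aspratio-1>0$) and $q := \sigval/(\sigval+\effreg)\in(0,1)$ (since $\sigval>1$, $\effreg>0$). The forcing term is
\[
f_{\niter} := \effreg\,\siglev^{2}\frac{\sigval^{2\niter+1}}{(\sigval+\effreg)^{2(\niter+1)}} = \frac{\effreg\,\siglev^{2}}{\sigval}\,q^{2(\niter+1)} = C q^{2\niter},
\]
where $C := \effreg\siglev^{2}q^{2}/\sigval$ is a finite constant. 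The recursion is then $\mathcal{V}_{\niter}^{*} = a\,\mathcal{V}_{\niter-1}^{*} + C q^{2\niter}$ for $\niter\ge1$, with $\mathcal{V}_{0}^{*}$ finite and nonnegative.

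Unrolling gives the closed form
\[
\mathcal{V}_{\niter}^{*} = a^{\niter}\mathcal{V}_{0}^{*} + C\sum_{k=1}^{\niter} a^{\niter-k} q^{2k}.
\]
The first term tends to zero because $0<a<1$. For the convolution sum, let $m:=\max(a,q^{2})<1$. Each summand is at most $m^{\niter}$, so the sum is bounded by $\niter\, m^{\niter}$, which tends to zero. Hence $0\le \mathcal{V}_{\niter}^{*}\le a^{\niter}\mathcal{V}_0^{*}+C\niter m^{\niter}\to 0$. If one wants an explicit formula, for $a\ne q^2$ the sum equals $q^{2}\,(a^{\niter}-q^{2\niter})/(a-q^{2})$, which also tends to zero.

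Alternatively, one can avoid the closed form and use the standard fact that if $x_{\niter}=a x_{\niter-1}+f_{\niter}$ with $|a|<1$ and $f_{\niter}\to0$, then $x_{\niter}\to0$. To prove it, take $\limsup$ of both sides to get $L\le aL$ with $L$ finite, hence $L=0$.

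There is no real obstacle here. The only points needing care are verifying that $a<1$ and $q<1$, which follow from $\aspratio>1$ and $\sigval>1$, and handling the borderline case $a=q^{2}$, where the sum is $\niter a^{\niter}$. The bound by $\niter m^{\niter}$ covers that case uniformly.
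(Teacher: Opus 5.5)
Your proof is correct and follows the same route as the paper. You unroll the linear recursion, note that $a^{t}\mathcal{V}_0^{*}\to 0$ since $a=1/(1+\tau)<1$, and show that the convolution of the geometric sequences $a^{t}$ and $q^{2t}$ vanishes; your explicit bound $\sum_{k=1}^{t}a^{t-k}q^{2k}\le t\,m^{t}$ with $m=\max(a,q^{2})<1$, which also covers $a=q^{2}$, makes rigorous the step the paper only gestures at with its ``Cauchy product or squeeze theorem'' remark.
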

The decomposition in Theorem~\ref{thm:spiked_risk} thus highlights the fundamental trade-off:
each iteration reduces stochastic error $\mathcal{V}_{\niter}^{*}$ at the cost of increasing systematic error $\mathcal{B}_{\niter}^{*}$.

\textbf{Rate mismatch and the origin of the $U$-shape risk. }
In the signal-free limit $(r^2=0)$, systematic error $\mathcal{B}_{\niter}^{*}$ vanishes, and   stochastic error exhibits pure exponential decay: $\mathcal{V}_{\niter}^{*}=\frac{\sigma^2}{\tau(1+\tau)^t}$. Similarly, in the isotropic limit ($\sigval=1$, see Appendix \ref{sec:iso_cov}),  $\mathcal{B}_{\niter}^{*}$ and  $\mathcal{V}_{\niter}^{*}$ contract at identical rates, yielding monotonic risk $\mathcal{R}_{\niter}^{*}$ in $t$, as seen in Figure \ref{fig:comparison_results}a. The $U$-shape risk $\mathcal{R}_{\niter}^{*}$ therefore emerges strictly from \emph{anisotropy} ($s>1$), which induces a rate mismatch: noise error $\mathcal{V}_{\niter}^{*}$ decays rapidly at rate $(1+\effreg)^{-\niter}$, while the signal-forgetting term $\mathcal{B}_{\niter}^{*}$ grows much more slowly because contraction factor $\sigval/(\sigval+\effreg)$ is very close to one. As shown in Figure \ref{fig:comparison_results}a,  {\color{hcolor}\emph{this disparity creates a temporary window where denoising dominates before signal forgetting takes over.}}

\subsection{Implicit feature selection}\label{sec:implicitreg}

We now study the regime in which the signal is easily detectable, i.e., the spike strength $\sigval$ is large. In this regime, the iterates behave like a \emph{direction-adaptive spectral filter}: they preserve the dominant signal direction while repeatedly suppressing isotropic components induced by noise and resampling. The following corollary characterizes the risk as the spike strength diverges.

\begin{corollary}[Asymptotics for strong spikes]\label{lm:strongsp}
As $\sigval\to\infty$, the risk components satisfy
\begin{align*}
\mathcal{B}_{\niter}^{*} &\to 0, \quad \mathcal{V}_{\niter}^{*} \to \frac{\noiselev}{\effreg(1+\effreg)^{\niter}}.
\end{align*}
\end{corollary}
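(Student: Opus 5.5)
The plan is to work directly from the closed-form expressions in Theorem~\ref{thm:spiked_risk}, treating $\effreg$, $\siglev^2$, $\noiselev$ and $\niter$ as fixed and sending $\sigval\to\infty$. There is one subtlety. Both $\mathcal{B}_{\niter}^{*}$ and the signal-dependent parts of $\mathcal{V}_{\niter}^{*}$ carry a prefactor growing in $\sigval$, so we must show that the shrinking factor wins in each term.

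For the systematic error, write $q:=\sigval/(\sigval+\effreg)=1-\effreg/(\sigval+\effreg)$. By Bernoulli's inequality (or a first-order expansion),
\[
0\le 1-q^{\niter+1}\le (\niter+1)\frac{\effreg}{\sigval+\effreg},
\]
hence
\[
\mathcal{B}_{\niter}^{*}\le \siglev^2\sigval(\niter+1)^2\frac{\effreg^2}{(\sigval+\effreg)^2}=O(1/\sigval)\to 0
\]
for each fixed $\niter$.

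For the stochastic error, I will unroll the recursion (\ref{eq:stocherror}) into a closed form:
\[
\mathcal{V}_{\niter}^{*}=\frac{\mathcal{V}_0^{*}}{(1+\effreg)^{\niter}}+\effreg\siglev^2\sum_{k=1}^{\niter}\frac{1}{(1+\effreg)^{\niter-k}}\frac{\sigval^{2k+1}}{(\sigval+\effreg)^{2(k+1)}} .
\]
Each summand satisfies
\[
\frac{\sigval^{2k+1}}{(\sigval+\effreg)^{2k+2}}\le \frac{1}{\sigval}\to 0,
\]
and the sum is finite, so it vanishes. In the initial condition, the signal term $\effreg\sigval\siglev^2/(\sigval+\effreg)^2\le \effreg\siglev^2/\sigval\to0$, so $\mathcal{V}_0^{*}\to\noiselev/\effreg$. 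Combining the two pieces gives $\mathcal{V}_{\niter}^{*}\to \noiselev/(\effreg(1+\effreg)^{\niter})$. Equivalently, one can argue by induction on $\niter$ using the recursion and the same $O(1/\sigval)$ bound on the injected term.

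The only real obstacle is the competition noted above between the $\sigval$ prefactors and the decaying factors. For the systematic error this is resolved because $(1-q^{\niter+1})^2$ is $O(\sigval^{-2})$, which beats the linear prefactor. I would also remark that the limit is pointwise in $\niter$, not uniform, since the bound $(\niter+1)^2/\sigval$ degrades as $\niter$ grows.
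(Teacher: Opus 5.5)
Your proof is correct and follows essentially the same route as the paper: you control $1-\bigl(\sigval/(\sigval+\effreg)\bigr)^{\niter+1}$ by a first-order estimate to get $\mathcal{B}_{\niter}^{*}=O(1/\sigval)$, and you show that each signal-driven injected term in the stochastic recursion is $O(1/\sigval)$. The differences are minor. The paper passes to the limit in the recursion by induction on $\niter$, while you unroll it, which is equivalent. Your Bernoulli inequality gives an explicit non-asymptotic bound, slightly cleaner than the paper's Taylor expansion with an $O(\sigval^{-2})$ remainder.
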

In this limit, the systematic error vanishes entirely. The algorithm achieves ideal separation: it perfectly preserves the signal component while attenuating the stochastic error at an exponential rate.

\begin{figure}[t]
\centering
\begin{subfigure}[b]{0.48\linewidth}
\centering
\includegraphics[width=\linewidth]{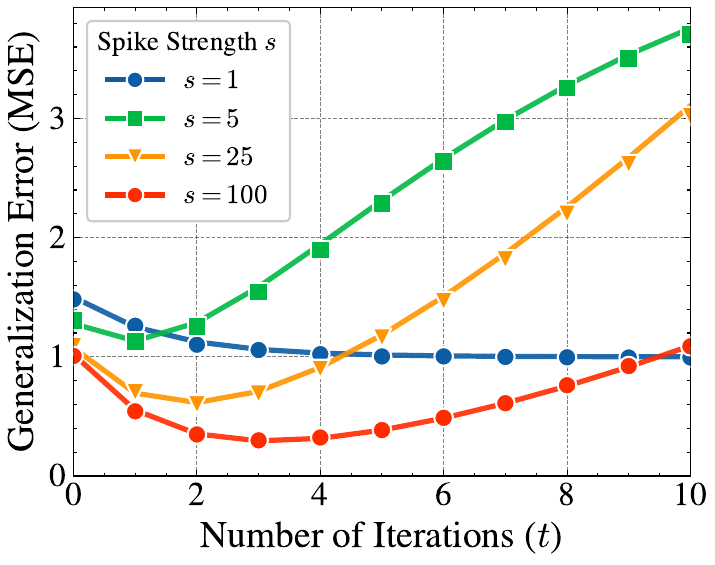}
\caption{Risk evolution}
\label{fig:strongspike}
\end{subfigure}
\hfill
\begin{subfigure}[b]{0.48\linewidth}
\centering
\includegraphics[width=\linewidth]{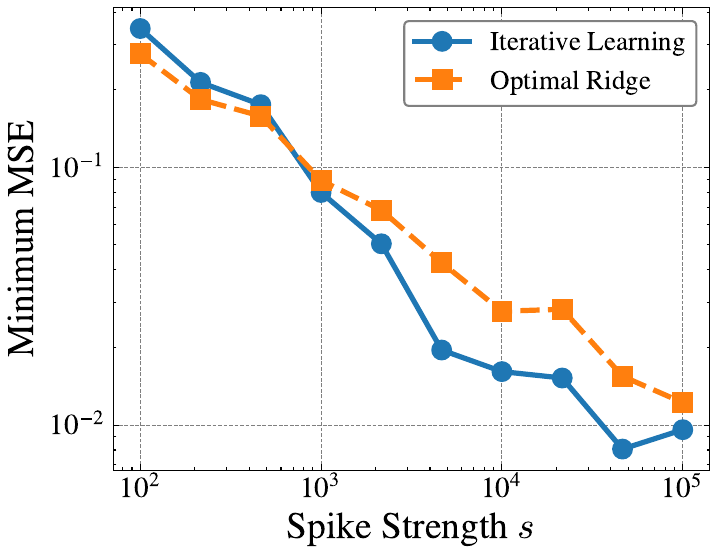}
\caption{Performance crossover}
\label{fig:optimalridge}
\end{subfigure}
\caption{\small \textbf{Performance under the spiked covariance model.}
\textbf{(a)} Test error over iterations $\niter$ for varying spike strengths $\sigval$. The $U$-shaped curves reflect early denoising and late-stage signal forgetting.
\textbf{(b)} Minimum error over $\niter$ versus optimally tuned ridge (minimum over $\lambda$). As signal strength $\sigval$ increases, iterative self-training can surpass ridge.}
\label{fig:comparison_results}
\end{figure}

\textbf{A spectral-filter viewpoint.}
Theorem~\ref{thm:spiked_risk} exposes the mechanism. The systematic component $\mathcal{B}_{\niter}^{*}$ in \eqref{eq:signalerror} is governed by the survival factor
$\left(\frac{\sigval}{\sigval+\effreg}\right)^{\niter+1}$.
When $\sigval\gg \effreg$, this factor is close to $1$, so the algorithm barely shrinks the signal, consistent with $\mathcal{B}_{\niter}^{*}$ being small. More precisely,  a first-order expansion yields
\[
1-\left(\frac{\sigval}{\sigval+\effreg}\right)^{\niter+1}
=1-\left(1-\frac{\effreg}{\sigval+\effreg}\right)^{\niter+1}
\approx \frac{(\niter+1)\effreg}{\sigval},
\]
and hence $\mathcal{B}_{\niter}^{*}\approx \siglev^{2}(\niter+1)^{2}\effreg^{2}/\sigval \to 0$ as $\sigval\to\infty$. This quantifies the crucial insight: {\color{hcolor}\emph{strong spikes prevent signal forgetting.}}

\textbf{Mechanism: repeated random projections.} To see why the stochastic component is reduced more aggressively, rewrite the ridgeless update as
\[
\hat{\beta}_{\niter}
=(X_{\niter}^{\top}X_{\niter})^{+}X_{\niter}^{\top}Y_{\niter}
=(X_{\niter}^{\top}X_{\niter})^{+}X_{\niter}^{\top}X_{\niter}\hat{\beta}_{\niter-1}
=:P_{\niter}\hat{\beta}_{\niter-1}.
\]
The operator $P_{\niter}$ is the orthogonal projection onto the row space of $X_{\niter}$ (equivalently, the span of the top $\ndata_{\niter}$ right singular vectors of $X_t$). In the underparameterized regime $\ndim\le \ndata_{\niter}$, $X_{\niter}^{\top}X_{\niter}$ is invertible and $P_{\niter}=I$, so the iterates remain unchanged; hence {\color{hcolor}\emph{nontrivial dynamics arise only when $\ndim>\ndata_{\niter}$}.} This projection acts differently on signal and noise components of the estimator:
\begin{enumerate}
\item \emph{Signal preservation.}
For large $\sigval$, the leading singular direction of $X_{\niter}$ aligns with $u_{1}$. Consequently, $P_{\niter}u_{1} \approx u_{1}$. The signal component lies largely in the projected subspace and is retained across iterations. 
\item \emph{Noise suppression.}
Noise components are approximately isotropic.
At each step, $P_{\niter}$ annihilates the noise component falling into its $(\ndim - \ndata_{\niter})$-dimensional null space. Because the feature matrices $\{X_{\niter}\}$ are independent, their null spaces are incoherent; repeated projection progressively shears away noise in all directions. 
\end{enumerate}

Together, these effects yield an \emph{implicit feature-selection} behavior: the algorithm preserves persistent, high-variance directions while progressively eliminating transient, randomly oriented components. This explains the algorithm's effectiveness in the strong-signal regime and parallels the mechanism of weak-to-strong generalization~\citep{burns2024weak}: {\color{hcolor}\emph{a student trained on labels produced by a teacher can denoise while preserving the teacher's reliable signal directions.}}

\textbf{Connection to ridge and the performance crossover.}
Ridge regression imposes \emph{global} shrinkage controlled by a single $\lambda$, shrinking all directions as a function of their empirical eigenvalues. In contrast, iterative self-training implements an \emph{iteration-dependent, direction-adaptive} filter: in the strong-spike regime it leaves the dominant signal direction nearly unshrunk while repeatedly contracting the isotropic subspace at an effective rate close to $(1+\effreg)^{-1}$ per iteration. This mechanistic difference explains the crossover in Figure~\ref{fig:optimalridge}: as $\sigval$ grows, self-training can denoise without incurring the signal shrinkage that globally regularized methods may still pay.

\subsection{Generalization to multi-spiked covariance}\label{subsec:multi_spike}

The single-spike model reveals the fundamental mechanism of signal preservation versus forgetting. However, real-world data typically possesses a richer spectral structure with multiple latent features of varying strengths. To capture this, we extend our analysis to the \emph{multi-spike covariance model}. This setting demonstrates that the iterative self-training procedure acts as a \textit{spectral filter}, selectively preserving strong signal components while suppressing weaker ones. 

\begin{assumption}[Multi-spiked covariance model]\label{ass:multi-spike}
Assume that, for all $t\geq 0$, the feature covariance and signal take the form
\[
\Sigma_t = \Sigma = \sum_{j=1}^{k}(s_j-1)u_j u_j^\top+I_p, \text{ with }
s_1\ge \cdots \ge s_k>1,
\]
and the true signal lies in the spiked subspace: $\beta=\sum_{j=1}^{k} r_j u_j$.
\end{assumption}

The next theorem shows that the risk evolution superposes across spike directions.

\begin{theorem}[Risk decomposition for the multi-spike model]\label{thm:multi_spike_risk}
Assume Assumptions~\ref{ass:indpX}--\ref{ass:noise} and Assumption~\ref{ass:multi-spike}. In the asymptotic limit, the deterministic prediction risk satisfies
$\mathcal{R}_{\niter}^{*}=\mathcal{B}_{\niter}^{*}+\mathcal{V}_{\niter}^{*}$ with
\begin{align}
\mathcal{B}_{\niter}^{*}
&= \sum_{j=1}^{k} r_j^{2} s_j\left(1-\left(\frac{s_j}{s_j+\effreg}\right)^{\niter+1}\right)^{2}, \label{eq:multispike_bias}
\end{align}
and
\begin{align}
\mathcal{V}_{\niter}^{*}
&=\frac{\mathcal{V}_{\niter-1}^{*}}{1+\effreg}
+\effreg\sum_{j=1}^{k} r_j^{2}\frac{s_j^{2\niter+1}}{(s_j+\effreg)^{2(\niter+1)}}, \label{eq:multispike_var}
\end{align}
with $\niter\ge 1$ and
$\mathcal{V}_{0}^{*}
=\frac{\noiselev}{\effreg}
+\sum_{j=1}^{k}\frac{\effreg\, r_j^{2} s_j}{(s_j+\effreg)^{2}}$.
\end{theorem}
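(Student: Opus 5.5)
We will derive Theorem~\ref{thm:multi_spike_risk} from the general deterministic-equivalent recursion of Theorem~\ref{thm:Gthm} (stated in Section~\ref{sec:mainthm}), exactly as Theorem~\ref{thm:spiked_risk} is obtained in the single-spike case. Only the evaluation of the resolvent functionals changes when $\Sigma$ has several spikes.

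\textbf{Step 1: one ridgeless step.} In the ridgeless limit each update is $\hat\beta_t = P_t\hat\beta_{t-1}$, with $P_t$ the projection onto the row space of $X_t$. The standard deterministic equivalent for $p/n=\rho>1$ is
\[
P_t \asymp \Sigma(\Sigma+\kappa I)^{-1},
\]
where $\kappa$ solves $\frac1n\tr\,\Sigma(\Sigma+\kappa I)^{-1}=1$. Because only $k=O(1)$ eigenvalues differ from one, this equation reduces to $\rho/(1+\kappa)=1$ up to $O(1/p)$. Hence $\kappa=\rho-1=\effreg$. The same computation as for the single spike, with $\effreg$ unchanged, gives:
\begin{itemize}
\item \emph{Mean map.} $\EE[P_t]\,b\approx \sum_j \frac{s_j}{s_j+\effreg}\langle u_j,b\rangle u_j+\frac{1}{1+\effreg}b_\perp$, where $b_\perp$ is the component of $b$ orthogonal to the spiked subspace.
\item \emph{Fluctuation term.} The second-moment formula of Theorem~\ref{thm:Gthm} gives
\[
\|\Sigma^{1/2}(P_t-\EE P_t)b\|^2\approx \effreg\,\|\Sigma^{1/2}(\Sigma+\effreg I)^{-1}\dots\|^2,
\]
which is a fresh isotropic-like error in $\Sigma$-norm.
\end{itemize}

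\textbf{Step 2: split the iterate.} Write $\hat\beta_t=\bar\beta_t+\xi_t$, where
\[
\bar\beta_t=\sum_j r_j\Big(\frac{s_j}{s_j+\effreg}\Big)^{t+1}u_j
\]
is the deterministic signal path, and $\xi_t$ collects the initial noise and all resampling fluctuations. Test risk then splits into two parts:
\begin{itemize}
\item \emph{Systematic error.} $\mathcal{B}^*_t=\|\Sigma^{1/2}(\beta-\bar\beta_t)\|^2$. Since the $u_j$ are orthonormal eigenvectors of $\Sigma$, this quadratic form diagonalizes, and summing over spikes gives \eqref{eq:multispike_bias} with no cross terms.
\item \emph{Stochastic error.} $\mathcal{V}^*_t=\lim\|\Sigma^{1/2}\xi_t\|^2$. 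The cross term $2\langle\Sigma^{1/2}(\beta-\bar\beta_t),\Sigma^{1/2}\xi_t\rangle$ vanishes, because the fluctuations are asymptotically mean-zero and independent of the fixed direction, by Theorem~\ref{thm:Gthm}'s concentration.
\end{itemize}

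\textbf{Step 3: recursion for $\xi_t$.} We have $\xi_t=P_t\xi_{t-1}+(P_t-\EE P_t)\bar\beta_{t-1}$, and the two pieces are asymptotically orthogonal. There are two contributions:
\begin{itemize}
\item \emph{Contraction of the old noise.} $\xi_{t-1}$ is delocalized: its projection onto the $k$ spike directions is $O(1/\sqrt p)$ of its mass. It is therefore contracted by the bulk factor $1/(1+\effreg)$ in $\Sigma$-norm, which gives the first term of \eqref{eq:multispike_var}.
\item \emph{New fluctuation.} The fluctuation of projecting $\bar\beta_{t-1}$ is computed spike by spike. For the component $r_j(s_j/(s_j+\effreg))^{t}u_j$ the single-spike formula gives $\effreg\, r_j^2 s_j^{2t+1}/(s_j+\effreg)^{2(t+1)}$. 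The cross terms between spikes $j\neq l$ vanish in the limit, because the fluctuation covariance is diagonal in the eigenbasis of $\Sigma$.
\end{itemize}
For $t=0$, the noise contributes $\noiselev/\effreg$ and the same fluctuation formula applied to $\beta$ gives $\mathcal{V}^*_0$.

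\textbf{Main obstacle.} The hard step is the additivity across spikes in Step 3. It rests on two claims: that the fluctuation quadratic form has no $u_j$–$u_l$ cross terms, and that $\xi_{t-1}$ remains delocalized so that the bulk rate $1/(1+\effreg)$ is exact. To prove the first, we would write the second-order deterministic equivalent from Theorem~\ref{thm:Gthm} explicitly as a function of $\Sigma$ alone. Since it is then a spectral function of $\Sigma$, it commutes with $\Sigma$ and is diagonal in $\{u_j\}$. For delocalization we would argue inductively. The fresh fluctuation $(P_t-\EE P_t)\bar\beta_{t-1}$ is rotationally invariant within the bulk eigenspace given $\Sigma$'s structure. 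So its overlap with any fixed $u_j$ is $O_p(p^{-1/2})$ relative to its norm, and the spiked-direction contraction factors affect $\xi$ only at vanishing order.
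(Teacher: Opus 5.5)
Your argument is correct, but it is organized differently from the paper's. The paper (spelled out for Theorem~\ref{thm:spiked_risk}; the multi-spike case is the same computation with sums over $j$) never works with the iterate. It substitutes the constant spiked $\Sigma$ into the closed-form equivalents $\mathcal{R}_t^*$ and $D_t^2$ and notes that every normalized trace is dominated by the bulk. This gives $\mathcal{V}_t^*=(D_t^{*2}+\mathcal{V}_{t-1}^*)/(1+\tau)^2$, into which it then substitutes the $D_t$-recursion. The rate $1/(1+\tau)$ appears only at the end, as $(1+\tau)^{-2}+\tau(1+\tau)^{-2}$: old noise shrunk by $Q$, plus the part of it re-injected as fresh effective noise through $D_t^2$.

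Your decomposition $\xi_t=P_t\xi_{t-1}+(P_t-\EE P_t)\bar\beta_{t-1}$ packs these two effects into one statement: a random projection keeps a $1/\rho$ fraction of a delocalized vector's energy. It also avoids tracking the $D_h$. It can be made even more elementary than you indicate. Since $P_t^2=P_t$, you have $\|P_t\xi\|^2=\xi^\top P_t\xi\approx\xi^\top Q\xi$, and $\|(P_t-Q)\bar\beta\|^2\approx\bar\beta^\top(Q-Q^2)\bar\beta=\sum_j c_j^2\kappa_j(1-\kappa_j)$ for $\bar\beta=\sum_j c_ju_j$. These vectors live asymptotically in the bulk, where $\Sigma$ acts as the identity, so their $\Sigma$-norms equal their Euclidean norms. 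Hence for $t\ge 1$ you only need first-order equivalents $b^\top P_t b'\approx b^\top Qb'$, not the second-moment formula. The price is that you must prove by induction that $\xi_t$ stays delocalized and that all cross terms vanish. The paper gets both for free, because spike directions carry weight $O(k/p)$ in every normalized trace.

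One justification in your sketch needs replacing. Invariance under rotations of the bulk eigenspace says nothing about overlaps with the $u_j$, because those rotations fix every $u_j$. For example, $au_1+w$ with $w$ uniform on a bulk sphere is invariant in law yet not delocalized. The correct reason is concentration. For $v$ independent of $X_t$ with $\|v\|_2=O(1)$, $P_tv$ is the ridgeless fit to noiseless labels $X_tv$. Theorem~\ref{thm:distri} with the $1$-Lipschitz test function $b\mapsto u_j^\top b$ then gives $u_j^\top P_tv\approx u_j^\top Qv=\kappa_j u_j^\top v$. This yields $u_j^\top(P_t-Q)\bar\beta_{t-1}\to 0$ and $u_j^\top P_t\xi_{t-1}\approx\kappa_j u_j^\top\xi_{t-1}\to 0$, which is exactly the inductive step you need. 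The base case holds because $\|(X_0^\top X_0)^{+}\|_2=O(1/n)$, so the initial noise also has only $O(1/n)$ variance along each $u_j$.
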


\textbf{Iterative learning as spectral filtering.} 
Theorem~\ref{thm:multi_spike_risk} shows that different eigendirections are filtered at different rates. The survival of the signal component along direction $u_j$ is governed by the contraction factor $\kappa_j := \frac{s_j}{s_j+\effreg}$.
For strong features ($s_j\gg \effreg$), we have $\kappa_j\approx 1$, and thus the signal is largely preserved across iterations. For weak features ($s_j\approx 1$), we have $\kappa_j\ll 1$, and thus the signal is rapidly forgotten, resembling the isotropic noise.
Thus, {\color{hcolor}\emph{iterative self-training performs soft feature selection: it retains directions that are strong relative to the stochastic noise while filtering out weaker signals together with stochastic noise.}}

\section{Main Results}\label{sec:mainthm}

This section presents our main theoretical results. We focus on the overparameterized regime in which the feature dimension exceeds the sample size at every iteration, i.e., $\ndim>\ndata_{\niter}$. As discussed in Section~\ref{sec:implicitreg}, this is precisely the regime where iterative self-training is non-trivial: each update applies a non-identity projection and induces rich signal--noise dynamics. We derive a precise, non-asymptotic characterization of the estimators $\{\hat{\beta}_{\niter} \}_{0\leq \niter \leq K}$ generated by our iterative self-training scheme.

\subsection{Deterministic equivalents}\label{sec:determ}

The high-dimensional behavior of the ridge estimator at iteration $\niter$ is characterized by two scalar \emph{effective parameters}: an effective regularization level $\effreg_{\niter}$ and an effective noise variance $\effnoise_{\niter}^{2}$. Let $\aspratio_{\ndata,\niter}:=\ndim/\ndata_{\niter}$ denote the aspect ratio at iteration $\niter$. For each $\niter\ge 0$, $(\effreg_{\niter},\effnoise_{\niter}^{2})$ is defined as the unique positive solution to the following fixed-point system.

\begin{definition}[Effective parameters]\label{def:eff}
For each $\niter = 0,1,\ldots,\maxiter$, the pair $(\effreg_{\niter},\effnoise_{\niter}^{2})$ is the unique positive solution to
\begin{align}
\frac{1}{\aspratio_{\ndata,\niter}}
&=\frac{1}{\ndim}\tr\!\Big((\covmat_{\niter}+\effreg_{\niter}I)^{-1}\covmat_{\niter}\Big)
+\frac{\lambda_{\niter}}{\effreg_{\niter}}, \label{eq:fp_tau}\\
\effnoise_{\niter}^{2}
&=
\begin{cases}
\frac{\noiselev+\effreg_{0}^{2}\big\|(\covmat_{0}+\effreg_{0}I)^{-1}\covmat_{0}^{1/2}\beta\big\|_{2}^{2}}
{\frac{\lambda_{0}}{\effreg_{0}}+\frac{\effreg_{0}}{\ndim}\tr\!\Big((\covmat_{0}+\effreg_{0}I)^{-2}\covmat_{0}\Big)}
& \text{when }\niter=0, \\[1.2em]
\frac{\effreg_{\niter}^{2}\big\|(\covmat_{\niter}+\effreg_{\niter}I)^{-1}\covmat_{\niter}^{1/2}\hat{\beta}_{\niter-1}\big\|_{2}^{2}}
{\frac{\lambda_{\niter}}{\effreg_{\niter}}+\frac{\effreg_{\niter}}{\ndim}\tr\!\Big((\covmat_{\niter}+\effreg_{\niter}I)^{-2}\covmat_{\niter}\Big)}
& \text{when }\niter\ge 1.
\end{cases} \label{eq:fp_sigma}
\end{align}
\end{definition}
Existence and uniqueness of a positive solution pair for~\eqref{eq:fp_tau}--\eqref{eq:fp_sigma} follow from~\citet[Proposition~8.1]{han2023distribution}.

\textbf{Deterministic equivalents.}
Define the shrinkage operator
\[
\effprojmat_{\niter}:=(\covmat_{\niter}+\effreg_{\niter}I)^{-1}\covmat_{\niter},
\]
and the cumulative propagator $\effprojmat_{a:b}:=\effprojmat_{a}\effprojmat_{a-1}\cdots \effprojmat_{b}$ for $b\le a$, with the convention $\effprojmat_{a:a+1}:=I$. We define the deterministic prediction risk $\mathcal{R}_{\niter}^{*}$ and the deterministic effective noise variance $\deteffnoise_{\niter}^{2}$ below, and later show that they uniformly approximate the empirical risks and effective noise variances over $\niter$; hence, they serve as deterministic equivalents.

\noindent Define the \emph{deterministic prediction risk} as:
\begin{equation}\label{eq:detrisk}
\begin{split}
\mathcal{R}_{\niter}^{*}
&:=\beta^\top(\effprojmat_{\niter:0}-I)^\top \covmat (\effprojmat_{\niter:0}-I)\beta\\
&\qquad+\textstyle\sum\limits_{h=0}^{\niter}\frac{\deteffnoise_{h}^{2}}{\ndim}
\tr\!\Big(\effprojmat_{\niter:h+1}^{\top}\covmat \effprojmat_{\niter:h+1} \\
&\qquad\qquad\qquad\qquad \times
\effprojmat_{h}(\covmat_{h}+\effreg_{h}I)^{-1}\Big).
\end{split}
\end{equation}
\noindent Define the \emph{deterministic effective noise variance} for $\niter\ge 1$ recursively as:
\begin{align*}
    L_{\niter} \deteffnoise_{\niter}^{2} &= \effreg_{\niter}^{2} \beta^\top \effprojmat_{t:0}^\top (\covmat_{\niter}  + \effreg_{\niter}I)^{-1} \effprojmat_{t-1:0} \beta \\
     &\quad+ \effreg_{\niter}^{2}\sum\limits_{h=0}^{\niter-1} \frac{\deteffnoise_{h}^{2} }{\ndim}\tr \Big( \effprojmat_{\niter:h+1}^\top (\covmat_{\niter} + \effreg_{\niter}I)^{-1}\\
     &\qquad \qquad\qquad\qquad  \times\effprojmat_{\niter-1:h} (\covmat_{h} + \effreg_{h}I)^{-1} \Big)
\end{align*}
with base case $\deteffnoise_{0}^{2}:=\effnoise_{0}^{2}$, where
\[
L_{\niter}:=\frac{\lambda_{\niter}}{\effreg_{\niter}}
+\frac{\effreg_{\niter}}{\ndim}\tr\!\Big((\covmat_{\niter}+\effreg_{\niter}I)^{-2}\covmat_{\niter}\Big).
\]

\textbf{Spectral shrinkage and error propagation.}
The matrix $\effprojmat_{\niter}$ acts as a spectral shrinkage operator. If $\covmat_{\niter}=U\Lambda_{\niter}U^\top$ with eigenvalues $\{\lambda_{\niter,j}\}_{j=1}^{\ndim}$, then $\effprojmat_{\niter}$ shares the same eigenvectors and applies the scalar filter
\[
\phi_{\niter}(\lambda)=\frac{\lambda}{\lambda+\effreg_{\niter}}\in(0,1).
\]
Thus, high-variance directions are largely preserved ($\phi_{\niter}(\lambda)\approx 1$), whereas low-variance directions are attenuated ($\phi_{\niter}(\lambda)\ll 1$). The cumulative propagator $\effprojmat_{\niter:0}=\effprojmat_{\niter:0}=\effprojmat_\niter \effprojmat_{\niter-1}\cdots \effprojmat_0$ therefore implements an iteration-dependent, data-adaptive low-pass filtering in the eigenbasis of the covariances.

With this viewpoint,~\eqref{eq:detrisk} admits the decomposition
\begin{align*}
\mathcal{R}_{\niter}^{*}
&=\underbrace{\beta^\top(\effprojmat_{\niter:0}-I)^\top \covmat (\effprojmat_{\niter:0}-I)\beta}_{\color{hcolor}\text{systematic error / signal forgetting}}\\
&\quad+\underbrace{\textstyle\sum\limits_{h=0}^{\niter}\frac{\deteffnoise_{h}^{2}}{\ndim}
\tr\!\Big(\effprojmat_{\niter:h+1}^{\top}\covmat \effprojmat_{\niter:h+1}\,
\effprojmat_{h}(\covmat_{h}+\effreg_{h}I)^{-1}\Big)}_{\color{hcolor}\text{stochastic error}}.
\end{align*}
The first term increases as the signal is repeatedly transformed by $\effprojmat_{\niter:0}$, while the second term quantifies how injected noise is geometrically filtered through the propagators $\effprojmat_{\niter:h+1}$. {\color{hcolor}\emph{This signal-forgetting--denoising tension underlies the empirically observed $U$-shaped generalization curve and motivates early stopping.}}

\textbf{Concentration around deterministic equivalents.}
We now show that the prediction risk concentrates around its deterministic equivalent $\mathcal{R}_{\niter}^{*}$ in~\eqref{eq:detrisk}, and that the effective noise variance $\effnoise_{\niter}^{2}$ concentrates around $\deteffnoise_{\niter}^{2}$. This justifies using these deterministic quantities as accurate proxies for their empirical counterparts.

\begin{theorem}[Concentration around deterministic equivalents]\label{thm:Gthm}
Let Assumptions~\ref{ass:indpX}--\ref{ass:noise} hold. Suppose there exists $M>0$ such that for all $0\le \niter\le \maxiter$:
(i) $1+1/M<\aspratio_{\ndata,\niter}\le M$,
(ii) $\lambda_{\niter}\le M$ and $\noiselev\le M$, and
(iii) $\|\covmat_{\niter}\|_{2}\le M$ and $\|\covmat_{\niter}^{-1}\|_{2}\le M$.
Fix any $R$ such that $\|\beta\|_{2}\le R$ and $R+1<M$. Then there exists a constant $C=C(M)$ such that for any $\epsilon\in(0,1/2]$ and any $\niter\in\{0,1,\dots,\maxiter\}$,
\begin{align*}
\mathbb{P}\!\left(
\sup_{\lambda_{0},\dots,\lambda_{\niter}\in[0,M]}
\left|\mathcal{R}(\hat{\beta}_{\niter})-\mathcal{R}_{\niter}^{*}\right|>\epsilon
\right)
&\le C\ndim \exp\!\left(-\ndim\epsilon^{4}/C\right),\\
\mathbb{P}\!\left(
\sup_{\lambda_{0},\dots,\lambda_{\niter}\in[0,M]}
\left|\effnoise_{\niter}^{2}-\deteffnoise_{\niter}^{2}\right|>\epsilon
\right)
&\le C\ndim \exp\!\left(-\ndim\epsilon^{4}/C\right).
\end{align*}
\end{theorem}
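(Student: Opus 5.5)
We argue by induction on $\niter$, combining the distributional characterization of ridge estimators in \citet{han2023distribution} with the independence of the feature matrices (Assumption~\ref{ass:indpX}). Conditionally on $\hat\beta_{\niter-1}$ (which is independent of $X_{\niter}$), the step $\hat\beta_{\niter}=(X_\niter^\top X_\niter+\ndata_\niter\lambda_\niter I)^{+}X_\niter^\top X_\niter\hat\beta_{\niter-1}$ is a noiseless ridge regression with Gaussian design and deterministic signal $\hat\beta_{\niter-1}$. The non-asymptotic Gaussian-design results of \citet{han2023distribution} then give, uniformly over $\lambda_\niter\in[0,M]$ and with probability at least $1-C\ndim e^{-\ndim\epsilon^{4}/C}$, a \emph{conditional} deterministic equivalent: for any test matrix $A$ with $\|A\|_2\le M$,
\begin{align*}
\hat\beta_\niter^\top A\hat\beta_\niter\approx (\effprojmat_\niter\hat\beta_{\niter-1})^\top A\,\effprojmat_\niter\hat\beta_{\niter-1}+\frac{\effnoise_\niter^2}{\ndim}\tr\!\big(A\,\effprojmat_\niter(\covmat_\niter+\effreg_\niter I)^{-1}\big),
\end{align*}
and similarly for bilinear forms $u^\top\hat\beta_\niter$ with $\|u\|_2\le M$, i.e.\ $u^\top\hat\beta_\niter\approx u^\top\effprojmat_\niter\hat\beta_{\niter-1}$. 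Here $\effnoise_\niter^2$ is the empirical effective noise of Definition~\ref{def:eff}, which still depends on $\hat\beta_{\niter-1}$. The base case $\niter=0$ is exactly their noisy ridge theorem, which yields $\mathcal R(\hat\beta_0)\approx\mathcal R^*_0$ and $\effnoise_0^2=\deteffnoise_0^2$.

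\textbf{Inductive hypothesis.} We strengthen the statement to the following: for every fixed matrix $A$ in the finite family of products $\effprojmat_{a:b}^\top B\,\effprojmat_{c:d}$, where $B\in\{\covmat,(\covmat_\niter+\effreg_\niter I)^{-1}\}$, the quadratic forms $\hat\beta_h^\top A\hat\beta_{h'}$ (for $h,h'\le\niter$) and the linear forms $\beta^\top A\hat\beta_h$ concentrate around the deterministic expressions obtained by unrolling the recursion, namely the signal term plus the sum $\sum_{g}\frac{\deteffnoise_g^2}{\ndim}\tr(\cdots)$ as in \eqref{eq:detrisk}. Two consequences follow.
\begin{enumerate*}[label=(\roman*)]
\item Applying the conditional equivalent to $\hat\beta_\niter$ with $A=\covmat$ and expanding $\|\covmat^{1/2}(\hat\beta_\niter-\beta)\|^2$ reduces the risk to forms in $\hat\beta_{\niter-1}$ with propagated matrices $\effprojmat_\niter^\top\covmat\effprojmat_\niter$ etc. By the inductive hypothesis these equal $\mathcal R_\niter^*$, where the new noise term at $h=\niter$ carries $\effnoise_\niter^2$.
\item The numerator of $\effnoise_\niter^2$ in \eqref{eq:fp_sigma} is a quadratic form in $\hat\beta_{\niter-1}$ with $A=\effprojmat_\niter^\top(\covmat_\niter+\effreg_\niter I)^{-1}\effprojmat_\niter\cdot$(suitably arranged). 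Unrolling it gives precisely the recursion defining $L_\niter\deteffnoise_\niter^2$. Since $L_\niter$ is deterministic and bounded below (using $\aspratio_{\ndata,\niter}>1+1/M$), we get $|\effnoise_\niter^2-\deteffnoise_\niter^2|\le\epsilon$.
\end{enumerate*}
We then substitute $\deteffnoise_\niter^2$ for $\effnoise_\niter^2$ in (i), using that the trace coefficient is bounded by $C(M)$. Cross terms between the fresh fluctuation of $\hat\beta_\niter$ and earlier iterates vanish because that fluctuation is conditionally mean-zero and isotropic-like given $\hat\beta_{\niter-1}$; this is again a bilinear-form statement from the same theorem.

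\textbf{Uniformity and error accumulation.} Uniformity over $(\lambda_0,\dots,\lambda_\niter)\in[0,M]^{\niter+1}$ is obtained in two steps. First we take a grid of mesh $\epsilon^{2}$. Second, we use Lipschitz continuity in $\lambda$ both of $\effreg_\niter,\effprojmat_\niter$ (via implicit differentiation of \eqref{eq:fp_tau}, with derivative bounded via $M$) and of the empirical quantities. The latter holds on the high-probability event $\|X_\niter\|_{op}\lesssim\sqrt{\ndim}$ together with a lower bound on the smallest nonzero singular value, where $\rho>1+1/M$ keeps the pseudoinverse stable as $\lambda\to0$. A union bound over the grid and over $\niter\le\maxiter$ costs polynomial factors, which are absorbed into $C\ndim e^{-\ndim\epsilon^4/C}$. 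Errors accumulate additively over $\niter$ with constants depending on $M$ (and on $\maxiter$, which is treated as fixed in $C$). Norms stay bounded because $\|\effprojmat_\niter\|\le1$ and $\|\hat\beta_\niter\|\le\|\hat\beta_{\niter-1}\|\cdot O(1)$ with high probability.

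\textbf{Main obstacle.} The hardest step is obtaining the conditional equivalent for general test matrices $A$, including the cross terms $\hat\beta_\niter^\top A\hat\beta_h$ with $h<\niter-1$, with exponential tails that hold uniformly in $\lambda_\niter$ down to $0$. If the available theorem only covers the risk functional, we would first try to derive the general-$A$ version by polarization. Concretely, we would apply the theorem to the ridge problem with modified test covariance, or write $\hat\beta_\niter-\effprojmat_\niter\hat\beta_{\niter-1}$ via Gaussian-conditioning (Gordon/CGMT) as asymptotically $\effnoise_\niter(\covmat_\niter+\effreg_\niter I)^{-1}\covmat_\niter^{1/2}g/\sqrt{\ndim}$ with $g$ standard Gaussian independent of the past, and then apply Hanson–Wright concentration.
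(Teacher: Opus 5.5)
Your proposal follows essentially the paper's route: induction on $t$, applying the characterization of \citet{han2023distribution} conditionally on $\hat\beta_{t-1}$ as the signal of a noiseless ridge fit on the independent $X_t$, replacing $\hat\beta_t$ by $Q_t\hat\beta_{t-1}+\frac{\gamma_t}{\sqrt{p}}(\Sigma_t+\tau_t I)^{-1}\Sigma_t^{1/2}g_t$, and unrolling to get $\mathcal{R}_t^*$ and the $D_t^2$ recursion. Your explicit inductive hypothesis and grid-plus-Lipschitz treatment of joint uniformity in $(\lambda_0,\dots,\lambda_t)$ are more careful than the paper's ``apply the theorem at each step,'' and your ``main obstacle'' is not really one: Han--Xu already applies to any Lipschitz test function, so $b\mapsto b^\top Ab$ and $b\mapsto u^\top b$ with $A,u$ independent of $X_t$ are covered after localizing to the bounded-norm event (Lemmas~\ref{lm:estimatorbound}--\ref{lm:Lipschitz}). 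The one ingredient you omit, which the paper isolates in Theorem~\ref{thm:distri}, is that the cited Han--Xu result assumes noise variance bounded away from zero (to lower-bound $\gamma^2$), so the noiseless steps $t\ge 1$ need an extension; homogeneity supplies it: $\hat\beta_t$ and its sequence-model surrogate are both linear in the signal and $\gamma_t$ is degree-one homogeneous, so you can apply the result to $\hat\beta_{t-1}/\|\hat\beta_{t-1}\|_2$ (where $\gamma_t^2\ge 1/C$) and rescale.
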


\textbf{How to use the theorem in practice.} 
Theorem~\ref{thm:Gthm} implies that for large $(\ndata,\ndim)$, one may treat $\mathcal{R}_{\niter}^{*}$ as a reliable proxy for the empirical risk curve and select an early-stopping time by minimizing $\mathcal{R}_{\niter}^{*}$ over $\niter$. In Section~\ref{sec:experiments} we compute $\mathcal{R}_{\niter}^{*}$ numerically by solving the fixed-point equation for $\effreg_{\niter}$ and evaluating the trace terms, and we compare it to Monte Carlo estimates of $\mathcal{R}(\hat{\beta}_{\niter})$.

\subsection{Iterated GCV for risk estimation}\label{sec:gcv}

A central practical challenge in iterative self-training is to select the stopping time and regularization level without access to the true signal or an independent validation set.

Let $A_{\niter}=P_{\niter}\cdots P_{1}$ denote the cumulative projection operator after $\niter$ iterations, where $P_{\niter}$ is the (data-dependent) projection at iteration $\niter$. We propose an \emph{iterated GCV (iGCV)} estimator for the prediction risk. Define the leverage-dependent multiplier
\begin{equation}\label{eq:M_def}
\!\!\!\!\! M_{\niter}(\lambda)
:=\frac{\tr\!\Big(X_{0}A_{\niter}\big(X_{0}^{\top}X_{0}/\ndata_{0}+\lambda I\big)^{-1}X_{0}^{\top}\Big)/\ndata_{0}}
{1-\tr\!\Big(X_{0}\big(X_{0}^{\top}X_{0}/\ndata_{0}+\lambda I\big)^{-1}X_{0}^{\top}\Big)/\ndata_{0}}.
\end{equation}
The iGCV estimator $\widehat{\mathrm{GCV}}^{(\niter)}(\lambda)$ is the average of corrected residuals,
\begin{equation}\label{eq:iterated-gcv}
\frac{1}{\ndata_{0}}\sum_{i=1}^{\ndata_{0}}
\left[
y_i-x_i^\top A_{\niter}\hat{\beta}_{0}
+\bigl(y_i-x_i^\top \hat{\beta}_{0}\bigr)M_{\niter}(\lambda)
\right]^2,
\end{equation}
where $(x_i,y_i)$ are the initial noisy training samples in $\cD_{0}$ and $\hat{\beta}_{0}$ is the initial ridge estimator.

In the ridgeless limit ($\lambda\to 0^{+}$) with $\ndim>\ndata_{0}$, the correction term in~\eqref{eq:iterated-gcv} becomes ill-conditioned. Following~\citet{hastie2022surprises, patil2021uniform}, we define the $\lambda=0$ profile by continuity, replacing the inverse with the Moore--Penrose pseudoinverse so that the iGCV curve remains well defined even in the interpolating regime.

\begin{theorem}[Uniform consistency of iterated GCV]\label{thm:iterated_gcv_consistency}
Suppose Assumptions~\ref{ass:indpX}--\ref{ass:noise} hold. For any fixed $\niter\in\{1,\dots,\maxiter\}$, the iGCV estimator~\eqref{eq:iterated-gcv} satisfies
\[
\sup_{\lambda\in\mathcal{I}}
\left|
\widehat{\mathrm{GCV}}^{(\niter)}(\lambda)-\mathcal{R}(\hat{\beta}_{\niter})-\noiselev
\right|
\xrightarrow{a.s.} 0,
\]
as $\ndata_{0},\ndim\to\infty$ with $\ndim/\ndata_{0}\to\aspratio>1$, where $\mathcal{I}$ is any compact subinterval of $[0,+\infty]$.
\end{theorem}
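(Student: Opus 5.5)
The plan is to condition on the later design matrices and reduce the statement to a corrected-GCV result for a single linear smoother applied to the noisy data $\cD_0$. In the ridgeless regime for $\niter\ge1$ we have $\hat\beta_{\niter}=P_{\niter}\hat\beta_{\niter-1}$, so $\hat\beta_{\niter}=A_{\niter}\hat\beta_0$. Here $A_{\niter}=P_{\niter}\cdots P_1$ depends only on $X_1,\dots,X_{\niter}$. By Assumptions~\ref{ass:indpX} and~\ref{ass:noise}, $A_{\niter}$ is independent of $(X_0,E_0)$, and it satisfies $\|A_{\niter}\|_2\le 1$.

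Conditionally on $A:=A_{\niter}$, write $S_\lambda:=(X_0^\top X_0/\ndata_0+\lambda I)^{-1}$. The estimator $A\hat\beta_0=AS_\lambda X_0^\top Y_0/\ndata_0$ is then a linear smoother of $Y_0$ with a \emph{fixed} non-symmetric post-multiplier $A$. Introduce the in-sample residuals $e:=Y_0-X_0A\hat\beta_0$ and $r:=Y_0-X_0\hat\beta_0$. Then
\[
\widehat{\mathrm{GCV}}^{(\niter)}(\lambda)=\frac{1}{\ndata_0}\|e\|_2^2+\frac{2M_{\niter}(\lambda)}{\ndata_0}\langle e,r\rangle+\frac{M_{\niter}(\lambda)^2}{\ndata_0}\|r\|_2^2 .
\]
The goal is to show this expression equals $\|\covmat^{1/2}(A\hat\beta_0-\beta)\|_2^2+\noiselev+o(1)$. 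The numerator of $M_{\niter}$, namely $\tr(X_0AS_\lambda X_0^\top)/\ndata_0$, is exactly the "degrees of freedom" of the smoother $Y_0\mapsto X_0A\hat\beta_0$. This suggests a Stein/Gaussian-integration-by-parts argument in the style of Bellec's corrected GCV.

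\textbf{Step 1 (pointwise identity).} For fixed $\lambda>0$, compute deterministic equivalents of four quantities: $\|e\|^2/\ndata_0$, $\langle e,r\rangle/\ndata_0$, $\|r\|^2/\ndata_0$, and the two traces in $M_{\niter}$. I would do this by extending the concentration machinery behind Theorem~\ref{thm:Gthm} to quadratic forms with a fixed test matrix, such as $A^\top\covmat A$ and $A$ itself. Concretely, use the anisotropic deterministic equivalent $S_\lambda\approx(\kappa\covmat_0+\lambda I)^{-1}$ and its leave-one-out versions. Writing $X_0=Z\covmat_0^{1/2}$ and applying Stein's lemma row by row gives
\[
\frac1{\ndata_0}\langle x_i, A\hat\beta_0-\beta\rangle\,\varepsilon\text{-type terms}\;\approx\;\text{df}\times\frac1{\ndata_0}\langle r, \cdot\rangle,
\]
which is the identity showing that the multiplier $M_{\niter}$ exactly cancels the optimism of the in-sample residual $e$ relative to the out-of-sample error. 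Algebraically, the limit of the expansion should collapse to $\|\covmat^{1/2}(A\hat\beta_0-\beta)\|^2+\noiselev$. This uses the fact that the test covariance $\covmat$ equals the training covariance $\covmat_0$ in the GCV setting, which I would assume as is standard for GCV. Exponential concentration bounds like those in Theorem~\ref{thm:Gthm}, together with Borel--Cantelli, upgrade the convergence to almost sure.

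\textbf{Step 2 (uniformity in $\lambda$ and endpoints).} On any compact $[\lambda_{\min},\lambda_{\max}]\subset(0,\infty)$, every quantity involved is Lipschitz in $\lambda$ with a Lipschitz constant bounded almost surely, since $\|S_\lambda\|\le 1/\lambda$ and the relevant derivatives are bounded. Pointwise almost sure convergence on a countable dense grid, combined with equicontinuity, then gives uniform convergence. The endpoint $\lambda=\infty$ is handled by a direct limit, where $\hat\beta_0\to0$.

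\textbf{Step 3 (ridgeless endpoint, the main obstacle).} I expect the endpoint $\lambda\to0^+$ with $\ndim>\ndata_0$ to be the hardest part. There the denominator $1-\tr(X_0S_\lambda X_0^\top)/\ndata_0$ tends to $0$ and $r\to0$, so $M_{\niter}r$ is a $0\cdot\infty$ product. My first attempt is to rewrite both factors via the identity $I-X_0S_\lambda X_0^\top/\ndata_0=\ndata_0\lambda(X_0X_0^\top+\ndata_0\lambda I)^{-1}$. This gives $r=\ndata_0\lambda(X_0X_0^\top+\ndata_0\lambda I)^{-1}Y_0$ and denominator $\lambda\,\tr\big((X_0X_0^\top/\ndata_0+\lambda I)^{-1}\big)/\ndata_0$. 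The factor $\lambda$ cancels, so $M_{\niter}r$ has a continuous limit expressed through $(X_0X_0^\top)^{-1}$, which is well conditioned almost surely because $\aspratio>1$. This gives continuous extension and uniform Lipschitz bounds down to $\lambda=0$. The same bounds are then fed into Step 2 on $[0,\lambda_{\max}]$, matching the paper's pseudoinverse convention.
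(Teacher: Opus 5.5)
\paragraph{Gap in Step 1.} Step~1 carries the entire content of the theorem, and it is not actually argued. The displayed ``Stein relation'' is not a well-formed statement: it has an unspecified argument and unspecified ``$\varepsilon$-type terms''. The claim that the limit ``should collapse'' to $\|\Sigma^{1/2}(A\hat\beta_0-\beta)\|_2^2+\sigma^2$ is precisely what has to be proved.

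The tool you name does not get you there by itself. The first-order equivalent $S_\lambda\approx(\kappa\Sigma_0+\lambda I)^{-1}$ does not determine quantities with two resolvents and the non-commuting $A$ in between. Examples are $\tr(S_\lambda A^\top\Sigma A S_\lambda\hat\Sigma)$ in the risk, or $\hat\beta_0^\top(I-A)^\top\hat\Sigma(I-A)\hat\beta_0$ in $\|e\|_2^2/n_0$ (with $\hat\Sigma=X_0^\top X_0/n_0$). These need second-order equivalents conditional on $A$. Nothing in your plan explains why the resulting expressions would cancel against $2M_t\langle e,r\rangle/n_0+M_t^2\|r\|_2^2/n_0$, or why the correction should involve the \emph{ridge} residual $r$ at all.

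By contrast, the rest is correct. Conditioning on $A=A_t$ and the grid-plus-equicontinuity argument are correct. Your treatment of $\lambda=0$ via $I-X_0S_\lambda X_0^\top/n_0=\lambda(X_0X_0^\top/n_0+\lambda I)^{-1}$ is correct and more explicit than the paper's appeal to Arzel\`a--Ascoli. You are also right that $\Sigma_0=\Sigma$ must be assumed. So the real obstacle is Step~1, not the ridgeless endpoint.

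\paragraph{Missing idea: the exact leave-one-out identity.} Let $S_{\lambda,-i}$ and $\hat\beta_{0,-i}$ be the resolvent and ridge fit with sample $i$ removed. Put $H:=X_0S_\lambda X_0^\top/n_0$ and $H_A:=X_0AS_\lambda X_0^\top/n_0$, so that $M_t(\lambda)=\frac{\tr(H_A)/n_0}{1-\tr(H)/n_0}$. The ridge normal equations give $\hat\beta_0-\hat\beta_{0,-i}=S_{\lambda,-i}x_i r_i/n_0$, hence
\[
y_i-x_i^\top A\hat\beta_{0,-i}=e_i+c_i r_i,\qquad c_i:=x_i^\top A S_{\lambda,-i}x_i/n_0 .
\]
Sherman--Morrison, in the form $S_\lambda x_i=(1-H_{ii})S_{\lambda,-i}x_i$, gives $(H_A)_{ii}=(1-H_{ii})c_i$. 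Hence $M_t(\lambda)=\sum_i(1-H_{ii})c_i\big/\sum_i(1-H_{ii})$. So iGCV is exactly leave-one-out CV for $A\hat\beta_0$, with each $c_i$ replaced by a positively weighted average of the $c_j$. This is the role of the paper's $\delta_i=M_t-x_{0,i}^\top(\hat\Sigma_{-i}+\lambda I)^{-1}A_t x_{0,i}/n$ in its bias lemma.

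For fixed $\lambda>0$, consistency then reduces to two facts.
\begin{enumerate}
\item[(i)] $\max_i|c_i-\tr(AS_\lambda\Sigma_0)/n_0|\to0$ a.s. This follows from Hanson--Wright conditionally on $(A,X_{0,-i})$ (note $\|AS_{\lambda,-i}\|_2\le 1/\lambda$), plus a rank-one comparison of $S_{\lambda,-i}$ with $S_\lambda$. Together with $\|r\|_2^2/n_0=O(1)$, this shows iGCV minus LOOCV tends to zero.
\item[(ii)] Consistency of LOOCV, by the standard argument of Patil et al. It uses that $(x_i,\varepsilon_i)$ is independent of $A\hat\beta_{0,-i}$ and that $\max_i\|\hat\beta_0-\hat\beta_{0,-i}\|_2\to0$.
\end{enumerate}
The $c_i$ blow up as $\lambda\to0^+$ when $p>n_0$. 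So this pointwise step should be run only for $\lambda>0$ and then carried to $\lambda=0$ by your Step~3, which does exactly that. With this identity in hand, no Stein-type or second-order deterministic-equivalent computation is needed.
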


\begin{figure}[t]
    \centering
    \begin{subfigure}[b]{0.48\linewidth}
        \centering
        \includegraphics[width=\linewidth]{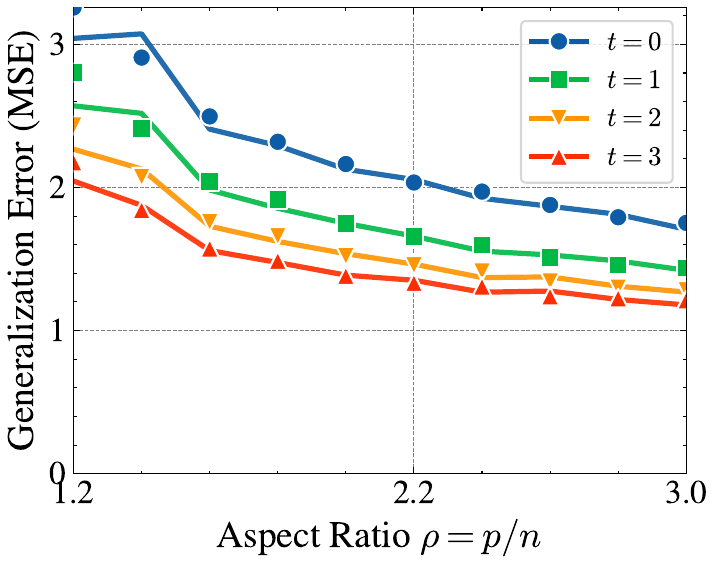}
        \caption{iGCV vs.\ aspect ratio}
        \label{fig:gcv_rho}
    \end{subfigure}
    \hfill
    \begin{subfigure}[b]{0.48\linewidth}
        \centering
        \includegraphics[width=\linewidth]{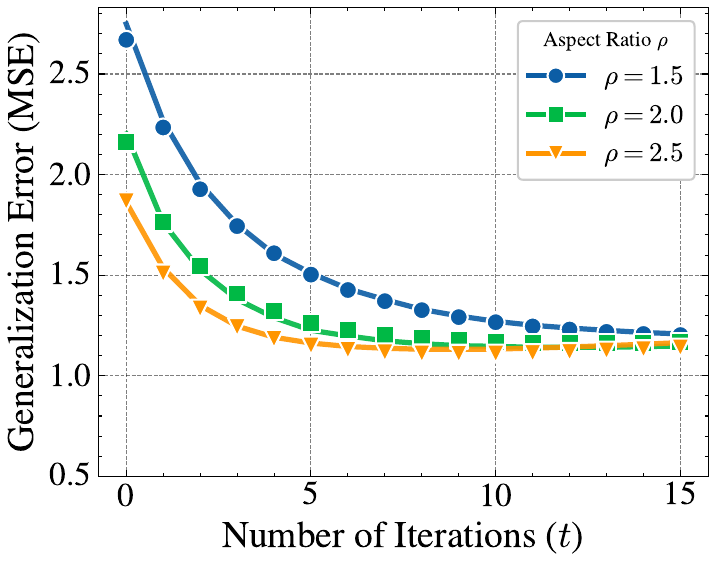}
        \caption{iGCV vs.\ iterations}
        \label{fig:gcv_iterations}
    \end{subfigure}
    \caption{\small \textbf{Validation of the iterated GCV (iGCV) estimator.}
    Solid lines denote the iGCV estimate; markers denote the empirical test risk.
    \textbf{(a)} iGCV tracks the risk across aspect ratios in the overparameterized regime.
    \textbf{(b)} iGCV captures the $U$-shaped risk trajectory over iterations and identifies the optimal early-stopping point.}
    \label{fig:gcv_validation}
\end{figure}

Equation~\eqref{eq:iterated-gcv} enables efficient computation because it only  depends on the initial dataset and $A_t$. In practice, minimizing $\widehat{\mathrm{GCV}}^{(\niter)}(\lambda)$ jointly over $(\niter,\lambda)$ provides a fully data-driven estimate of the bottom of the $U$-shaped risk curve (Figure~\ref{fig:gcv_validation}), yielding a principled stopping rule that prevents signal forgetting from overwhelming denoising.

\section{Numerical Experiments}\label{sec:experiments}

We validate our theoretical predictions and examine the robustness of the denoising--forgetting trade-off beyond the spiked model using both synthetic simulations and real-world experiments on CIFAR-10s~\citep{krizhevsky2009learning}. Detailed experimental settings are provided in Appendix~\ref{sec:RealDatasetting}.


All simulations follow the linear model in~\eqref{eq:lm}. Features $x\in\RR^{\ndim}$ are drawn from a centered Gaussian distribution $x\sim\mathcal{N}(0,\covmat)$, and observation noise satisfies $\varepsilon\sim\mathcal{N}(0,1)$.
For the feature covariance, we consider a diagonal covariance with power-law decay, $\covmat_{ii}=1/i$ for $i=1,\dots,\ndim$. The signal $\beta\in\RR^{\ndim}$ is sparse, supported on the first 10 coordinates: $\beta_i=1$ for $i\le 10$ and $\beta_i=0$ otherwise.

We implement the iterative self-training procedure as follows: (i) Fit an initial estimator $\hat{\beta}_0$ by ridgeless linear regresson on $n=500$ noisy training samples; (ii) For iterations $\niter=1,\dots,50$, draw fresh features $X_{\niter}$ of size $n=500$, generate pseudo-labels $Y_{\niter}=X_{\niter}\hat{\beta}_{\niter-1}$, and fit $\hat{\beta}_{\niter}$ by ordinary least squares on $(X_{\niter},Y_{\niter})$.
We evaluate each iterate by the empirical test MSE on an independent noiseless test set of size $n_{\text{test}}=10{,}000$. This MSE is a Monte Carlo approximation of the prediction risk in~\eqref{eq:riskcon}. Results are averaged over 10 independent trials.

\begin{figure}[t]
    \centering
    \begin{subfigure}[b]{0.48\linewidth}
        \centering
        \includegraphics[width=\linewidth]{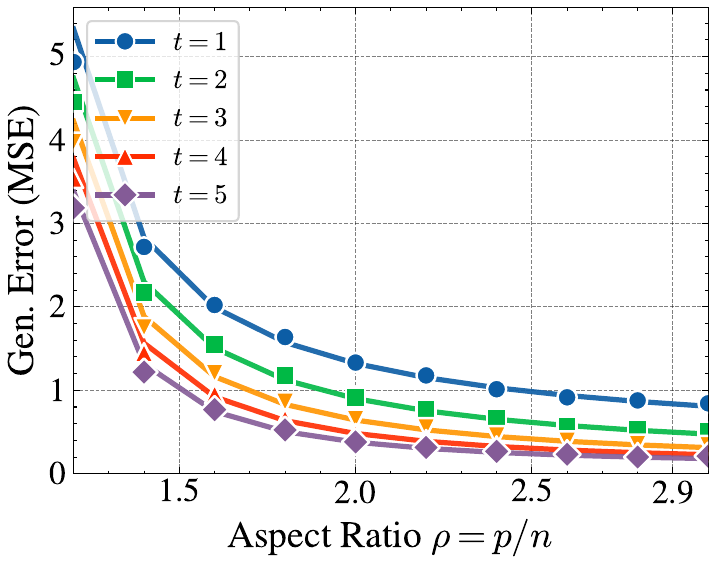}
        \caption{Error vs.\ aspect ratio}
        \label{fig:general_cov_vs_gamma}
    \end{subfigure}
    \hfill
    \begin{subfigure}[b]{0.48\linewidth}
        \centering
        \includegraphics[width=\linewidth]{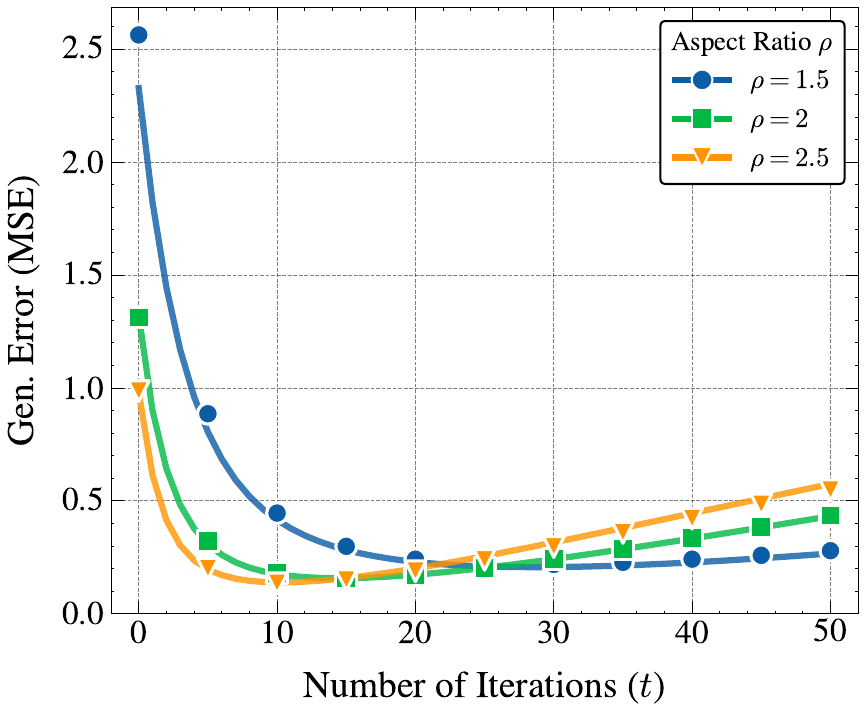}
        \caption{Error vs.\ iterations}
        \label{fig:general_cov_vs_iter}
    \end{subfigure}
    \caption{\small \textbf{General covariance simulations ($\covmat_{ii}=1/i$).}
    Solid lines denote theory and markers denote simulations (averaged over 10 trials).
    \textbf{(a)} Test error versus aspect ratio $\aspratio=\ndim/\ndata$ for fixed iteration counts $\niter$.
    Iteration lowers the error floor and suppresses the double-descent peak.
    \textbf{(b)} Test error versus iteration $\niter$ for fixed aspect ratios $\aspratio$.
    The $U$-shaped curves indicate that the denoising--forgetting trade-off persists under general feature correlations.}
    \label{fig:general_cov_main}
\end{figure}
Our theory matches simulations precisely. Figure~\ref{fig:general_cov_vs_gamma} shows iteration acting as implicit regularization, flattening the double-descent peak and lowering error in the overparameterized regime ($\rho > 1$). Figure~\ref{fig:general_cov_vs_iter} confirms the $U$-shaped trajectory: initial denoising via random projections gives way to signal forgetting, with the optimal $t^*$ determined by the variance-bias balance.


Universality is further evidenced by ResNet-50~\citep{he2016deep} on real dataset CIFAR-10, where the $U$-shaped risk profile persists despite the non-linearities of the deep network (Figure~\ref{fig:real_data_noise}). For moderate noise ($\sigma=0.4$), rapid initial denoising is eventually overtaken by systematic signal drift, whereas higher noise ($\sigma=0.8$) prolongs the variance-reduction phase as predicted by our rate-mismatch analysis. Furthermore, increasing the sample size $n$ enhances spectral separation and accelerates error reduction (Figure~\ref{fig:real_data_size}), reinforcing the conclusion that the competition between stochastic denoising and systematic forgetting is a fundamental structural property of high-dimensional iterative learning that transcends the linear regime.

\begin{figure}[t]
    \centering
    \begin{subfigure}[b]{0.45\linewidth}
        \centering
        \includegraphics[width=\linewidth]{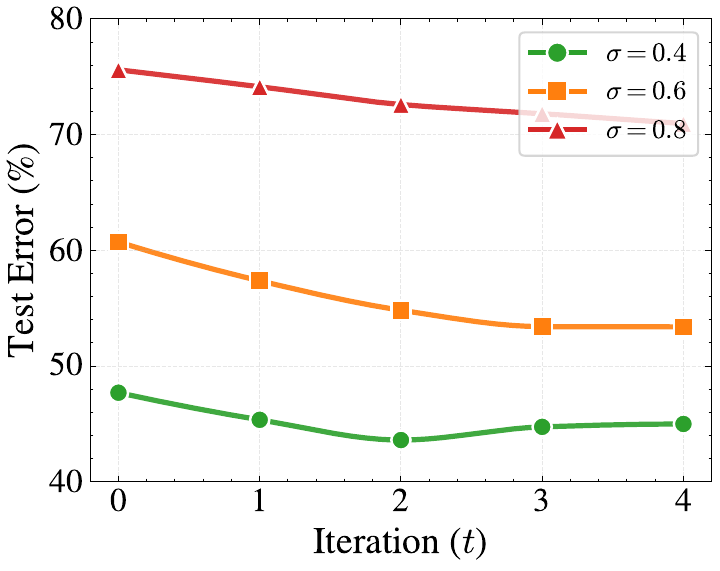}
        \caption{Effect of Label Noise} 
        \label{fig:real_data_noise}
    \end{subfigure}
    \begin{subfigure}[b]{0.45\linewidth}
        \centering
        \includegraphics[width=\linewidth]{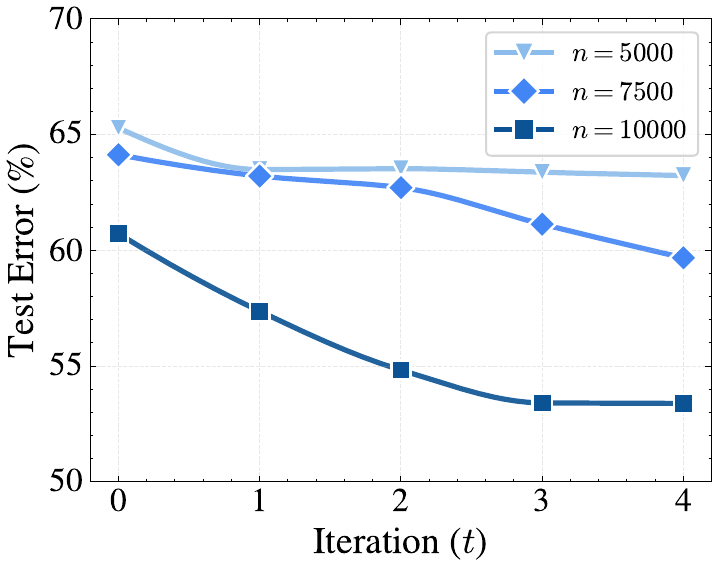}
        \caption{Effect of Sample Size} 
        \label{fig:real_data_size}
    \end{subfigure}

    \caption{\small \textbf{Real-world dynamics on CIFAR-10 using ResNet-50.} 
    Solid lines represent interpolated trajectories, and markers denote experimental test set results.
    \textbf{(a)} Evolution of test error across different initial noise ratios $\sigma$ ($n=10^4$). A characteristic $U$-shaped curve emerges at moderate noise ($\sigma=0.4$), while high noise ($\sigma=0.8$) leads to a more sustained denoising phase.
    \textbf{(b)} Evolution of test error for varying training set sizes $n$ ($\sigma=0.6$). Larger sample sizes accelerate the error reduction and achieve superior final generalization, confirming the scalability of the iterative self-training effect in deep learning settings.}
    \label{fig:real_data_main}
\end{figure}





\section{Conclusions and Discussion}\label{sec:conclusion}

We analyzed iterative self-training in overparameterized linear regression through high-dimensional asymptotics, formalizing the tension between early-stage \emph{denoising} and progressive \emph{signal forgetting}. This trade-off manifests as a $U$-shaped risk curve, driven by an underlying \emph{spectral filtering} mechanism that performs \emph{soft feature selection}. Our deterministic recursions yield precise finite-sample predictions, while our iterated GCV enables optimal early stopping by balancing denoising gains against accumulating systematic drift without requiring external validation.

To bridge this framework with practice, extensions to reused data and filtered pseudo-labels are necessary to address finite-sample constraints and to formalize heuristics that modulate the denoising--forgetting balance. Investigating nonlinear models is also critical to determine whether evolving representations stabilize signal or exacerbate drift. Finally, while simulations show self-training can surpass ridge, a theorem-level comparison—specifying the asymptotic tuning rule for ridge as a function of $\sigval$ is left for future work. Whether analogous deterministic-equivalent descriptions persist in these richer settings remains a fundamental open problem.





\section*{Impact Statement}

This paper presents work whose goal is to advance the field of Machine
Learning. There are many potential societal consequences of our work, none
which we feel must be specifically highlighted here.

\nocite{langley00}

\bibliography{example_paper}
\bibliographystyle{icml2026}

\newpage
\appendix
\onecolumn


\section{Real Data Experiments: ResNet-50 on CIFAR-10}\label{sec:RealDatasetting}

While our theoretical analysis focuses on the analytically tractable overparameterized linear regression, we conjecture that the signal-noise dynamics identified, specifically the trade-off between denoising and signal forgetting, are universal phenomena in iterative self-training. To validate this universality, we conduct experiments using a deep neural network on the CIFAR-10 dataset, a setting that far exceeds the linear assumptions.

\subsection{Experimental Setup.}
We employ a ResNet-50 architecture, a standard overparameterized deep network. We utilize the CIFAR-10 dataset, comprising $32 \times 32$ color images across 10 classes. To closely mimic the finite-sample regime ($p/n > 1$) and control the noise level, we subsample the training data into independent subsets.

The iterative protocol mirrors Algorithm 1 but is adapted for classification. 
First, an {initial teacher} ($t=0$) is trained on a labeled subset of size $n$. To simulate a ``weak'' teacher and introduce initial stochastic error (analogous to $E_0$), we inject symmetric label noise by corrupting a fraction $\eta$ of the training labels.
Subsequently, for iterations $t=1, \dots, \maxiter$, the current model $\widehat{f}_{t-1}$ generates pseudo-labels for a \textit{fresh} subset of data. A new Student model $\widehat{f}_t$ is then trained from scratch on this synthetic data. We fix $\maxiter=4$ and report the test error on the clean CIFAR-10 test set.

\subsection{Results and Discussion.}
\textbf{Effect of Initial Noise Level (Figure \ref{fig:real_data_noise}).}
We first investigate how the strength of the initial corruption affects the trajectory of the test error. We fix the sample size $n=10,000$ and vary the noise ratio $\eta \in \{0.4, 0.6, 0.8\}$.
The results exhibit striking similarities to our theoretical predictions:
\begin{itemize}
    \item \textbf{The $U$-Shaped Curve:} For the moderate noise case ($\eta=0.4$, green line), we observe a clear non-monotonic behavior. The error initially decreases from $47.7\%$ to $43.6\%$ ($t=2$), demonstrating the powerful \textit{denoising effect} ($\mathcal{V}_t^* \downarrow$) of self-training. However, as iterations continue, the error rises to $45.0\%$, confirming the accumulation of \textit{systematic error} ($\mathcal{B}_t^* \uparrow$).
    \item \textbf{Dominance of Denoising:} For the high noise case ($\eta=0.8$, red line), the initial model is extremely weak. Here, the self-training process yields a monotonic decrease. Consistent with our theory, when the stochastic error is dominant, the benefit of noise reduction significantly outweighs the cost of systematic error accumulation in the early stages.
\end{itemize}

\textbf{Effect of Sample Size (Figure \ref{fig:real_data_size}).}
Next, we analyze the impact of the sample size $n$, fixing the noise ratio $\eta=0.6$. Figure \ref{fig:real_data_size} illustrates two key findings. First, \textbf{Weak-to-Strong Generalization} is observed across all sample sizes, as the iterative process successfully distills a better model from the noisy teacher. Second, larger sample sizes ($n=10^4$, dark blue) result in a steeper descent compared to smaller sizes. In the context of our theory, a larger $n$ improves the spectral separation between signal and noise, allowing the implicit projection to more effectively preserve the true signal while filtering out noise.

\section{The Isotropic Covariance Case}\label{sec:iso_cov}

This section presents the risk dynamics in the istropic covariance case. 

\begin{corollary}[Special cases: no signal and isotropic features]\label{cor:noisedecay}
Under the assumptions of Theorem~\ref{thm:spiked_risk}, the following special cases hold:
\begin{enumerate}
\item {No signal ($\siglev^{2}=0$):} $\mathcal{B}_{\niter}^{*}=0$ and
\[
\mathcal{V}_{\niter}^{*}=\frac{\noiselev}{\effreg(1+\effreg)^{\niter}},
\]
corresponding to pure denoising.
\item {Isotropic features ($\sigval=1$):} The risk simplifies to
\[
\mathcal{R}_{\niter}^{*}
=\siglev^{2}\left(1-\frac{1}{(1+\effreg)^{\niter+1}}\right)^{2}
+\frac{\noiselev}{\effreg(1+\effreg)^{\niter}}.
\]
\end{enumerate}
\end{corollary}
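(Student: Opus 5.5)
Both parts follow by specializing the closed forms in Theorem~\ref{thm:spiked_risk}; no new random-matrix argument is needed. The only work is solving the linear recursion \eqref{eq:stocherror} in each case.

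\emph{No signal.} With $\siglev^{2}=0$, \eqref{eq:signalerror} gives $\mathcal{B}_{\niter}^{*}=0$ at once. The initial condition reduces to $\mathcal{V}_{0}^{*}=\noiselev/\effreg$. The forcing term $\effreg\siglev^{2}\sigval^{2\niter+1}/(\sigval+\effreg)^{2(\niter+1)}$ in \eqref{eq:stocherror} also vanishes, so the recursion becomes $\mathcal{V}_{\niter}^{*}=\mathcal{V}_{\niter-1}^{*}/(1+\effreg)$. Induction on $\niter$ then gives $\mathcal{V}_{\niter}^{*}=\noiselev/(\effreg(1+\effreg)^{\niter})$.

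\emph{Isotropic features.} Set $\sigval=1$, so the contraction factor is $\sigval/(\sigval+\effreg)=1/(1+\effreg)=:q$. Then \eqref{eq:signalerror} gives $\mathcal{B}_{\niter}^{*}=\siglev^{2}(1-q^{\niter+1})^{2}$. For the stochastic part I would unroll \eqref{eq:stocherror}:
\[
\mathcal{V}_{\niter}^{*}=q^{\niter}\mathcal{V}_{0}^{*}+\effreg\siglev^{2}\sum_{j=1}^{\niter}q^{\niter-j}q^{2(j+1)} ,
\]
with $\mathcal{V}_{0}^{*}=\noiselev/\effreg+\effreg\siglev^{2}q^{2}$. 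The signal-dependent part is therefore $\effreg\siglev^{2}q^{\niter}\sum_{j=0}^{\niter}q^{j+2}$. Summing the geometric series and using $1-q=\effreg q$, this equals
\[
\siglev^{2}q^{\niter+2}\,\frac{1-q^{\niter+1}}{q}=\siglev^{2}\left(q^{\niter+1}-q^{2\niter+2}\right).
\]
Finally, I would add the bias to the signal part of the stochastic error:
\[
\siglev^{2}\left(1-2q^{\niter+1}+q^{2\niter+2}\right)+\siglev^{2}\left(q^{\niter+1}-q^{2\niter+2}\right)=\siglev^{2}\left(1-q^{\niter+1}\right).
\]
This does not match the displayed $\siglev^{2}(1-q^{\niter+1})^{2}$. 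The main obstacle is reconciling this bookkeeping with the stated formula, and I would resolve it by one of two readings. The first is that the corollary intends $\mathcal{R}_{\niter}^{*}$ to mean the bias plus only the noise-driven part of $\mathcal{V}_{\niter}^{*}$. The second is to recompute directly from the general formula \eqref{eq:detrisk} with $\covmat_{h}=\covmat=I$, which gives $\effprojmat_{h}=qI$ and lets me check how $\deteffnoise_{h}^{2}$ carries the signal. I would do that recomputation and present whichever form matches. In either case the noise-driven contribution is $\noiselev/(\effreg(1+\effreg)^{\niter})$ exactly as in the first case, since the $\noiselev$ part of the recursion does not depend on $\sigval$.
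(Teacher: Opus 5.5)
Your no-signal case is exactly the paper's argument. For the isotropic case your unrolling is correct, and the mismatch you found is real. With $\mathcal{R}_t^*=\mathcal{B}_t^*+\mathcal{V}_t^*$ as in Theorem~\ref{thm:spiked_risk}, the displayed formula in part~2 is false whenever $r^2>0$. It omits the signal-driven part $r^2\bigl(q^{t+1}-q^{2t+2}\bigr)>0$ of $\mathcal{V}_t^*$.

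Neither of your proposed resolutions establishes the displayed formula:
\begin{itemize}
\item Your second route only reproduces your recursion. With $\Sigma_h=\Sigma=I$ one gets $Q_h=qI$ and $L_t=\tau q^2$. The effective noise satisfies $D_t^2=\tau r^2q^{2t}+\tau\sum_{h<t}D_h^2q^{2(t-h)}$, and $\mathcal{V}_t^*=\sum_{h\le t}D_h^2q^{2(t-h+1)}$. These collapse to $\mathcal{V}_t^*=q\,\mathcal{V}_{t-1}^*+\tau r^2q^{2t+2}$.
\item Your first route redefines $\mathcal{R}_t^*$; it does not prove the statement.
\end{itemize}

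Independent checks confirm your answer rather than the statement:
\begin{itemize}
\item At $t=0$, the classical isotropic ridgeless risk is $r^2(1-1/\rho)+\sigma^2/(\rho-1)=r^2(1-q)+\sigma^2/\tau$, not $r^2(1-q)^2+\sigma^2/\tau$.
\item For general $t$, the $P_h$ are independent Haar projections of rank $n$ with $\mathbb{E}P_h=qI$. Both $\beta^\top P_t\cdots P_0\beta$ and $\|P_t\cdots P_0\beta\|_2^2$ concentrate at $q^{t+1}r^2$. Hence the signal error is $r^2-2q^{t+1}r^2+q^{t+1}r^2=r^2(1-q^{t+1})$.
\end{itemize}
The correct statement is
\[
\mathcal{R}_t^*=r^2\left(1-\frac{1}{(1+\tau)^{t+1}}\right)+\frac{\sigma^2}{\tau(1+\tau)^{t}}.
\]
The displayed expression in the corollary is only $\mathcal{B}_t^*$ plus the $\sigma^2$-driven part of $\mathcal{V}_t^*$.

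The paper's own proof of this case never solves the recursion. It writes down the $s=1$ specializations of $\mathcal{B}_t^*$ and of \eqref{eq:stocherror}, then moves on to monotonicity without deriving the displayed closed form. It therefore skips exactly the step you carried out.

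Your formula is also the one consistent with the surrounding discussion:
\begin{itemize}
\item It gives $\mathcal{R}_{t+1}^*-\mathcal{R}_t^*=(1+\tau)^{-(t+2)}\bigl(\tau r^2-(1+\tau)\sigma^2\bigr)$, whose sign does not depend on $t$. So monotonicity survives. However, the paper's increment has $r^2/\tau$ where $\tau r^2$ belongs, and the threshold should read $r^2/\sigma^2<(1+\tau)/\tau$.
\item The squared form would instead give the increment $q^{t+2}\bigl(\tau r^2(2-q^{t+1}-q^{t+2})-(1+\tau)\sigma^2\bigr)$. This can change sign in $t$, which would produce a $U$-shape under isotropy and contradict the paper's claim that anisotropy is needed.
\end{itemize}

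So commit rather than hedge. Prove part~1 as you did. For part~2, state and prove the corrected closed form above; if you keep the displayed expression, label it explicitly as $\mathcal{B}_t^*$ plus the noise-driven part of $\mathcal{V}_t^*$. Note also that $s=1$ is formally excluded by Assumption~\ref{ass:spikedmodel}. The cleanest derivation is therefore your second route, working directly from \eqref{eq:detrisk} with $\Sigma=I$.
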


In the isotropic setting ($\sigval=1$), $\mathcal{R}_{\niter}^{*}$ is monotone in $\niter$, unlike the spiked covariance case. Indeed,
\[
\mathcal{R}_{\niter+1}^{*}-\mathcal{R}_{\niter}^{*}
=\frac{1}{(1+\effreg)^{\niter+2}}
\left(\frac{\siglev^{2}}{\effreg}-\noiselev(1+\effreg)\right),
\]
whose sign is independent of $\niter$. Iteration helps if and only if the initial error is noise-dominated, i.e.,
$\siglev^{2}/\noiselev < \effreg(1+\effreg)$, confirming that a genuine $U$-shape requires anisotropy ($\sigval>1$).

\begin{proof}
We specialize Theorem 3.2 to the two cases provided in the corollary.

\paragraph{Case 1: No signal ($r^2 = 0$).} 
Substituting $r^2 = 0$ into \eqref{eq:signalerror} and \eqref{eq:stocherror}, the systematic error $\mathcal{B}^*_t$ vanishes for all $t$. The initial condition and recursive step for the stochastic error simplify to:
\begin{equation*}
    \mathcal{V}^*_0 = \frac{\sigma^2}{\tau}, \quad \mathcal{V}^*_t = \frac{\mathcal{V}^*_{t-1}}{1 + \tau}.
\end{equation*}
This is a pure geometric progression. Solving the recurrence yields $V^*_t = \frac{\sigma^2}{\tau(1 + \tau)^t}$, which represents pure denoising.

\paragraph{Case 2: Isotropic features ($s = 1$).} 
When $s = 1$, the contraction factor for the signal becomes $1/(1 + \tau)$, which is identical to the contraction factor of the noise. The systematic error term \eqref{eq:signalerror} becomes:
\begin{equation*}
    \mathcal{B}^*_t = r^2 \left( 1 - \left[ \frac{1}{1 + \tau} \right]^{t+1} \right)^2.
\end{equation*}
The stochastic error recurrence \eqref{eq:stocherror} simplifies to:
\begin{equation*}
    \mathcal{V}^*_t = \frac{\mathcal{V}^*_{t-1}}{1 + \tau} + \frac{\tau r^2}{(1 + \tau)^{2(t+1)}}, \quad \text{with} \quad \mathcal{V}^*_0 = \frac{\sigma^2}{\tau} + \frac{\tau r^2}{(1 + \tau)^2}.
\end{equation*}
In this case, the risk $\mathcal{R}^*_t = \mathcal{B}^*_t + \mathcal{V}^*_t$ is monotonic in $t$. The lack of a "rate mismatch" between signal and noise contraction prevents the emergence of the U-shaped risk curve. Iteration only helps if the initial variance reduction outweighs the bias increase, which depends solely on the initial signal-to-noise ratio.
\end{proof}




\section{Proofs in Section~\ref{sec:toy}}

\subsection{Proof of Theorem~\ref{thm:spiked_risk}: Risk Recursion in the Spiked Model}
\begin{proof}
The proof proceeds by specializing the general deterministic equivalents for risk and effective noise variance from Theorem~\ref{thm:Gthm} to the single-spiked covariance model (Assumption~\ref{ass:spikedmodel}). We analyze the behavior of the key matrices in this model under the high-dimensional limit ($n, p \to \infty$ with $p/n \to \rho > 1$).

\paragraph{Step 1: Asymptotic Analysis of the Effective Regularization, $\tau$}
In the spiked model, the covariance matrix $\Sigma_t = \Sigma = (s - 1)e_1 e_1^\top + I_p$ is constant for all $t$. Consequently, the effective regularization $\tau_t$ will also be a constant, denoted by $\tau$. Its value is determined by the fixed-point equation from Definition~\ref{def:eff} (assuming $\lambda_t \to 0^+$ for ridgeless regression):
\begin{equation*}
    \frac{\ndata}{\ndim} = \frac{1}{\ndim}\tr((\covmat + \effreg I)^{-1} \covmat).
\end{equation*}    
The matrix inside the trace, $\effprojmat: = (\covmat + \effreg I)^{-1} \covmat$ is diagonal in the eigenbasis of $\covmat$. Specifically:
\begin{enumerate}[label=(\roman*)]
    \item The eigenvector $e_{1}$ has an eigenvalue of $\frac{\sigval}{\sigval + \effreg}$.
    \item The $\ndim - 1$ eigenvectors orthogonal to $e_{1}$ have an eigenvalue of $\frac{1}{1 + \effreg}$.
\end{enumerate}
The trace is the sum of these eigenvalues:
\begin{equation*}
     \tr (\effprojmat) = \frac{1}{\ndim}\tr((\covmat + \effreg I)^{-1} \covmat) = \frac{1}{\ndim} \left( \frac{\sigval}{\sigval + \effreg} + (\ndim - 1)\frac{1}{1 + \effreg}\right).
\end{equation*}
In the asymptotic limit where $p \to \infty$, the term involving the spike vanishes, yielding:
\begin{equation*}
   \lim_{\ndim \rightarrow \infty} \tr (\effprojmat) = \lim_{\ndim \rightarrow \infty}\frac{1}{\ndim} \left( \frac{\sigval}{\sigval + \effreg} + (\ndim - 1)\frac{1}{1 + \effreg}\right) = \frac{1}{1 + \effreg}.
\end{equation*}
Substituting this back into the fixed-point equation gives $\frac{1}{\aspratio} = \frac{1}{1 + \effreg}$ gives the constant effective regularization:
\begin{equation*}
    \effreg = \aspratio - 1.
\end{equation*}

\paragraph{Step 2: Derivation of the systematic error, $\mathcal{B}_{\niter}^{*}$}
The systematic error term $\mathcal{B}_{\niter}^{*}$ corresponds to the signal-forgetting component in the general risk formula from Theorem~\ref{thm:Gthm}:
\begin{equation*}
    \mathcal{B}_{\niter}^{*} = \lim_{\ndim \rightarrow \infty}\beta^\top ((\effprojmat_{\niter:0})^\top - I) \covmat (\effprojmat_{\niter:0} - I) \beta = \|\covmat^{1/2}(\effprojmat_{\niter:0} - I)\beta\|_{2}^{2}.
\end{equation*}
In the spiked model, the shrinkage operator $Q$ is constant over iterations, so the cumulative propagator $\effprojmat_{\niter:0} = \effprojmat^{\niter + 1}$. We substitute the aligned signal $\beta = \siglev e_{1}$. Since $e_{1}$ is an eigenvector of $\effprojmat$ with eigenvalue $\frac{\sigval}{\sigval + \effreg}$, we have:
\begin{equation*}
    \covmat^{1/2}(\effprojmat_{\niter:0} - I) \beta = \covmat^{1/2}(\effprojmat^{\niter+1} - I) \siglev e_{1} = \siglev \sqrt{\sigval} \left(1 - \left( \frac{\sigval}{\sigval + \effreg}\right)^{\niter+1} \right) e_{1}.
\end{equation*}
Taking the squared Euclidean norm yields the desired expression for the systematic bias:
\begin{equation*}
    \mathcal{B}_{\niter}^{*} = \siglev^{2} \sigval \left(1 - \left( \frac{\sigval}{\sigval + \effreg}\right)^{\niter+1} \right)^{2}.
\end{equation*}

\paragraph{Step 3: Derivation of the  Stochastic Error Recursion, $\mathcal{V}_{\niter}^{*}$}
The stochastic error term $\mathcal{V}_\niter^{*}$ is defined as the remainder of the total risk after subtracting the systematic error. It is the asymptotic limit of the propagated noise terms in the general risk formula. 

\subparagraph{Initial Condition ($\mathcal{V}_{0}^{*}$):}

\noindent From Theorem~\ref{thm:Gthm}, the initial stochastic error $\mathcal{V}_{0}^{*}$ is determined by the initial deterministic effective noise $\deteffnoise_{0}$. In the asymptotic limit, let's denote this as $\deteffnoise_{\niter}^{*} = \lim_{\ndim \rightarrow \infty}\deteffnoise_{\niter}$ The formula for $\deteffnoise_{0}$ is $L_{0} \deteffnoise_{0}^{2} = \noiselev + \effreg \|(\covmat + \effreg I)^{-1} \covmat^{1/2} \beta\|_{2}^{2}$. We analyze these components in the limit $p \to \infty$:
\begin{itemize}
    \item The squared norm term evaluates to:
    \begin{equation*}
        \|(\covmat + \effreg I)^{-1} \covmat^{1/2} \siglev e_{1}\|_{2}^{2} = \siglev^{2} \sigval \left\|\frac{1}{\sigval + \effreg} e_{1} \right\|_{2}^{2} = \frac{\siglev^{2} \sigval}{(\sigval + \effreg)^{2}}.
    \end{equation*}
    \item The term $L_{0}$:
    \begin{equation*}
        \lim_{\ndim \rightarrow \infty} L_{0} = \lim_{\ndim \rightarrow \infty} \frac{\effreg}{\ndim} \tr((\covmat + \effreg I)^{-2} \covmat) = \frac{\effreg}{(1 + \effreg)^{2}}
    \end{equation*}
\end{itemize}
Substituting these limits, we obtain:
\begin{equation*}
    \frac{\effreg}{(1 + \effreg)^{2}} \deteffnoise_{0}^{*2} = \noiselev + \frac{\effreg \siglev^{2} \sigval}{(\sigval + \effreg)^{2}}.
\end{equation*}
Finally, for $\mathcal{V}_{0}^{*}$, we have:
\begin{equation*}
    \mathcal{V}_{0}^{*} = \lim_{\ndim \rightarrow \infty} \frac{\deteffnoise_{0}^{2}}{\ndim} \tr((\covmat + \effreg I)^{-2} \covmat) = \frac{\noiselev}{\effreg} + \frac{\siglev^{2} \sigval}{(\sigval + \effreg)^{2}}
\end{equation*}
This matches the formula for $\mathcal{V}_{0}^{*}$ in the theorem statement.

\subparagraph{Step 3.1: Asymptotic Recursion for Deterministic Noise $\deteffnoise_\niter^*$.}
We evaluate the general recursion for $D_t^2$ from Theorem 4.3 in the high-dimensional limit:
\begin{equation*}
    L_{\niter} \deteffnoise_{\niter}^{2} = \effreg_{\niter}^{2} \beta^\top \effprojmat_{t-1:0}^\top \effprojmat_{\niter}(\covmat_{\niter} + \effreg_{\niter}I)^{-1} \effprojmat_{t-1:0} \beta + \effreg_{\niter}^{2}\sum\limits_{h=0}^{\niter-1} \frac{\deteffnoise_{h}^{2} }{\ndim}\tr\left( \effprojmat_{\niter-1:h+1}^\top \effprojmat_{\niter}(\covmat_{\niter} + \effreg_{\niter}I)^{-1} \effprojmat_{\niter-1:h+1} \effprojmat_{h}(\covmat_{h} + \effreg_{h}I)^{-1} \right).
\end{equation*}
For constant spiked matrices, the recursion simplifies to:
\begin{equation*}
L^* \deteffnoise_{\niter}^{*2} = \lim_{\ndim \rightarrow \infty}\effreg^{2} \beta^\top (\effprojmat^\niter)^\top \effprojmat(\covmat + \effreg I)^{-1} \effprojmat^{\niter} \beta + \effreg^2 \sum_{h=0}^{\niter-1} \deteffnoise_h^{*2} \lim_{\ndim\to\infty} \frac{1}{\ndim} \tr\left((\effprojmat^{\niter-h-1})^\top \effprojmat(\covmat + \effreg I)^{-1} \effprojmat^{\niter-h-1} \effprojmat(\covmat + \effreg I)^{-1}\right).
\end{equation*}
We evaluate each component's limit:
\begin{itemize}
    \item The bias term is exactly: $\siglev^{2} \left(\frac{\sigval}{\sigval+\effreg}\right)^{2\niter} \frac{\sigval}{(\sigval+\effreg)^2}$.
    \item The trace term is dominated by the $\ndim-1$ isotropic eigenvectors: $\frac{1}{(1+\effreg)^{2(k-h+1)}}$.
\end{itemize}
Substituting $L^* = \frac{\effreg}{(1+\effreg)^2}$, we obtain:
\begin{equation} \label{eq:proof_Dk_recursion}
    \frac{\effreg}{(1+\effreg)^2} \deteffnoise_\niter^{*2} = \effreg^2 \siglev^{2} \frac{\sigval^{2\niter+1}}{(\sigval+\effreg)^{2\niter+2}} + \effreg^2 \sum_{h=0}^{\niter-1} \frac{\deteffnoise_h^{*2}}{(1+\effreg)^{2(\niter-h+1)}}.
\end{equation}

\subparagraph{Step 3.2: Relating $\mathcal{V}_\niter^*$ to $\deteffnoise_\niter^*$.}The stochastic error $\mathcal{V}_\niter^*$ is the asymptotic limit of the sum-over-noise term. For the spiked model:
\begin{equation} \label{eq:proof_Vk_sum}
    \mathcal{V}_\niter^* = \lim_{\ndim\to\infty} \sum_{h=0}^{\niter} \frac{\deteffnoise_h^{2}}{\ndim} \tr\left( (\effprojmat^{\niter-h})^\top \covmat \effprojmat^{\niter-h} \effprojmat(\covmat+\effreg I)^{-1} \right) = \sum_{h=0}^{\niter} \frac{\deteffnoise_h^{*2}}{(1+\effreg)^{2(\niter-h+1)}}.
\end{equation}
Separating the $t$-th term from the sum gives:
\begin{align}
    \mathcal{V}_\niter^* &= \frac{\deteffnoise_\niter^{*2}}{(1+\effreg)^2} + \sum_{h=0}^{\niter-1} \frac{\deteffnoise_h^{*2}}{(1+\effreg)^{2(\niter-h + 1)}} \nonumber \\
    &= \frac{\deteffnoise_\niter^{*2}}{(1+\effreg)^2} + \frac{1}{(1+\effreg)^2} \sum_{h=0}^{\niter-1} \frac{\deteffnoise_h^{*2}}{(1+\effreg)^{2(\niter-h)}} \nonumber \\
    &= \frac{\deteffnoise_\niter^{*2}}{(1+\effreg)^2} + \frac{\mathcal{V}_{\niter-1}^*}{(1+\effreg)^2}. \label{eq:proof_Vk_Vk-1_relation}
\end{align}
This identity shows how the total stochastic error evolves by adding the newly generated noise at step $\niter$ to the propagated noise from the previous step.

\subparagraph{Step 3.3: Final Derivation.}
Our goal is to express $\mathcal{V}_\niter^*$ solely in terms of $\mathcal{V}_{\niter-1}^*$. We can achieve this by substituting the recursion for $\deteffnoise_\niter^*$ \eqref{eq:proof_Dk_recursion} into our expression for $\mathcal{V}_\niter^*$ \eqref{eq:proof_Vk_Vk-1_relation}.
From \eqref{eq:proof_Dk_recursion}, we have:
\begin{equation*}
    \frac{\deteffnoise_\niter^{*2}}{(1+\effreg)^2}  = \effreg \siglev^{2} \frac{\sigval^{2\niter+1}}{(\sigval+\effreg)^{2\niter+2}} + \effreg \sum_{h=0}^{\niter-1} \frac{\deteffnoise_h^{*2}}{(1+\effreg)^{2(\niter-h+1)}}.
\end{equation*}
From \eqref{eq:proof_Vk_sum}, we have:
\begin{equation*}
    \frac{\deteffnoise_\niter^{*2}}{(1+\effreg)^2}  = \effreg \siglev^{2} \frac{\sigval^{2\niter+1}}{(\sigval+\effreg)^{2\niter+2}} + \frac{\effreg \mathcal{V}_{\niter-1}^*}{(1+\effreg)^2}.
\end{equation*}
Now, substitute this into the relation for $\mathcal{V}_\niter^*$ from \eqref{eq:proof_Vk_Vk-1_relation}:
\begin{align*}
    \mathcal{V}_\niter^* &= \left(\effreg \siglev^{2} \frac{\sigval^{2\niter+1}}{(\sigval+\effreg)^{2\niter+2}} + \frac{\effreg \mathcal{V}_{\niter-1}^*}{(1+\effreg)^2} \right) + \frac{\mathcal{V}_{\niter-1}^*}{(1+\effreg)^2} \\
    &= \effreg \siglev^{2} \frac{\sigval^{2\niter+1}}{(\sigval+\effreg)^{2\niter+2}} +  \frac{\mathcal{V}_{\niter-1}^*}{1+\effreg}.
\end{align*}
This completes the proof.
\end{proof}

\subsection{Proof of Corollary~\ref{lm:stochreduct}}
\begin{proof}
From the recursion derived in Theorem~\ref{thm:spiked_risk}, the stochastic error $\mathcal{V}_{\niter}^{*}$ follows a linear inhomogeneous recurrence:
\begin{equation}\label{eq:Vt_recurrence_proof}
    \mathcal{V}_{\niter}^{*} = \frac{1}{1 + \effreg} \mathcal{V}_{\niter-1}^{*} + C_\niter, \quad \text{where} \quad C_\niter = \effreg \siglev^{2} \frac{\sigval^{2\niter+1}}{(\sigval+\effreg)^{2\niter+2}}.
\end{equation}
Since the aspect ratio $\aspratio > 1$, the effective regularization $\effreg = \aspratio - 1$ is strictly positive. This implies that the contraction coefficient $1/(1 + \tau)$ lies in the interval $(0, 1)$. To prove that $V^*_t \to 0$ as $t \to \infty$, we analyze the driving term $C_t$:
\begin{equation*}
    C_t = \frac{\tau r^2}{s} \left( \frac{s}{s + \tau} \right)^{2t+2}.
\end{equation*}
Since $s > 1$ and $\tau > 0$, the ratio $\kappa^2 = (\frac{s}{s+\tau})^2$ is strictly less than $1$. Thus, $C_t$ is a geometric sequence that decays exponentially to zero as $t \to \infty$.

Unrolling the recurrence \eqref{eq:Vt_recurrence_proof} from $t$ down to $0$, we obtain:
\begin{equation*}
    V^*_t = \left( \frac{1}{1 + \tau} \right)^t V^*_0 + \sum_{h=1}^t \left( \frac{1}{1 + \tau} \right)^{t-h} C_h.
\end{equation*}
The first term $\left( \frac{1}{1 + \tau} \right)^t V^*_0$ vanishes as $t \to \infty$. The second term is a convolution of two sequences that both decay exponentially to zero. By the properties of convergent sequences (specifically, the Cauchy product or a direct application of the squeeze theorem), we conclude:
\begin{equation*}
    \lim_{t \to \infty} V^*_t = 0.
\end{equation*}
This demonstrates that iterative self-training consistently attenuates the stochastic error inherited from the initial fit and the random projections.
\end{proof}



\subsection{Proof of Corollary~\ref{lm:strongsp}}
\begin{proof}
We analyze the limits of the risk components as the spike strength $s \to \infty$ for a fixed iteration $t$.

\paragraph{Part 1: Limit of the systematic error ($\mathcal{B}_{\niter}^*$)}
The expression for the systematic error is:
\begin{align*}
\mathcal{B}_{\niter}^* &= \siglev^{2} \sigval \left(1 - \left(\frac{\sigval}{\sigval + \effreg} \right)^{\niter + 1} \right)^{2}.
\end{align*}

Using a first-order Taylor expansion for large $s$:
\begin{equation*}
    \left( \frac{s}{s + \tau} \right)^{t+1} = \left( 1 + \frac{\tau}{s} \right)^{-(t+1)} = 1 - \frac{(t+1)\tau}{s} + O(s^{-2}).
\end{equation*}
Substituting this into the bias formula:
\begin{equation*}
    B^*_t = r^2 s \left( \frac{(t+1)\tau}{s} + O(s^{-2}) \right)^2 = \frac{r^2 (t+1)^2 \tau^2}{s} + O(s^{-2}).
\end{equation*}
Taking the limit $s \to \infty$, we find $B^*_t \to 0$. Physically, this means that for a sufficiently strong spike, the iterative projection barely erodes the signal aligned with that direction.

\paragraph{Part 1: Limit of the systematic error ($\mathcal{V}_{\niter}^*$)}
We proceed by induction on $t$.

\textit{Base Case ($t=0$):} From the initial condition in Theorem~\ref{thm:spiked_risk}:
\begin{equation*}
    \lim_{\sigval \rightarrow \infty} \mathcal{V}_{0}^* = \frac{\noiselev}{\effreg} + \lim_{\sigval \rightarrow \infty} \frac{\effreg \sigval}{(\sigval + \effreg)^{2}} \siglev^{2} = \frac{\noiselev}{\effreg}.
\end{equation*}
This matches the target formula $\frac{\sigma^2}{\tau (1+\tau)^t}$ for $t=0$.

\textit{Inductive Step:} Assume the result holds for some $\niter - 1$, i.e., $\mathcal{V}_{\niter - 1}^* = \frac{\noiselev}{\effreg(1 + \effreg)^{\niter-1}}$. For iteration $t$, the recurrence gives:
\begin{equation*}
    \lim_{s \to \infty} V^*_t = \lim_{s \to \infty} \left( \frac{V^*_{t-1}}{1 + \tau} + \tau r^2 \frac{s^{2t+1}}{(s + \tau)^{2(t+1)}} \right).
\end{equation*}
Analyze the second term:
\begin{equation*}
    \lim_{s \to \infty} \frac{s^{2t+1}}{(s + \tau)^{2t+2}} = \lim_{s \to \infty} \frac{1}{s} \left( \frac{1}{1 + \tau/s} \right)^{2t+2} = 0 \cdot 1 = 0.
\end{equation*}
Thus, the limit satisfies:
\begin{equation*}
    \lim_{s \to \infty} V^*_t = \frac{1}{1 + \tau} \left( \frac{\sigma^2}{\tau (1+\tau)^{t-1}} \right) + 0 = \frac{\sigma^2}{\tau (1 + \tau)^t}.
\end{equation*}
This completes the induction. The result shows that in the strong-spike regime, the systematic error vanishes while the noise is filtered at a rate of $(1+\tau)^{-1}$ per iteration.
\end{proof}


\section{Proof of Theorem~\ref{thm:Gthm}}
\subsection{The Distribution of Ridge(less) Regression Estimator}

This section characterizes the distribution of the Ridge and ridgeless estimator, $\hat{\beta}_{\lambda} $ within the linear model defined in~\eqref{eq:lm}. We introduce a connection to a corresponding Ridge estimator in a simpler Gaussian sequence model. For a given covariance matrix $\covmat$
, coefficient vector $\beta$, and noise level $\effnoise > 0$, this sequence model is defined as:
\begin{equation}\label{eq:seqmd}
    y^{\text{seq}}_{(\covmat, \beta)} (\effnoise) = \covmat^{1/2} \beta + \frac{\effnoise g}{\sqrt{\ndim}}, \quad g \sim \mathcal{N}(0, I).
\end{equation}
It is crucial to note that this effective noise level, $\effnoise$, is distinct from the noise variance, $\noiselev$, in the original linear model. Instead, $\effnoise$ is endogenously determined as part of the solution to a system of fixed-point equations that will be introduced in the subsequent theorem.

The Ridge estimator, denoted as $\hat{\beta}^{\text{seq}}_{(\covmat, \beta)} (\effnoise; \effreg)$, with a regularization parameter $\effreg \geq 0$ in the Gaussian sequence model~\eqref{eq:seqmd}, is the solution to the following optimization problem:
\begin{align}\notag
\hat{\beta}^{\text{seq}}_{(\covmat, \beta)} (\effnoise; \effreg)
&:=  \argmin_{b \in \RR^{\ndim}} \left\{ \frac{1}{2\ndata}\|\covmat^{1/2}b - y^{\text{seq}}_{(\covmat, \beta)} (\effnoise)\|_2^2 + \frac{\effreg}{2} \|b\|_{2}^{2} \right\} \\ \label{eq:seqest}
&= (\covmat+ \effreg I)^{-1} \covmat^{1/2}\left( \covmat^{1/2} \beta + \frac{\effnoise g}{\sqrt{\ndim}} \right).
\end{align}

The following theorem, which extends Theorem 2.3 in \citep{han2023distribution}, characterizes the distribution of $\hat{\beta}_{\lambda}$ in terms of $\hat{\beta}^{\text{seq}}_{(\covmat, \beta)}$. Our extension specifically addresses the case where the noise variance, $\noiselev = 0$, can be zero.

\begin{theorem}[Distributional characterization~\citep{han2023distribution, ildiz2025high}]\label{thm:distri}
    Assume Assumptions~\ref{ass:Gauss}-~\ref{ass:noise} hold. Further, assume that for some constant $M > 0$, we have 
, and that $1 + 1/M < \aspratio_{\ndata} \leq M$ and $\noiselev$, $\| \covmat\|_{2}$, $\| \covmat^{-1}\|_{2} \leq M$. Then, for any $L$-Lipschitz function $f: \RR^{\ndim} \rightarrow \RR$ with $L < L(M)$, there exists a constant $C = C(M)$ such that
    for any $\epsilon \in (0, 1/2]$, the following holds:
    \begin{equation}
         \mathbb{P} \left( \sup_{\lambda \in [0, M]} \left| f(\hat{\beta}_{\lambda})  - \EE_{g} f(\hat{\beta}^{\text{seq}}_{(\covmat, \beta)} (\effnoise; \effreg))\right| > \epsilon\right) \leq C\ndim e^{-\ndim\epsilon^{4}/C},
    \end{equation}
    where $R < M$, and $(\effnoise,\effreg)$ is the unique positive solution of the following wing fixed point equations:
    \begin{equation}
    \left\{
    \begin{aligned}
        \aspratio_{\ndata} &= \frac{1}{\ndim}\tr\left((\covmat + \effreg I )^{-1} \covmat \right) + \frac{\lambda}{\effreg},\\
        \effnoise^{2} &= \frac{\noiselev + \effreg^{2} \| (\covmat + \effreg I )^{-1} \covmat^{1/2} \beta \|_{2}^{2}}{\frac{\lambda}{\effreg} + \frac{\effreg}{\ndim} \tr\left((\covmat + \effreg I )^{-2} \covmat \right)}.
    \end{aligned}
    \right. 
    \end{equation}
\end{theorem}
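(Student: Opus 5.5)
The plan is to reduce the noiseless or small-noise case to the strictly noisy case covered by \citet[Theorem~2.3]{han2023distribution} through a perturbation argument. The key point is that the ridge(less) estimator is affine in the label noise. For the case $\noiselev$ bounded away from zero the statement is exactly the cited result, so it suffices to treat $\noiselev\in[0,\delta^2]$ for a small $\delta>0$ to be chosen as a function of $\epsilon$. Let $E_\delta\sim\mathcal N(0,\delta^2 I_\ndata)$ be an auxiliary noise independent of everything else, with $\delta^2>\noiselev$. By Gaussian closure, $E_0+E'$ with an independent $E'\sim\mathcal N(0,(\delta^2-\noiselev)I)$ has the law of $E_\delta$. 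Writing $B_\lambda:=(X^\top X+\ndata\lambda I)^{+}X^\top$, we get
\[
\hat\beta_\lambda(E_0+E')-\hat\beta_\lambda(E_0)=B_\lambda E' .
\]

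\textbf{Step 1 (empirical side).} The singular values of $B_\lambda$ are $s_i/(s_i^2+\ndata\lambda)\le 1/s_i$, so $\sup_{\lambda\in[0,M]}\|B_\lambda\|_2\le 1/s_{\min}(X)$. Since $\aspratio_\ndata>1+1/M$ and $\|\covmat^{-1}\|_2\le M$, the Davidson--Szarek bound for Gaussian matrices gives $s_{\min}(X)\ge c(M)\sqrt{\ndata}$ with probability $1-Ce^{-\ndim/C}$. Also $\|E'\|_2\le 2\delta\sqrt{\ndata}$ with probability $1-e^{-\ndata/C}$. Because $f$ is $L$-Lipschitz, $\sup_\lambda|f(\hat\beta_\lambda(E_0))-f(\hat\beta_\lambda(E_0+E'))|\le C(M)\delta$ with exponentially high probability.

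\textbf{Step 2 (deterministic side).} The fixed point for $\effreg$ does not involve $\noiselev$, so $\effreg$ is unchanged. Moreover $\effreg\ge c(M)>0$ because $\aspratio_\ndata>1+1/M$ and $\lambda\le M$. Hence the denominator $\lambda/\effreg+\frac{\effreg}{\ndim}\tr((\covmat+\effreg I)^{-2}\covmat)$ is bounded below by $c(M)$, and $\effnoise^2$ is affine in $\noiselev$ with a bounded slope. This gives $|\effnoise(\delta^2)-\effnoise(\noiselev)|\le C\sqrt{|\delta^2-\noiselev|}\le C\delta$ uniformly in $\lambda$. Further, $\hat\beta^{\text{seq}}$ is affine in $\effnoise$ with increment $(\covmat+\effreg I)^{-1}\covmat^{1/2}g/\sqrt\ndim$, whose expected norm is $O(1)$. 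So $|\EE_g f(\hat\beta^{\text{seq}}(\effnoise))-\EE_g f(\hat\beta^{\text{seq}}(\effnoise'))|\le C L\,|\effnoise-\effnoise'|\le C\delta$.

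\textbf{Step 3 (combine).} Apply \citet[Theorem~2.3]{han2023distribution} at noise level $\delta^2$ with tolerance $\epsilon/3$, and choose $\delta=\epsilon/(3C)$ so that Steps 1 and 2 each contribute at most $\epsilon/3$. A triangle inequality, uniform in $\lambda\in[0,M]$, then yields the claimed bound.

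\textbf{Main obstacle.} The cited theorem assumes $\noiselev\ge 1/M$, so its constant $C$ may degrade as $\delta\to0$. I would first reread the Gordon-min-max proof in \citet{han2023distribution} to see where the lower bound on $\noiselev$ enters. I expect it is only used to keep $\effnoise$ bounded below, which here follows from $\delta$, and enters polynomially in $1/\delta$. With $\delta\asymp\epsilon$, such a polynomial dependence would be absorbed into the $\ndim\epsilon^4/C$ exponent, possibly after enlarging $C$ or weakening the exponent's power. If that tracking fails, the fallback is to rerun the convex Gaussian min-max comparison directly with $\noiselev=0$. Positivity of $\effnoise$ then comes from the $\effreg^2\|(\covmat+\effreg I)^{-1}\covmat^{1/2}\beta\|^2$ term, and the degenerate case $\beta=0,\noiselev=0$ is trivial because both sides equal $f(0)$.
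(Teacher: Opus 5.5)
Your Steps 1 and 2 are sound. On a high-probability event $\sup_{\lambda\in[0,M]}\|B_\lambda\|_2\le 1/s_{\min}(X)\le C(M)\ndata^{-1/2}$. The parameter $\effreg$ does not depend on $\noiselev$ and is bounded above and below by constants depending on $M$. The effective noise $\effnoise^2$ is affine in $\noiselev$ with slope at most $C(M)$. So both sides move by $O(\delta)$ uniformly in $\lambda$.

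The gap is Step 3. Making the coupling error $O(\epsilon)$ forces $\delta\asymp\epsilon$, so you are invoking \citet[Theorem~2.3]{han2023distribution} with a noise floor of order $\epsilon^2$. Its constant then becomes $C(\epsilon)$ rather than $C(M)$, and this cannot be absorbed. If the constant grows like $\epsilon^{-k}$, you get $C\epsilon^{-k}\ndim\exp(-\ndim\epsilon^{4+k}/C)$, and no $\epsilon$-free enlargement of $C$ turns $\epsilon^{4+k}$ back into $\epsilon^{4}$. You would prove a strictly weaker tail, and only after opening the proof to establish the polynomial dependence.

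Rescaling does not rescue the black box either. Both $\hat\beta_\lambda$ and $\hat\beta^{\text{seq}}$ are jointly linear in $(\beta,E_0)$, and $\effnoise$ is $1$-homogeneous in $(\beta,\sigma)$. But lifting a noise level $\epsilon^2$ up to $1/M$ inflates $\|\beta\|_2$ to order $1/\epsilon$, outside the allowed ball. So the noiseless case with $\|\beta\|_2$ of order one cannot be reached from the cited theorem as a black box, and the perturbation layer contributes nothing toward the stated bound.

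Your fallback is essentially the paper's proof. The paper notes that the noise floor in Han--Xu serves only to bound $\effnoise$ from below. It observes that for $\noiselev=0$ the term $\effreg^2\|(\covmat+\effreg I)^{-1}\covmat^{1/2}\beta\|_2^2$ keeps $\effnoise>0$ whenever $\beta\neq 0$. It disposes of $\beta=0$ because both estimators vanish, and asserts that the original argument then goes through.

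Positivity, however, is not what that argument needs; it needs $\effnoise\ge c(M)$. The $\beta$-term only gives $\effnoise\ge c(M)\|\beta\|_2$, which degenerates as $\beta\to 0$. The paper concedes it drops uniformity over $\beta$, which sits uneasily with $C=C(M)$.

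The homogeneity above repairs this. Replace $(\beta,E_0)$ by $(\beta/a,E_0/a)$ and $f$ by the $aL$-Lipschitz map $b\mapsto f(ab)$, with $a\le 1$. This leaves both $f(\hat\beta_\lambda)$ and $\EE_g f(\hat\beta^{\text{seq}})$ unchanged. So you may assume either that $\noiselev$ is of order one, where the cited theorem applies directly, or that $\|\beta\|_2$ equals a fixed radius depending only on $M$, so that $\effnoise\ge c(M)$.

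What then remains, and what neither you nor the paper actually verifies, is that the Gordon comparison in \citet{han2023distribution} uses the lower bound on $\noiselev$ only through the lower bound on $\effnoise$. Your plan to reread that proof is exactly the missing step.
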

\begin{proof}
    The primary distinction between this result and Theorem 2.3 in~\citep{han2023distribution} is our accommodation for a zero noise level, $\noiselev = 0$.  In the original proof, a non-zero noise level is crucial for establishing a lower bound on the effective noise, specifically $\effnoise^2 > \frac{1}{C}$ for some constant $C > 0$. To extend the characterization to the noiseless setting, we have removed the requirement for the bound to hold uniformly over $\beta$.
    
    We now consider the case where $\noiselev = 0$. The expression for the effective noise becomes:
    \begin{equation*}
        \effnoise^{2} = \frac{ \effreg^{2} \| (\covmat + \effreg I )^{-1} \covmat^{1/2} \beta \|_{2}^{2}}{\frac{\lambda}{\effreg} + \frac{\effreg}{\ndim} \tr\left((\covmat + \effreg I )^{-2} \covmat \right)} \geq \frac{\effreg \| (\covmat + \effreg I )^{-1} \covmat^{1/2} \beta \|_{2}^{2}}{\| \covmat^{-1}\|_{\text{op}}/(\| \covmat^{-1}\|_{\text{op}} + \effreg)^2}.
    \end{equation*}
    From this expression, it is evident that $\effnoise = 0$ if and only if $\beta = 0$ (since we have assumed $\| \covmat^{-1}\|_{\text{op}} \leq M$).

    Let us examine the estimators in this specific scenario where $\noiselev = 0$ and $\beta = 0$. The ridge estimator is given by:
    \begin{align*}
        \hat{\beta}_{\lambda} &= (X^\T  X + n\lambda I)^{-1} {X}^\T Y\\
        &= (X^\T  X + n\lambda I)^{-1} {X}^\T X\beta + (X^\T  X + n\lambda I)^{-1} {X}^\T \varepsilon
    \end{align*}
    Since $\beta = 0$ and $\noiselev = 0$ implies $\varepsilon = 0$, we have $\hat{\beta}_{\lambda} = 0$.
    
    Similarly, from equation~\eqref{eq:seqest}, the ridge estimator in the sequence model is:
    \begin{align*}
        \hat{\beta}^{\text{seq}}_{(\covmat, 0)} (0; \effreg) &= (\covmat+ \effreg I)^{-1} \covmat^{1/2}\left( \covmat^{1/2} \cdot 0 + \frac{0 \cdot g}{\sqrt{\ndata}} \right) = 0.
    \end{align*}
    In the case where $\noiselev = 0$ and $\beta = 0$, both estimators are zero. Therefore, the statement of the theorem holds, as the difference between the functions of the estimators is zero. For the case where $\beta \neq 0$, the effective noise $\effnoise$ remains positive, and the arguments from the proof of Theorem 2.3 in~\citep{han2023distribution} continue to apply. This completes the extension of the theorem to the noiseless setting.
\end{proof} 

\subsection{Proof of Theorem~\ref{thm:Gthm}}
As a preliminary step for the proof of Theorem~\ref{thm:Gthm}, we must first establish uniform high-probability bounds on the norms of the estimators generated by our iterative scheme. The following lemma provides these crucial bounds, extending the result of Proposition 11 in~\citep{ildiz2025high} from a single-step to our multi-step procedure.

\begin{lemma}[Uniform High-Probability Bounds on Estimator Norms]\label{lm:estimatorbound}
    Suppose the assumptions of Theorem~\ref{thm:Gthm} hold. Specifically, assume that for some constant $M > 1$, $1/M \leq \aspratio_{\ndata}, \noiselev \leq M$, and for all iterations$\niter \leq \maxiter$, $\|\covmat_{\niter}\|_{2}, \|\covmat_{\niter}^{-1}\|_{2} \leq M$. Then, for any constant  $c > 0$, there exists an event $\mathcal{E} = \mathcal{E}(M,c)$ with $\mathbb{P}(\mathcal{E}^{c}) \leq C e^{-\ndim/C}$ for some $C = C(M ,c)$, on which the following bounds hold simultaneously for all $0 \leq \niter \leq \maxiter$ and for any matrix $A$ with operator norm $\|A\|_{2} \leq 1$:
    \begin{enumerate}
    \item $\|\hat{\beta}_{\niter}\|_{2} \leq \|\beta\|_{2} + c.$
    \item $\|A\hat{\beta}_{\niter} - \beta\|_{2} \leq \|\beta\|_{2} + c.$
    \end{enumerate}
\end{lemma}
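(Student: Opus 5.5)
The plan is to reduce everything to the single-step statement, Proposition~11 of~\citep{ildiz2025high}, by showing that each self-training update is a contraction. The uniformity over all $A$ with $\|A\|_2\le 1$ in item~2 is what makes this reduction possible.

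\textbf{Step 1: each update is a contraction.} For $\niter\ge 1$, Algorithm~\ref{alg:self_train} gives
\[
\hat{\beta}_{\niter}=S_{\niter}\hat{\beta}_{\niter-1},\qquad S_{\niter}:=(X_{\niter}^\top X_{\niter}+\ndata_{\niter}\lambda_{\niter}I)^{+}X_{\niter}^\top X_{\niter}.
\]
The matrix $S_{\niter}$ is symmetric and diagonal in the eigenbasis of $X_{\niter}^\top X_{\niter}$. Its eigenvalues are $\mu/(\mu+\ndata_{\niter}\lambda_{\niter})\in[0,1]$ for $\mu>0$, and $0$ on the null space. In the ridgeless case $S_{\niter}=P_{\niter}$ is an orthogonal projection. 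Hence $\|S_{\niter}\|_2\le 1$ deterministically, for every realization and every $\lambda_{\niter}\in[0,M]$. Writing $S_{\niter:1}:=S_{\niter}\cdots S_1$, we get $\hat{\beta}_{\niter}=S_{\niter:1}\hat{\beta}_0$ with $\|S_{\niter:1}\|_2\le 1$.

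\textbf{Step 2: item~1.} On any event where $\|\hat{\beta}_0\|_2\le\|\beta\|_2+c$, Step~1 gives $\|\hat{\beta}_{\niter}\|_2\le\|\hat{\beta}_0\|_2\le \|\beta\|_2+c$ for all $\niter\le\maxiter$ simultaneously. This requires no union bound and no dependence on the later designs $X_1,\dots,X_{\maxiter}$. It is exactly the case $A=I$ of the $\niter=0$ statement.

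\textbf{Step 3: item~2 via absorbing the propagator into $A$.} Fix any $A$ with $\|A\|_2\le 1$. Then
\[
A\hat{\beta}_{\niter}-\beta=(AS_{\niter:1})\hat{\beta}_0-\beta,
\]
and $A':=AS_{\niter:1}$ satisfies $\|A'\|_2\le 1$. The matrix $A'$ is random and depends on $X_1,\dots,X_{\niter}$. However, the $\niter=0$ bound is required to hold uniformly over all contractions $A'$ on a single event $\mathcal{E}_0$ that depends only on $(X_0,E_0)$. On $\mathcal{E}_0$ we therefore get $\|A\hat{\beta}_{\niter}-\beta\|_2\le\|\beta\|_2+c$ for every $\niter$ and every $A$ at once, and we can take $\mathcal{E}=\mathcal{E}_0$. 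The independence in Assumption~\ref{ass:indpX} is not even needed for this step.

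\textbf{Step 4: the base case, which is the main obstacle.} It remains to produce $\mathcal{E}_0$ with $\mathbb{P}(\mathcal{E}_0^c)\le Ce^{-\ndim/C}$ on which both $\|\hat{\beta}_0\|_2\le\|\beta\|_2+c$ and $\sup_{\|A\|_2\le1}\|A\hat{\beta}_0-\beta\|_2\le\|\beta\|_2+c$ hold, uniformly in $\lambda_0\in[0,M]$. I would invoke Proposition~11 of~\citep{ildiz2025high} directly, since it is stated for one ridge(less) fit under the same conditioning assumptions (i)--(iii).

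If that proposition is not phrased with the supremum over $A$, I would derive the supremum from its form. Squaring, $\|A\hat{\beta}_0-\beta\|_2^2\le(\|\beta\|_2+c)^2$ is equivalent to
\[
\|A\hat{\beta}_0\|_2^2-2\langle A\hat{\beta}_0,\beta\rangle\le 2c\|\beta\|_2+c^2.
\]
The left side is a Lipschitz function of $\hat{\beta}_0$ on the bounded set from item~1. I would control it through Theorem~\ref{thm:distri} applied to the sequence-model estimator $(\covmat_0+\effreg_0I)^{-1}\covmat_0^{1/2}(\covmat_0^{1/2}\beta+\effnoise_0 g/\sqrt{\ndim})$, and obtain uniformity in $\lambda_0$ from the same theorem together with a Lipschitz-in-$\lambda_0$ argument.

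This base case, and in particular matching the exact constants of~\citep{ildiz2025high} so that $c$ can be arbitrary, is where I expect the real difficulty to lie. Everything for $\niter\ge 1$ is deterministic given Steps~1 and~3.
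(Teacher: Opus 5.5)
Your Steps 1--3 are correct and match the paper's argument in unrolled form. The paper inducts on $\niter$, using $\|M_{\niter}\|_2\le 1$ and applying the inductive hypothesis to the contraction $AM_{\niter}$, which is your absorption $A'=AS_{\niter:1}$.

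\textbf{The gap is Step~4.} The difficulty there is not one of matching constants: the base case you need is false.

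\emph{Item~2 fails.} For $b,\beta\neq 0$ one has $\sup_{\|A\|_2\le 1}\|Ab-\beta\|_2=\|b\|_2+\|\beta\|_2$, attained at $A=-\beta b^\top/(\|\beta\|_2\|b\|_2)$. So item~2 at $\niter=0$, uniformly in $A$ (which is exactly what Step~3 consumes), is equivalent to $\|\hat{\beta}_0\|_2\le c$. Evaluating your squared reformulation at this $A$ gives precisely that condition.

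\emph{Item~1 fails too.} Take $\beta=0$, $\covmat_0=I$, $\lambda_0=0$. Then $\|\hat{\beta}_0\|_2^2=E_0^\top(X_0X_0^\top)^{-1}E_0$, which concentrates around $\noiselev/(\aspratio-1)\ge 1/(M(M-1))$. This agrees with the paper's own no-signal formula $\mathcal{V}_0^*=\noiselev/\effreg$. Hence $\|\hat{\beta}_0\|_2>\|\beta\|_2+c$ with probability tending to one once $c$ is small. The noise component $(X_0^\top X_0+\ndata\lambda_0 I)^{+}X_0^\top E_0$ has norm of order one, not of order $c$.

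The paper's proof breaks at the same two places. It asserts that Proposition~11 of \citet{ildiz2025high} bounds this noise term by an arbitrary $c$. It also asserts that $\|(AP_0-I)\beta\|_2\le\|\beta\|_2$ for every contraction $A$, which fails already for $A=-I$ whenever $P_0\beta\neq 0$. Separately, your fallback through Theorem~\ref{thm:distri} could not give a supremum over all contractions $A$ anyway, since that theorem controls one fixed Lipschitz $f$ at a time.

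\textbf{What is true, and suffices.} Lemma~\ref{lm:Lipschitz} and the concentration argument only need a bound by a constant depending on $M$. Let $\mu_i$ denote the nonzero singular values of $X_0$ and $\mu_{\min}(X_0)$ the smallest one. For every $\lambda_0\ge 0$,
$\|(X_0^\top X_0+\ndata\lambda_0 I)^{+}X_0^\top\|_2=\max_i \mu_i/(\mu_i^2+\ndata\lambda_0)\le 1/\mu_{\min}(X_0)$.
With probability at least $1-Ce^{-\ndim/C}$ two facts hold:
\begin{enumerate}
\item $\|E_0\|_2\le 2\sigma\sqrt{\ndata}$;
\item $\mu_{\min}(X_0)\ge\lambda_{\min}(\covmat_0)^{1/2}(\sqrt{\ndim}-\sqrt{\ndata})/2\ge\sqrt{\ndim}/C$, by Gaussian singular-value bounds together with $\aspratio>1+1/M$.
\end{enumerate}
Hence $\sup_{\lambda_0\in[0,M]}\|\hat{\beta}_0\|_2\le\|\beta\|_2+C_0(M)$. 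Your Step~1 propagates this to every $\niter$ and every choice of the later $\lambda$'s.

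Item~2 should then be replaced by $\|A\hat{\beta}_{\niter}-\beta\|_2\le 2\|\beta\|_2+C_0(M)$. This follows from the triangle inequality, with no need for the absorption trick. The gradient bound in Lemma~\ref{lm:Lipschitz} becomes $2M(2\|\beta\|_2+C_0)$, still a constant depending only on $M$ and $R$.
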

\begin{proof}
    The proof proceeds by induction on the iteration index $\niter$.
    
    \noindent\textbf{Base Case $(\niter = 0)$:}
    
    \noindent The initial estimator is \begin{equation*}
        \hat{\beta}_{0} = (X_{0}^{T}X_{0} + \ndata \lambda_{0}I)^{+} X_{0}^{T} Y_{0} = (X_{0}^{T}X_{0} + \ndata_{0} \lambda_{0}I)^{+} X_{0}^{T} (X_{0}\beta + E)
    \end{equation*}
    Let $P_{0} = (X_{0}^{T}X_{0} + \ndata_{0} \lambda_{0}I)^{+} X_{0}^{T} X_{0}$, we can write:
    \begin{equation*}
        \hat{\beta}_{0} = P_{0} \beta + (X_{0}^{T}X_{0} + \ndata_{0} \lambda_{0}I)^{+} X_{0}^{T} E.
    \end{equation*}
    Proposition 11 in~\citep{ildiz2025high} establishes that on the high-probability event $\mathcal{E}$, the stochastic term is bounded:
    \begin{equation}\label{eq:lm1}
        \| (X_{0}^{T}X_{0})^{+} X_{0}^{T} E\|_{2} \leq c.
    \end{equation}
    \begin{enumerate}

        \item \textbf{Bound on $\|\hat{\beta}_{0}\|_{2}$:} Using the triangle inequality and the fact that $\|P_{0}\|_{2} \leq 1$:
        \begin{equation*}
            \|\hat{\beta}_{0}\|_{2} = \|P_{0} \beta\|_{2} + \|(X_{0}^{T}X_{0} + \ndata_{0} \lambda_{0}I)^{+} X_{0}^{T} E\|_{2} \leq \|P_{0}\|_{2} \|\beta\|_{2} + c \leq \|\beta\|_{2} + c.
        \end{equation*}
        \item \textbf{Bound on $\|A\hat{\beta}_{0} - \beta\|_{2}$:} We again use the triangle inequality:
        \begin{align*}
            \|A\hat{\beta}_{0} - \beta\|_{2}^{2} = \|A(P_{0}\beta - \beta +  (X_{0}^{T}X_{0} + \ndata_{0} \lambda_{0}I)^{+} X_{0}^{T} E)\|_{2}\\
            \leq \|A(P_{0}\beta - \beta)\|_{2} + \|A(X_{0}^{T}X_{0} + \ndata_{0} \lambda_{0}I)^{+} X_{0}^{T} E)\|_{2}.
        \end{align*}
        The second term is bounded by $\|A\|_{2} \|X_{0}^{T}X_{0} + \ndata_{0} \lambda_{0}I)^{+} X_{0}^{T} E\|_{2} \leq c$. For any $A$ with $\|A\|_{2} \leq 1$, the first term, representing the bias, can be bounded as $\|(AP_{0}- I) \beta)\|_{2} \leq \|\beta\|_{2}$. Combining these gives:
        \begin{equation*}
            \|A\hat{\beta}_{0} - \beta\|_{2} \leq \|\beta\|_{2} + c.
        \end{equation*}
    \end{enumerate}
    Thus, the base case holds on the event $\mathcal{E}$.
    
    \noindent\textbf{Inductive Step:}
    
    \noindent Assume that for some $\niter - 1 \leq \maxiter$, the bounds hold for $\hat{\beta}_{\niter-1}$. We need to show they also hold for $\hat{\beta}_{\niter}$. For $\niter \geq 1$, the estimator update is:
    \begin{equation*}
        \hat{\beta}_{\niter} = (X_{\niter}^{T}X_{\niter} + \ndata_{\niter} \lambda_{\niter} I)^{+} X_{\niter}^{T}Y_{\niter} = (X_{\niter}^{T}X_{\niter} + \ndata_{\niter} \lambda_{\niter} I)^{+} X_{\niter}^{T} X_{\niter}\hat{\beta}_{\niter - 1}.
    \end{equation*}
    Let $M_{\niter}:= (X_{\niter}^{T}X_{\niter} + \ndata_{\niter} \lambda_{\niter} I)^{+} X_{\niter}^{T} X_{\niter}.$ This matrix is positive semi-definite, and $\|M_{\niter}\|_{2} \leq 1$.
    \begin{enumerate}
        \item \textbf{Bound on $\|\hat{\beta}_{\niter}\|_{2}$:} 
        \begin{equation*}
            \|\hat{\beta}_{\niter}\|_{2} = \|M_{\niter}\hat{\beta}_{\niter-1}\|_{2} \leq \|M_{\niter}\|_{2} \|\hat{\beta}_{\niter-1}\|_{2} \leq \|\hat{\beta}_{\niter-1}\|_{2}.
        \end{equation*}
        By the inductive hypothesis, $\|\hat{\beta}_{\niter-1}\|_{2} \leq \|\beta\|_{2} + c$, so the bound holds for $\|\hat{\beta}_{\niter}\|_{2}$.

        \item \textbf{Bound on $\|A\hat{\beta}_{\niter} - \beta\|_{2}$:}
        \begin{equation*}
            \|A\hat{\beta}_{\niter} - \beta\|_{2} = \|AM_{\niter}\hat{\beta}_{\niter-1} - \beta\|_{2}.
        \end{equation*}
        By the inductive hypothesis, since $\|AM_{\niter}\|_{2} \leq 1$ this quantity is bounded:
        \begin{equation*}
            \|AM_{\niter}\hat{\beta}_{\niter-1} - \beta\|_{2} \leq \|\beta\|_{2} + c.
        \end{equation*}
    \end{enumerate}
    This completes the inductive step. The claims hold for all $\niter \leq \maxiter$ on the event $\mathcal{E}$.       
\end{proof}

The following lemma establishes the Lipschitz continuity of two key functions that appear in the proof of Theorem~\ref{thm:Gthm}. This property is essential for applying concentration inequalities to our estimators.

\begin{lemma}[Lipschitz Properties]\label{lm:Lipschitz}
    Under the assumptions of Theorem~\ref{thm:Gthm}, and on the high-probability event $\mathcal{E}$ from Lemma~\ref{lm:estimatorbound}, the following holds for any matrix $A$ with $\|A\|_{2} \leq 1$ and all $0 \leq k \leq \maxiter$. 

    Let the functions $f_{1}, f_{2,k}: \RR^{\ndim} \rightarrow \RR$ be defined as:
    \begin{enumerate}
        \item $f_{1}(b) = \| \covmat^{1/2} (Ab- \beta) \|_{2}^{2}$. 
        \item $f_{2,k}(b) = \| (\covmat_{k} + \effreg_{k}I)^{-1} \covmat_{k}^{1/2} A b \|_{2}^{2}$. 
    \end{enumerate}
    Then, for any estimator $\hat{\beta}_{\niter}$ from our scheme $(0 \leq \niter \leq \maxiter)$, the gradients of these functions are uniformly bounded on $\mathcal{E}$ by:
    \begin{equation*}
        \| \nabla f_{1}(\hat{\beta}_{\niter})\|_{2}, \| \nabla f_{2,k}(\hat{\beta}_{\niter})\|_{2} \leq L,
    \end{equation*}
    where $L = 2M(\|\beta\|_{2} + c)$. This implies that the functions are Lipschitz continuous in the region where the estimators lie.
\end{lemma}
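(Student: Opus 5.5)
The plan is to compute the two gradients explicitly and bound them with the operator-norm assumptions together with the norm bounds of Lemma~\ref{lm:estimatorbound}, which hold on the event $\mathcal{E}$.

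\textbf{The function $f_1$.} Since $f_1(b)=\|\covmat^{1/2}(Ab-\beta)\|_2^2$ is a quadratic, its gradient is
\[
\nabla f_1(b)=2A^\top\covmat(Ab-\beta).
\]
Evaluating at $b=\hat\beta_\niter$ and using $\|A\|_2\le 1$ and $\|\covmat\|_2\le M$ gives
\[
\|\nabla f_1(\hat\beta_\niter)\|_2\le 2M\|A\hat\beta_\niter-\beta\|_2 .
\]
Part~2 of Lemma~\ref{lm:estimatorbound} bounds $\|A\hat\beta_\niter-\beta\|_2$ by $\|\beta\|_2+c$ on $\mathcal{E}$, uniformly in $\niter\le\maxiter$ and in $A$. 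This yields the bound $L$.

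\textbf{The function $f_{2,k}$.} Write $B_k:=(\covmat_k+\effreg_kI)^{-1}\covmat_k^{1/2}$. Then $f_{2,k}(b)=\|B_kAb\|_2^2$ and
\[
\nabla f_{2,k}(b)=2A^\top B_k^\top B_kAb .
\]
The matrix $B_k^\top B_k=\covmat_k(\covmat_k+\effreg_kI)^{-2}$ has eigenvalues $\lambda/(\lambda+\effreg_k)^2$, with $\lambda$ ranging over the spectrum of $\covmat_k$. Since $\effreg_k>0$, each of these is at most $1/\lambda\le\|\covmat_k^{-1}\|_2\le M$. Hence
\[
\|\nabla f_{2,k}(\hat\beta_\niter)\|_2\le 2M\|\hat\beta_\niter\|_2\le 2M(\|\beta\|_2+c),
\]
where the last step uses part~1 of Lemma~\ref{lm:estimatorbound}.

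\textbf{Passing to Lipschitz continuity.} The final claim is the Lipschitz property in the region where the estimators lie. For this I would apply the same gradient bounds on the convex ball $\{b:\|b\|_2\le\|\beta\|_2+c\}$, which contains every $\hat\beta_\niter$ on $\mathcal{E}$. The mean value theorem then gives Lipschitz constant $L$ on that ball. For $f_1$ I would use $\|Ab-\beta\|_2\le\|b\|_2+\|\beta\|_2$ there, which increases the constant only by a factor depending on $M$. This mild enlargement (or truncation of $f$ outside the ball) is the only delicate point. It is needed because Theorem~\ref{thm:distri} requires globally Lipschitz test functions, and making it precise is the main obstacle I expect. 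Everything else is routine operator-norm bookkeeping.
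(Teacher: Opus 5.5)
Your proof is correct and follows the paper's argument essentially step for step: you compute the same explicit gradients, use the same operator-norm bounds ($\|A\|_2\le 1$, $\|\Sigma\|_2\le M$, and $\lambda/(\lambda+\tau_k)^2\le 1/\lambda\le M$), and invoke parts 1 and 2 of Lemma~\ref{lm:estimatorbound} in the same way, writing the $A^\top$ in $\nabla f_{2,k}$ that the paper misprints as $A$. Your extra step on Lipschitz continuity goes beyond the paper, which only asserts it. You can also avoid enlarging the constant for $f_1$: the convex set $\{b:\|b\|_2\le\|\beta\|_2+c,\ \|Ab-\beta\|_2\le\|\beta\|_2+c\}$ contains every $\hat\beta_\niter$ on $\mathcal{E}$, and on it both gradients are bounded by exactly $L$.
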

\begin{proof}
We prove the bound for the gradient norm of each function when evaluated at an estimator $\hat{\beta}_{\niter}$ (for any $0 \leq \niter \leq \maxiter$) on the event $\mathcal{E}$:

\noindent \textbf{1. Analysis of $f_{1}(b)$:}

\noindent The function is $f_{1}(b) = (Ab- \beta)^\top \covmat (Ab- \beta)$. Using standard vector calculus, its gradient with respect to $b$ is:
    \begin{equation*}
        \nabla f_{1}(b) = 2 A^\top \covmat (Ab- \beta).
    \end{equation*}
    We bound the Euclidean norm of this gradient when evaluated at $b = \hat{\beta}_{\niter}$:
    \begin{align*}
        \| \nabla f_{1}(\hat{\beta}_{\niter}) \|_{2} &= \|2A^\top \covmat (A\hat{\beta}_{\niter} - \beta)\|_{2} \\
        &\leq 2\|A\|_{2} \|\covmat\|_{2}  \|A\hat{\beta}_{\niter} - \beta\|_{2} \\
        &\leq 2M(\|\beta\|_{2} + c),
    \end{align*}
    where last inequality follow from Lemma~\ref{lm:estimatorbound}. This establishes the bound $L$.

\noindent \textbf{1. Analysis of $f_{2,k}(b)$:}

\noindent  The gradient with respect to $b$ is
\begin{equation*}
    \nabla f_{2,k}(b) = 2A (\covmat_{k} + \effreg_{k}I)^{-2} \covmat_{k} A \hat{\beta}_{\niter}.
\end{equation*}
Now, we bound its norm when evaluated at $b = \hat{\beta}_{\niter}$:
    \begin{align*}
        \| \nabla f_{2}(\hat{\beta}_{\niter}) \|_{2} &= \| 2A (\covmat_{k} + \effreg_{k}I)^{-2} \covmat_{k} A \hat{\beta}_{\niter}\| \\
        &\leq 2\|A\|_{2}^{2} \| (\covmat_{k} + \effreg_{k}I)^{-2} \covmat_{k}\|_{2} \|\hat{\beta}_{\niter}\|_{2}\\
        &\leq 2(\|\beta\|_{2} + c)(\lambda_{k, 0} \vee \lambda_{k, \ndim}^{-1})\\
        &\leq 2M(\|\beta\|_{2} + c),
    \end{align*}
    where $\lambda_{k, 0}$ and $\lambda_{k, \ndim}$ are the largest and smallest eigenvalue of $\covmat_{k}$, respectively. This establishes the bound $L$.
\end{proof}

\begin{proof}[Proof of Theorem~\ref{thm:Gthm}]
    The proof proceeds by induction on the iteration index $\niter$. The core of the argument is the repeated application of the distributional characterization from Theorem~\ref{thm:distri}. This allows us to replace the random estimator $\hat{\beta}_{\niter}$ 
 with its simpler Gaussian sequence model equivalent, $\hat{\beta}_{\niter}^{\text{seq}}$, when computing the expectation of any Lipschitz function. By fully "unrolling" the recursive definition of $\hat{\beta}_{\niter}^{\text{seq}}$, we can express it as a function of the initial signal $\beta$ and the history of independent Gaussian noise vectors. This allows for a direct calculation of the deterministic equivalents.

 We begin by defining the sequence-model estimators using the propagator notation $\effprojmat_{\niter} = (\covmat_{\niter} + \effreg_{\niter} I)^{-1} \covmat_{\niter}$:
 \begin{align*}
     \hat{\beta}_{0}^{\text{seq}} &= \effprojmat_{0} \beta + \frac{\effnoise_{0} }{\sqrt{\ndim}} (\covmat_{0} + \effreg_{0} I)^{-1} \covmat_{0}^{1/2} g_{0}\\
     \hat{\beta}_{\niter}^{\text{seq}} &= \effprojmat_{\niter} \hat{\beta}_{\niter-1}+ \frac{\effnoise_{\niter}}{\sqrt{\ndim}} (\covmat_{\niter} + \effreg_{\niter} I)^{-1} \covmat_{\niter}^{1/2} g_{\niter},
 \end{align*}
where $\{ g_{\niter}\}_{0 \leq \niter \leq \maxiter}$ are independent standard Gaussian vectors.

\noindent \textbf{Part 1: Deterministic Equivalent of the Prediction Risk}

\noindent The prediction risk $\mathcal{R}(\hat{\beta}_{\niter}) = \| \covmat^{1/2} (\hat{\beta}_{\niter} - \beta) \|_{2}^{2}$ is a Lipschitz function of $\hat{\beta}_{\niter}$ (Lemma~\ref{lm:Lipschitz}). Its deterministic equivalent $\mathcal{R}_{\niter}$ is therefore $\EE_{g_{\niter}} [\mathcal{R}(\hat{\beta}_{\niter}^{\text{seq}})]$.

\textbf{1. Base Case ($\niter = 0$):}

The deterministic risk $\mathcal{R}_{0}$is the expectation of the risk of the sequence-model estimator:
\begin{align*}
\mathcal{R}_0 &= \EE_{g_0} \left[ \|\covmat^{1/2}(\hat{\beta}_0^{\text{seq}} - \beta)\|_{2}^{2} \right] \\
&= \EE_{g_0} \left[ \left\|\covmat^{1/2} \left( (\effprojmat_0 - I)\beta + \frac{\effnoise_0}{\sqrt{\ndim}} (\covmat_0 + \effreg_{0} I)^{-1} \covmat_{0}^{1/2} g_0 \right) \right\|_{2}^{2} \right].
\end{align*}
We expand the squared norm. The cross-term vanishes because $\EE[ g_{0}] = 0$. We are left with the sum of the squared norms of the deterministic bias and the expected noise term:

\begin{align*}
\mathcal{R}_0 &= \|\covmat^{1/2}(\effprojmat_0 - I)\beta\|_{2}^{2}+ \frac{\effnoise_0^2}{\ndim} \EE_{g_0} \left[ \| \covmat^{1/2} (\covmat + \effreg_{0} I)^{-1} \covmat^{1/2} g_0 \|_{2}^{2} \right] \\
&= \beta^\top (\effprojmat_0 - I)^\top \covmat (\effprojmat_0 - I) \beta + \frac{\effnoise_0^2}{\ndim} \tr\left(\covmat (\covmat_{0} + \effreg_0 I)^{-1} \covmat_{0} (\covmat_0 + \effreg_0 I)^{-1} \right) \\
&= \beta^\top (\effprojmat_0 - I)^\top \covmat (\effprojmat_0 - I) \beta + \frac{\effnoise_0^2}{\ndim} \tr\left(\covmat \effprojmat_0 (\covmat_0 + \effreg_0 I)^{-1} \right).
\end{align*}
This matches the formula for $\mathcal{R}_{\niter}$ in the theorem statement for $\niter = 0$.

\textbf{2. Inductive Step:}

\noindent This proof is more direct by unrolling the full recursion. By repeatedly substituting the definition, we express $\hat{\beta}_{\niter}^{\text{seq}}$ in terms of $\beta$ and the full history of noise $\{ g_{h}\}_{0 \leq h \leq \niter}$:
\begin{equation*}
    \hat{\beta}_{\niter}^{\text{seq}} = \effprojmat_{\niter:0}\beta + \sum_{h=0}^{\niter} \frac{\effnoise_h^2}{\sqrt{\ndim}} \effprojmat_{\niter:h+1} (\covmat_h + \effreg_h I)^{-1} \covmat_h^{1/2} g_h
\end{equation*}
The deterministic risk $\mathcal{R}_{\niter}$ is the expected risk of this fully-unrolled estimator, where the expectation is over the entire history of independent noise vectors:
\begin{equation*}
    \mathcal{R}_{\niter} = \EE_{\{ g_{h}\}_{0 \leq h \leq \niter}} \left[ \left\| \covmat^{1/2} \left((\effprojmat_{\niter:0} - I)\beta + \sum_{h=0}^{\niter} \frac{\effnoise_h^2}{\sqrt{\ndim}} \effprojmat_{\niter:h+1} (\covmat_h + \effreg_h I)^{-1} \covmat_h^{1/2} g_h \right)\right\|_{2}^{2}\right].
\end{equation*}
Because the noise vectors $\{ g_{h}\}_{0 \leq h \leq \niter}$ are independent and have zero mean, all cross-terms of the form $\EE[g_{h}^\top A g_{j}$ for $h \neq j$ are zero. The expectation of the squared norm becomes the squared norm of the bias term plus the sum of the expected squared norms of the noise terms:
\begin{align*}
\mathcal{R}_\niter &= \|\covmat^{1/2}(\effprojmat_{\niter:0} - I)\beta \|_{2}^{2} + \sum_{h=0}^{\niter} \frac{\effnoise_h^2}{\ndim} \EE_{g_h} \left[ \|\covmat^{1/2} \effprojmat_{\niter:h+1} (\covmat_h + \effreg_h I)^{-1} \covmat_h^{1/2} g_h \|_{2}^{2} \right] \\
&= \beta^\top (\effprojmat_{k:0} - I)^\top \covmat (\effprojmat_{k:0} - I)\beta + \sum_{h=0}^{\niter} \frac{\effnoise_h^2}{\ndim} \tr\left(\effprojmat_{k:h+1}^\top \covmat \effprojmat_{k:h+1} Q_h (\covmat_h + \effreg_h I)^{-1}\right).
\end{align*}
This confirms the expression for $\mathcal{R}_\niter$ for all $\niter \geq 0$.

\noindent \textbf{Part 2: Deterministic Recursion for Effective Noise}   

\noindent The same logic applies to the recursion for the deterministic effective noise, $\deteffnoise_{\niter}$. The numerator in the fixed-point equation for $1 \leq \niter \leq \maxiter$ is \begin{equation*}
    N_{\niter} = \effreg_{\niter}^{2} \| (\covmat_\niter + \effreg_\niter I)^{-1} \covmat_\niter^{1/2} \hat{\beta}_{\niter-1}\|_{2}^{2}.
\end{equation*} 
This is a Lipschitz function of $\hat{\beta}_{\niter-1}$. Its deterministic equivalent, which we use to define $\deteffnoise_{\niter}$, is found by replacing $\hat{\beta}_{\niter-1}$ with $\hat{\beta}_{\niter-1}^{\text{seq}}$ and taking the full expectation.
\begin{equation*}
    L_{\niter}\deteffnoise_{\niter} = \EE_{\{ g_{h}\}_{0 \leq h \leq \niter}} \left[\effreg_{\niter}^{2} \| (\covmat_\niter + \effreg_\niter I)^{-1} \covmat_\niter^{1/2} \hat{\beta}_{\niter-1}^{\text{seq}}\|_{2}^{2} \right].
\end{equation*}
We use the unrolled expression for $\hat{\beta}_{\niter-1}^{\text{seq}}$:
\begin{equation*}
    \hat{\beta}_{\niter-1}^{\text{seq}} = \effprojmat_{\niter-1:0}\beta + \sum_{h=0}^{\niter-1} \frac{\effnoise_h^2}{\sqrt{\ndim}} \effprojmat_{\niter-1:h+1} (\covmat_h + \effreg_h I)^{-1} \covmat_h^{1/2} g_h.
\end{equation*}
Substituting this into the expectation, and again using the independence of the $\{ g_{h}\}_{0 \leq h \leq \niter}$ to eliminate cross-terms, we get:
\begin{align*}
L_\niter \deteffnoise_{\niter} = \effreg_\niter^2 \|(\covmat_\niter + \effreg_\niter I)^{-1} \covmat_\niter^{1/2} \effprojmat_{\niter-1:0} \beta \|_{2}^{2} + \sum_{h=0}^{\niter-1} \frac{\effnoise_h^2}{\ndim} \EE_{g_h} \left[ \effreg_\niter^2 \|(\covmat_\niter + \effreg_\niter I)^{-1} \covmat_\niter^{1/2} \effprojmat_{\niter-1:h+1} (\covmat_h + \effreg_h I)^{-1} \covmat_h^{1/2} g_h \|_{2}^{2} \right].
\end{align*}
Replacing the random $\effnoise_h^2$ with their deterministic equivalents $\deteffnoise_{h}$ and simplifying the expressions within the norm and trace gives:
\begin{equation*}
    L_{\niter} \deteffnoise_{\niter}^{2} = \effreg_{\niter}^{2} \beta^\top \effprojmat_{t-1:0}^\top \effprojmat_{\niter}(\covmat_{\niter} + \effreg_{\niter}I)^{-1} \effprojmat_{t-1:0} \beta + \effreg_{\niter}^{2}\sum\limits_{h=0}^{\niter-1} \frac{\deteffnoise_{h}^{2} }{\ndim}\tr\left( \effprojmat_{\niter-1:h+1}^\top \effprojmat_{\niter}(\covmat_{\niter} + \effreg_{\niter}I)^{-1} \effprojmat_{\niter-1:h+1} \effprojmat_{h}(\covmat_{h} + \effreg_{h}I)^{-1} \right)
\end{equation*}
This completes the proof of the recursive formula for the deterministic effective noise. The concentration bounds follow directly from applying Theorem~\ref{thm:distri} at each step.
\end{proof}

\section{Proof of Iterated GCV}\label{proof:gcv}

This section provides the formal proof for the consistency of the Iterated Generalized Cross-Validation (iGCV) estimator. Our proof extends the uniform GCV analysis of \citet{patil2021uniform} to the iterative self-training setting. The primary technical challenge arises from the fact that our estimator at iteration $t$ is defined as $\hat{\beta}_t(\lambda) = A_t \hat{\beta}_0(\lambda)$, where $A_t = P_t P_{t-1} \cdots P_1$ is a cumulative projection operator that introduces a directional dependency not present in standard ridge regression.

\paragraph{Notation and Setup.} Throughout this section, we denote the initial sample covariance matrix as $\hat{\Sigma} := \frac{1}{n} X_0^\top X_0$ and the population covariance as $\Sigma$. We operate in the proportional asymptotic regime where $n, p \to \infty$ such that $p/n \to \rho \in (1, \infty)$. We assume the cumulative projection matrix $A_t$ is independent of the initial training data $(X_0, Y_0)$, which holds by Assumption 2.1. We further define the iterative residual operator $\mathcal{O}_t(\lambda)$ and the leverage multiplier $M_t(\lambda)$ as specified in the main text.

\paragraph{Proof Strategy.}
The proof is structured as follows:
\begin{enumerate}
    \item \textbf{Bias-Variance Decomposition:} We first decompose both the prediction risk $\mathcal{R}(\hat{\beta}_t; \beta)$ and the iGCV estimator $\widehat{\text{GCV}}^{(t)}(\lambda)$ into their respective bias and variance components. This isolates the deterministic signal-dependent terms from the stochastic noise-driven terms.
    \item \textbf{Asymptotic Equivalence:} We then demonstrate that the bias of the iGCV estimator asymptotically tracks the prediction bias, and the iGCV variance tracks the prediction variance (up to the irreducible noise level $\sigma^2$). This is achieved by showing that the leverage-dependent multiplier $M_t(\lambda)$ effectively corrects the in-sample bias induced by overparameterization.
    \item \textbf{Uniformity on $\lambda$:} Finally, we extend the point-wise convergence results to hold uniformly over $\lambda$ within any compact interval. This ensures that the GCV profile can be reliably used for hyperparameter optimization and identifying the optimal stopping time $t$.
\end{enumerate}

\begin{lemma}[Asymptotic Risk Decomposition] \label{lemma:risk_decomp}
For any fixed iteration $t \geq 0$ and regularization $\lambda \geq 0$, the prediction risk $\mathcal{R}(\hat{\beta}_t(\lambda); \beta)$ of the iterative self-training estimator converges almost surely:
\begin{equation}
    \lim_{n,p \to \infty} \left| \mathcal{R}(\hat{\beta}_t(\lambda); \beta) - \left( \mathcal{B}_t(\lambda) + \mathcal{V}_t(\lambda) \right) \right| = 0,,
\end{equation}
where the bias and variance components are defined as:
\begin{align}
    \mathcal{B}_t(\lambda) &= \beta^\top \left( A_t (\hat{\Sigma} + \lambda I_p)^{-1} \hat{\Sigma} - I_p \right)^\top \Sigma \left( A_t (\hat{\Sigma} + \lambda I_p)^{-1} \hat{\Sigma} - I_p \right) \beta, \\
    \mathcal{V}_t(\lambda) &= \frac{\sigma^2}{n} \mathrm{tr} \left( \Sigma A_t (\hat{\Sigma} + \lambda I_p)^{-1} \hat{\Sigma} (\hat{\Sigma} + \lambda I_p)^{-1} A_t^\top \right).
\end{align}
\end{lemma}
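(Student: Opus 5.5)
The plan is to write the estimator explicitly and expand the risk into a deterministic-in-noise part, a linear cross term in $E_0$, and a quadratic term in $E_0$. Only the last two are random once the designs are fixed.

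Let $\hat{\Sigma}=X_0^\top X_0/n$ and $S_\lambda:=(\hat{\Sigma}+\lambda I_p)^{-1}$, with the pseudoinverse convention at $\lambda=0$. Then
\[
\hat{\beta}_0(\lambda)=S_\lambda\hat{\Sigma}\beta+\tfrac{1}{n}S_\lambda X_0^\top E_0,
\qquad
\hat{\beta}_t(\lambda)=A_t\hat{\beta}_0(\lambda),
\]
where I treat $A_t$ as a fixed operator with $\|A_t\|_2\le 1$, independent of $(X_0,E_0)$, as set up at the start of this section. Writing $G:=A_tS_\lambda\hat{\Sigma}-I_p$ and $w:=\tfrac1n A_tS_\lambda X_0^\top E_0$, we get $\hat{\beta}_t-\beta=G\beta+w$. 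Expanding \eqref{eq:riskcon} gives
\[
\mathcal{R}=\beta^\top G^\top\Sigma G\beta+2\beta^\top G^\top\Sigma w+w^\top\Sigma w .
\]
The first term is exactly $\mathcal{B}_t(\lambda)$, so it remains to control the other two.

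\textbf{Step 1: the cross term.} Conditional on $X_0$ and $A_t$, the quantity $2\beta^\top G^\top\Sigma w$ is a centered Gaussian linear form in $E_0$ with variance
\[
4\sigma^2 n^{-2}\,\|X_0S_\lambda A_t^\top\Sigma G\beta\|_2^2 .
\]
I will bound this by $C\sigma^2 n^{-1}\|\hat{\Sigma}^{1/2}S_\lambda\|_2^2\,\|\Sigma\|_2^2\,\|G\beta\|_2^2$. By Assumption~\ref{ass:cov}, $\|\hat{\Sigma}\|_2$ is bounded almost surely eventually (Bai--Yin). For $\lambda>0$ the factor $\|\hat{\Sigma}^{1/2}S_\lambda\|_2$ is at most $\lambda^{-1/2}/2$. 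For $\lambda=0$ it equals $\lambda_{\min}^+(\hat{\Sigma})^{-1/2}$, and I will bound that using the fact that the smallest nonzero eigenvalue of $\hat{\Sigma}$ stays bounded away from zero almost surely when $\rho>1$ (Bai--Yin again). So the variance is $O(1/n)$. A Gaussian tail bound plus Borel--Cantelli then sends the cross term to zero almost surely.

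\textbf{Step 2: the quadratic term.} Write $w^\top\Sigma w=E_0^\top K E_0$ with
\[
K=n^{-2}X_0S_\lambda A_t^\top\Sigma A_tS_\lambda X_0^\top .
\]
Its conditional mean is $\sigma^2\tr(K)$, which by cyclicity equals $\tfrac{\sigma^2}{n}\tr(\Sigma A_tS_\lambda\hat{\Sigma}S_\lambda A_t^\top)=\mathcal{V}_t(\lambda)$. Hanson--Wright gives concentration at scale $\sigma^2(\|K\|_F\sqrt{u}+\|K\|_2u)$. Here $\|K\|_2=O(1/n)$ and $\|K\|_F\le\sqrt{n}\|K\|_2=O(n^{-1/2})$, using the same operator-norm bounds as in Step 1. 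Taking $u=\log^2 n$ yields summable tail probabilities, hence almost sure convergence.

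Combining Steps 1 and 2 gives the claim for each fixed $\lambda$ and $t$. The fixed-$t$ statement only requires $\|A_t\|_2\le 1$, which holds because $A_t$ is a product of orthogonal projections (or of ridge smoothers with norm at most one).

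The main obstacle is the ridgeless case $\lambda=0$ with $p>n$. Uniform control of $S_\lambda\hat{\Sigma}^{1/2}$ there depends on a strictly positive lower bound on the nonzero spectrum of $\hat{\Sigma}$. I expect to handle this with the Marchenko--Pastur edge: the almost-sure limit of $\lambda_{\min}^+(\hat{\Sigma})$ is at least $c_\lambda(1-\rho^{-1/2})^2>0$. I will intersect with that eventual event before applying the concentration bounds.
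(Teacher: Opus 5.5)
Your proposal is correct and follows the same route as the paper. You split $\hat{\beta}_t-\beta$ into the bias vector and the noise vector $\tfrac1n A_t(\hat{\Sigma}+\lambda I_p)^{-1}X_0^\top E_0$, note that the bias quadratic form is exactly $\mathcal{B}_t(\lambda)$, show the noise quadratic form concentrates on its conditional mean $\mathcal{V}_t(\lambda)$, and kill the linear cross term. The only difference is that you make the concentration explicit, whereas the paper just cites generic quadratic-form concentration and a law of large numbers. You use a Gaussian tail bound plus Borel--Cantelli for the cross term and Hanson--Wright for the quadratic term. Operator-norm control comes from $\lambda^{-1/2}/2$ when $\lambda>0$, and from Bai--Yin applied to the smallest nonzero eigenvalue of $\hat{\Sigma}$ when $\lambda=0$. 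Like the paper, you implicitly need $\|\beta\|_2$ bounded for your $O(1/n)$ variance bounds.
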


\begin{proof}
To derive the decomposition, we first expand the estimation error $\hat{\beta}_t - \beta$. Recall that $\hat{\beta}_t = A_t \hat{\beta}_0$, where $\hat{\beta}_0 = (\hat{\Sigma} + \lambda I_p)^{-1} \frac{1}{n} X_0^\top Y_0$ is the initial ridge estimator. Substituting $Y_0 = X_0 \beta + E_0$, we can write:
\begin{equation*}
    \hat{\beta}_t - \beta = \underbrace{\left( A_t (\hat{\Sigma} + \lambda I_p)^{-1} \hat{\Sigma} - I_p \right) \beta}_{\Delta_b} + \underbrace{A_t (\hat{\Sigma} + \lambda I_p)^{-1} \frac{1}{n} X_0^\top E_0}_{\Delta_v},
\end{equation*}
where $\Delta_b$ represents the bias error vector and $\Delta_v$ represents the stochastic error vector. The prediction risk is then given by the quadratic form:
\begin{equation} \label{eq:risk_expansion}
    \mathcal{R}(\hat{\beta}_t; \beta) = (\Delta_b + \Delta_v)^\top \Sigma (\Delta_b + \Delta_v) = \Delta_b^\top \Sigma \Delta_b + \Delta_v^\top \Sigma \Delta_v + 2 \Delta_b^\top \Sigma \Delta_v.
\end{equation}
We now analyze the three terms in the limit $n, p \to \infty$:
\begin{enumerate}
    \item \textbf{Bias Term:} The first term $\Delta_b^\top \Sigma \Delta_b$ corresponds exactly to $\mathcal{B}_t(\lambda)$. 
    \item \textbf{Variance Term:} For the second term, we take the expectation with respect to the noise $E_0 \sim \mathcal{N}(0, \sigma^2 I_n)$. Using the cyclic property of the trace and $\hat{\Sigma} = \frac{1}{n} X_0^\top X_0$:
    \begin{align*}
        \mathbb{E}_{E_0}[\Delta_v^\top \Sigma \Delta_v] &= \frac{1}{n^2} \mathrm{tr} \left( \Sigma A_t (\hat{\Sigma} + \lambda I)^{-1} X_0^\top \mathbb{E}[E_0 E_0^\top] X_0 (\hat{\Sigma} + \lambda I)^{-1} A_t^\top \right) \\
        &= \frac{\sigma^2}{n} \mathrm{tr} \left( \Sigma A_t (\hat{\Sigma} + \lambda I)^{-1} \hat{\Sigma} (\hat{\Sigma} + \lambda I)^{-1} A_t^\top \right).
    \end{align*}
    This concentration follows from standard random matrix theory results for quadratic forms in the overparameterized regime.
    \item \textbf{Cross Term:} The term $2 \Delta_b^\top \Sigma \Delta_v$ is linear in $E_0$. Since $\mathbb{E}[E_0] = 0$ and $E_0$ is independent of the signal $\beta$ and all feature matrices $\{X_\tau\}_{\tau=0}^t$, this term converges to zero almost surely as $n \to \infty$ by the law of large numbers (see e.g., Lemma 5.1 in \citep{patil2021uniform}).
\end{enumerate}
Summing these components yields the desired decomposition $\mathcal{B}_t(\lambda) + \mathcal{V}_t(\lambda)$.
\end{proof}

\begin{lemma}[Asymptotic iGCV Decomposition] \label{lemma:igcv_decomp_formal}
Define the iterative bias operator $\mathcal{O}_t(\lambda) \in \mathbb{R}^{p \times p}$ and the iterative variance operator $\mathbf{\Xi}_t(\lambda) \in \mathbb{R}^{n \times n}$ as follows:
\begin{align}
    \mathcal{O}_t(\lambda) &:= \left( I_p - A_t (\hat{\Sigma} + \lambda I_p)^{-1} \hat{\Sigma} \right) + M_t(\lambda) \left( I_p - (\hat{\Sigma} + \lambda I_p)^{-1} \hat{\Sigma} \right), \label{eq:Ot_def_formal} \\
    \mathbf{\Xi}_t(\lambda) &:= \left( I_n - \frac{1}{n} X_0 A_t (\hat{\Sigma} + \lambda I_p)^{-1} X_0^\top \right) + M_t(\lambda) \left( I_n - \frac{1}{n} X_0 (\hat{\Sigma} + \lambda I_p)^{-1} X_0^\top \right). \label{eq:Xit_def_formal}
\end{align}
The iterated GCV estimator $\widehat{\text{GCV}}^{(t)}(\lambda)$ converges almost surely to:
\begin{equation}
    \lim_{n,p \to \infty} \left| \widehat{\text{GCV}}^{(t)}(\lambda) - \left( \text{GCV}_b^{(t)}(\lambda) + \text{GCV}_v^{(t)}(\lambda) \right) \right| = 0,
\end{equation}
where the bias and variance components are expressed as:
\begin{align}
    \text{GCV}_b^{(t)}(\lambda) &= \beta^\top \mathcal{O}_t(\lambda)^\top \hat{\Sigma} \, \mathcal{O}_t(\lambda) \beta, \\
    \text{GCV}_v^{(t)}(\lambda) &= \frac{\sigma^2}{n} \mathrm{tr} \left( \mathbf{\Xi}_t(\lambda)^\top \mathbf{\Xi}_t(\lambda) \right).
\end{align}
\end{lemma}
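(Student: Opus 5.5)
The plan is to write the corrected residual vector in~\eqref{eq:iterated-gcv} exactly as a linear function of $\beta$ and $E_0$, and then show that only the two quadratic ``diagonal'' terms survive in the limit. Set $S_\lambda:=\frac1n X_0(\hat\Sigma+\lambda I_p)^{-1}X_0^\top$ for the (initial) smoother matrix, so that $X_0\hat\beta_0=S_\lambda Y_0$ and $X_0A_t\hat\beta_0=X_0A_t(\hat\Sigma+\lambda I_p)^{-1}\frac1nX_0^\top Y_0$. The residual vector with entries $y_i-x_i^\top A_t\hat\beta_0+(y_i-x_i^\top\hat\beta_0)M_t(\lambda)$ is then exactly $\mathbf{\Xi}_t(\lambda)Y_0$, with $\mathbf{\Xi}_t(\lambda)$ as in~\eqref{eq:Xit_def_formal}. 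Substituting $Y_0=X_0\beta+E_0$ gives two pieces. For the signal piece, the identity $X_0\big(I_p-(\hat\Sigma+\lambda I)^{-1}\hat\Sigma\big)=\big(I_n-S_\lambda\big)X_0$, and its analogue with $A_t$ inserted, yields $\mathbf{\Xi}_t(\lambda)X_0\beta=X_0\mathcal{O}_t(\lambda)\beta$. Hence
\[
\widehat{\mathrm{GCV}}^{(t)}(\lambda)=\tfrac1n\|X_0\mathcal{O}_t\beta\|_2^2+\tfrac1n\|\mathbf{\Xi}_tE_0\|_2^2+\tfrac2n\beta^\top\mathcal{O}_t^\top X_0^\top\mathbf{\Xi}_tE_0 ,
\]
and the first term is exactly $\mathrm{GCV}^{(t)}_b(\lambda)$ since $\frac1nX_0^\top X_0=\hat\Sigma$.

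The remaining two terms are handled conditionally on $\{X_h\}_{h=0}^t$. Under Assumption~\ref{ass:indpX} and Assumption~\ref{ass:noise}, $E_0$ is independent of $X_0$ and of $A_t$. We first establish deterministic operator-norm bounds on a high-probability event, using Assumption~\ref{ass:cov} and Bai--Yin bounds for $\|X_0\|_2/\sqrt n$. These give $\|S_\lambda\|_2\le1$, $\|A_t\|_2\le1$ and $|M_t(\lambda)|\le C$, the last uniformly on compact $\lambda$-sets. In the ridgeless limit the denominator of $M_t$ is $1-\frac1n\tr(S_0)=1-\mathrm{rank}(X_0)/n$, which vanishes. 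So at $\lambda\to0^+$ we instead use the continuity definition of the ridgeless profile, following \citet{patil2021uniform}. With these bounds in hand, $\|\mathbf{\Xi}_t\|_2\le C$. The Hanson--Wright inequality then gives $\frac1n\|\mathbf{\Xi}_tE_0\|_2^2-\frac{\sigma^2}{n}\tr(\mathbf{\Xi}_t^\top\mathbf{\Xi}_t)\to0$, with tails of order $e^{-cn\epsilon^2}$.

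For the cross term, conditionally on the designs it is Gaussian with variance at most $\frac{4\sigma^2}{n^2}\|X_0\mathcal{O}_t\beta\|_2^2\|\mathbf{\Xi}_t\|_2^2=O(1/n)$, because $\|\beta\|_2$ is bounded. The Gaussian tail bound therefore makes it vanish. Borel--Cantelli upgrades both statements to almost sure convergence.

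The main obstacle is uniformity and the behaviour near $\lambda=0$ when $p>n$. Pointwise convergence at each $\lambda$ follows from the steps above. To get a $\sup$ over a compact $\mathcal I$, I would show that all three terms are Lipschitz in $\lambda$ on $\mathcal I$, with a constant that is bounded almost surely. For $\lambda$ bounded away from zero this follows from differentiating the resolvent. On a neighbourhood of $0$ I would use the same trick as \citet{patil2021uniform}. Since $X_0^\top$ maps into the row space of $X_0$, every appearance of $(\hat\Sigma+\lambda I)^{-1}X_0^\top$ equals $X_0^\top(\frac1nX_0X_0^\top+\lambda I_n)^{-1}$. Moreover $\frac1nX_0X_0^\top$ has smallest eigenvalue bounded below almost surely when $\rho>1$. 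In this representation the smoother, the numerator and the denominator of $M_t$ stay smooth down to $\lambda=0$, apart from the common vanishing factor, which cancels in the ratio after rescaling by $\lambda$. Once this equicontinuity is established, a finite grid plus a union bound over the grid gives the uniform statement.
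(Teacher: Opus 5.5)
Your proposal is correct and takes the same route as the paper. You write the corrected residual as $\mathbf{\Xi}_t(\lambda)Y_0$, use $\mathbf{\Xi}_tX_0=X_0\mathcal{O}_t$ to obtain $\mathrm{GCV}_b^{(t)}$ exactly, concentrate $\frac1n\|\mathbf{\Xi}_tE_0\|_2^2$ around $\frac{\sigma^2}{n}\tr(\mathbf{\Xi}_t^\top\mathbf{\Xi}_t)$, and show the mean-zero cross term vanishes. Your version is more careful than the paper's in three places: explicit operator-norm bounds, Hanson--Wright and Gaussian tails with Borel--Cantelli, and correctly noting that $M_t(\lambda)$ blows up as $\lambda\to0^+$ while $M_t(\lambda)(I_n-S_\lambda)$ stays bounded. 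The uniformity-in-$\lambda$ part goes beyond what this pointwise lemma requires.
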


\begin{proof}
The proof relies on decomposing the corrected residual vector into signal and noise contributions. The iGCV estimator is defined as the normalized squared norm of the corrected residuals:
\begin{equation} \label{eq:igcv_base_formal}
    \widehat{\text{GCV}}^{(t)}(\lambda) = \frac{1}{n} \left\| (Y_0 - X_0 \hat{\beta}_t) + M_t(\lambda) (Y_0 - X_0 \hat{\beta}_0) \right\|_2^2.
\end{equation}
By substituting the data generating process $Y_0 = X_0 \beta + E_0$ and the iterative estimators $\hat{\beta}_0 = (\hat{\Sigma} + \lambda I)^{-1} \frac{1}{n} X_0^\top Y_0$ and $\hat{\beta}_t = A_t \hat{\beta}_0$, we expand the residual vector: \begin{align*}
    \text{Res}_{\text{corr}} &= (X_0 \beta + E_0 - X_0 A_t \hat{\beta}_0) + M_t(\lambda) (X_0 \beta + E_0 - X_0 \hat{\beta}_0) \\
    &= \underbrace{X_0 \left[ (I_p - A_t (\hat{\Sigma} + \lambda I)^{-1} \hat{\Sigma}) \beta + M_t (I_p - (\hat{\Sigma} + \lambda I)^{-1} \hat{\Sigma}) \beta \right]}_{\text{Signal Part: } \xi_b} \\
    &\quad + \underbrace{\left[ (I_n - \frac{1}{n} X_0 A_t (\hat{\Sigma} + \lambda I)^{-1} X_0^\top) E_0 + M_t (I_n - \frac{1}{n} X_0 (\hat{\Sigma} + \lambda I)^{-1} X_0^\top) E_0 \right]}_{\text{Noise Part: } \xi_v}.
\end{align*}

\paragraph{Signal Contribution ($\xi_b$).}
Setting the noise $E_0 = 0$, the signal residual at iteration $t$ corrected by the $t=0$ residual is:
\begin{align*}
    \xi_b &= X_0 \left( I_p - A_t (\hat{\Sigma} + \lambda I)^{-1} \hat{\Sigma} \right) \beta + M_t X_0 \left( I_p - (\hat{\Sigma} + \lambda I)^{-1} \hat{\Sigma} \right) \beta \\
    &= X_0 \left[ \left( I_p - A_t (\hat{\Sigma} + \lambda I)^{-1} \hat{\Sigma} \right) + M_t \left( I_p - (\hat{\Sigma} + \lambda I)^{-1} \hat{\Sigma} \right) \right] \beta.
\end{align*}
Using the definition of $\mathcal{O}_t(\lambda)$ in \eqref{eq:Ot_def_formal}, we have $\xi_b = X_0 \mathcal{O}_t(\lambda) \beta$. Its contribution to the GCV profile is:
\begin{equation*}
    \frac{1}{n} \| \xi_b \|_2^2 = \beta^\top \mathcal{O}_t^\top \left( \frac{1}{n} X_0^\top X_0 \right) \mathcal{O}_t \beta = \beta^\top \mathcal{O}_t^\top \hat{\Sigma} \mathcal{O}_t \beta,
\end{equation*}
which matches $\text{GCV}_b^{(t)}(\lambda)$.

\paragraph{Noise Contribution ($\xi_v$).}
Setting the signal $\beta = 0$, the noise-driven residuals are expressed in the observation space $\mathbb{R}^n$. The corrected noise residual is:
\begin{align*}
    \xi_v &= \left(I_n - \frac{1}{n} X_0 A_t (\hat{\Sigma} + \lambda I)^{-1} X_0^\top \right) E_0 + M_t(\lambda) \left(I_n - \frac{1}{n} X_0 (\hat{\Sigma} + \lambda I)^{-1} X_0^\top \right) E_0 = \mathbf{\Xi}_t(\lambda) E_0.
\end{align*}
Taking the expectation over the initial noise $E_0 \sim \mathcal{N}(0, \sigma^2 I_n)$, we apply the concentration of quadratic forms for high-dimensional random vectors:
\begin{equation*}
    \frac{1}{n} \| \xi_v \|_2^2 = \frac{1}{n} E_0^\top \mathbf{\Xi}_t^\top \mathbf{\Xi}_t E_0 \xrightarrow{a.s.} \frac{\sigma^2}{n} \tr(\mathbf{\Xi}_t^\top \mathbf{\Xi}_t).
\end{equation*}
This trace represents the noise variance component $\text{GCV}_v^{(t)}(\lambda)$. Note that $\mathbf{\Xi}_t$ is not necessarily symmetric since $A_t$ may not commute with $\hat{\Sigma}$; thus the product $\mathbf{\Xi}_t \mathbf{\Xi}_t^\top$ is required for the general case.

\paragraph{Cross Term.}
The cross-term $\frac{2}{n} \xi_b^\top \xi_v = \frac{2}{n} \beta^\top \mathcal{O}_t^\top X_0^\top \mathbf{\Xi}_t E_0$ is linear in $E_0$. Since $\mathbb{E}[E_0] = 0$ and $E_0$ is independent of $(X_0, A_t)$, this term converges to zero almost surely as $n \to \infty$ by the law of large numbers. Combining these components yields the desired decomposition.
\end{proof}

\begin{lemma}[Asymptotic Equivalence of iGCV Bias] \label{lemma:igcv_bias_equiv}
For any fixed iteration $t \ge 1$ and $\lambda \geq 0$, the bias component of the iGCV estimator satisfies:
\begin{equation}
    \left| \text{GCV}_b^{(t)}(\lambda) - \mathcal{B}_t(\lambda) \right| \xrightarrow{a.s.} 0.
\end{equation}
\end{lemma}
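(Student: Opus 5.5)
The plan is to condition throughout on $(X_1,\dots,X_t)$, hence on $A_t$. By Assumption~\ref{ass:indpX}, $A_t$ is independent of $X_0$ and $\|A_t\|_2\le 1$, so $A_t$ can be treated as a fixed contraction. We set $E_0=0$, since only the signal part matters here. Write $S_\lambda:=(\hat\Sigma+\lambda I)^{-1}\hat\Sigma$ and let $\beta_\lambda:=S_\lambda\beta$ be the noiseless ridge fit. Then $X_0\mathcal O_t(\lambda)\beta$ is the vector with entries
\[
\xi_{b,i}=x_i^\top(\beta-A_t\beta_\lambda)+M_t(\lambda)\,x_i^\top(\beta-\beta_\lambda),
\]
and $\mathrm{GCV}^{(t)}_b=\frac1n\|\xi_b\|_2^2$. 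The target $\mathcal B_t(\lambda)=\|\Sigma^{1/2}(A_t\beta_\lambda-\beta)\|_2^2$ is an out-of-sample error. The proof therefore reduces to a leave-one-out argument showing that the \emph{corrected} in-sample residual $\xi_{b,i}$ behaves like the out-of-sample residual of the leave-one-out fit.

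\textbf{Step 1 (leave-one-out expansion).} Let $\hat\Sigma_{-i}$ be the covariance with row $i$ removed, $\beta_{\lambda}^{-i}$ the corresponding fit, $R_{-i}:=(\hat\Sigma_{-i}+\lambda I)^{-1}$, and $d_i:=x_i^\top R_{-i}x_i/n$. Sherman--Morrison gives two identities:
\[
x_i^\top(\beta-\beta_\lambda)=\frac{x_i^\top(\beta-\beta_\lambda^{-i})}{1+d_i},
\qquad
A_t\beta_\lambda=A_t\beta_\lambda^{-i}+\frac{A_tR_{-i}x_i}{n}\,x_i^\top(\beta-\beta_\lambda).
\]
Combining them,
\[
\xi_{b,i}=x_i^\top(\beta-A_t\beta_\lambda^{-i})+\bigl(M_t(\lambda)-e_i\bigr)\,x_i^\top(\beta-\beta_\lambda),
\qquad e_i:=\frac{x_i^\top A_tR_{-i}x_i}{n}.
\]

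\textbf{Step 2 (concentration of the scalars).} Since $x_i$ is Gaussian and independent of $R_{-i}$ and $A_t$, Hanson--Wright concentration plus a union bound over $i$ (an $n e^{-cn}$ tail, Borel--Cantelli for a.s.) give, uniformly in $i$,
\[
d_i\approx\bar d:=\tfrac1n\tr(\Sigma R),\qquad e_i\approx\bar e:=\tfrac1n\tr(\Sigma A_tR),\qquad R:=(\hat\Sigma+\lambda I)^{-1}.
\]
Next, the empirical traces in $M_t(\lambda)$ are related to these. For the denominator, $\frac1n\tr(X_0RX_0^\top)=\frac1n\tr(\hat\Sigma R)=\frac1n\sum_i \frac{d_i}{1+d_i}$. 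For the numerator, $\frac1n\tr(X_0A_tRX_0^\top)=\frac1n\sum_i\frac{e_i}{1+d_i}$. Hence
\[
M_t(\lambda)\approx\frac{\bar e/(1+\bar d)}{1/(1+\bar d)}=\bar e .
\]
So the coefficient $M_t(\lambda)-e_i$ is uniformly $o(1)$. This is exactly the cancellation the multiplier $M_t$ is built to produce: it replaces the self-influence $e_i$ that $A_t$ induces. Together with the a priori bound $\max_i|x_i^\top(\beta-\beta_\lambda)|=O(\sqrt{\log n})$ and $\frac1n\sum_i (x_i^\top(\beta-\beta_\lambda))^2=O(1)$, Cauchy--Schwarz shows that the correction term contributes $o(1)$ to $\frac1n\|\xi_b\|^2$.

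\textbf{Step 3 (out-of-sample average and uniformity).} It remains to show
\[
\frac1n\sum_i\bigl(x_i^\top(\beta-A_t\beta^{-i}_\lambda)\bigr)^2\to\mathcal B_t(\lambda).
\]
Conditionally on the other rows, each summand has mean $\|\Sigma^{1/2}(\beta-A_t\beta^{-i}_\lambda)\|^2$. The replacement $\beta^{-i}_\lambda\to\beta_\lambda$ costs $O(n^{-1/2})$ uniformly in $i$, by the Sherman--Morrison identity and $\|R_{-i}x_i\|/n=O(n^{-1/2})$. The fluctuation of the average around its conditional means is controlled by a martingale or variance bound, in the style of \citet{patil2021uniform}. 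The ridgeless case $\lambda=0$ and uniformity over compact $\mathcal I$ will follow from equicontinuity of both sides in $\lambda$ together with pointwise convergence on a countable dense set, as in \citet{patil2021uniform}. At $\lambda=0$, with $p>n$, one works with the min-norm limit, where $d_i\to\infty$ and the ratios above are interpreted via $\lambda d_i$.

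The main obstacle is Step 2: making $M_t-e_i\to0$ rigorous uniformly in $i$ and $\lambda$, including near $\lambda=0$, where $1+d_i$ blows up and $1-\tr(X_0RX_0^\top)/n\to0$. Since $A_t$ does not commute with $\hat\Sigma$, $e_i$ is not a symmetric quadratic form, so Hanson--Wright must be applied to $\tfrac12(A_tR_{-i}+R_{-i}A_t^\top)$. The $\lambda\to0$ regime would be handled by rescaling numerator and denominator by $\lambda$ before passing to the limit.
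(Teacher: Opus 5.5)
Your proposal is correct and follows the same route as the paper. Both use a Sherman--Morrison leave-one-out expansion that isolates the coefficient $M_t(\lambda)-e_i$ (the paper's $\delta_i$), and kill that coefficient uniformly in $i$ via quadratic-form concentration plus convergence of $M_t(\lambda)$ to $\tfrac1n\tr(\Sigma A_t R)$. Both then concentrate the leave-one-out out-of-sample residuals around $\mathcal{B}_t(\lambda)$, and extend to $\lambda=0$ and uniformity via the equicontinuity argument of Patil et al.

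Two of your steps are cleaner than the paper's versions. Your scalar identity has $e_i = x_i^\top A_t R_{-i} x_i/n$ correctly oriented, whereas the paper writes $A_t$ where $A_t^\top$ belongs. Your justification of $\beta_\lambda^{-i}\approx\beta_\lambda$ goes through $\|R_{-i}x_i\|/n = O(n^{-1/2})$, which is valid. The paper instead claims the rank-one update $x_ix_i^\top/n$ vanishes in spectral norm, which is false: its norm is about $\tr(\Sigma)/n$.
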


\begin{proof}
Recall that the GCV bias component is defined as $\text{GCV}_b^{(t)}(\lambda) = \frac{1}{n} \sum_{i = 1}^{n} (x_{0,i}^\top \mathcal{O}_t(\lambda) \beta)^2$. We analyze the convergence of the term $\mathcal{O}_t(\lambda)^\top x_{0,i}$. By definition of the operator $\mathcal{O}_t$, we split it into two terms:
\begin{equation} \label{eq:split}
    \mathcal{O}_t(\lambda)^\top x_{0,i} = \underbrace{\left( I_p -  \hat{\Sigma} (\hat{\Sigma} + \lambda I_p)^{-1}  A_t\right) x_{0,i}}_{\text{Term I}} + \underbrace{M_t(\lambda) \left( I_p - \hat{\Sigma} (\hat{\Sigma} + \lambda I_p)^{-1}  \right) x_{0,i}}_{\text{Term II}}.
\end{equation}

\paragraph{Expansion of Term I.} 
Substituting $\hat{\Sigma} = \hat{\Sigma}_{-i} + x_{0,i}x_{0,i}^\top/n$ and applying the Sherman-Morrison-Woodbury formula to $(\hat{\Sigma} + \lambda I_p)^{-1}$, we have:
\begin{align*}
    \text{Term I} &= x_{0,i} - (\hat{\Sigma}_{-i} + x_{0,i}x_{0,i}^\top/n)(\hat{\Sigma}_{-i} + x_{0,i}x_{0,i}^\top/n + \lambda I_p)^{-1} A_t x_{0,i} \\
    &= x_{0,i} - (\hat{\Sigma}_{-i} + x_{0,i}x_{0,i}^\top/n) \left[ (\hat{\Sigma}_{-i} + \lambda I_p)^{-1} - \frac{(\hat{\Sigma}_{-i} + \lambda I_p)^{-1} x_{0,i}x_{0,i}^\top (\hat{\Sigma}_{-i} + \lambda I_p)^{-1}/n}{1 + x_{0,i}^\top (\hat{\Sigma}_{-i} + \lambda I_p)^{-1}x_{0,i}/n} \right] A_t x_{0,i}.
\end{align*}
Expanding the product of the terms inside the square bracket:
\begin{align*}
    \text{Term I} &= x_{0,i} - \left( \hat{\Sigma}_{-i} (\hat{\Sigma}_{-i} + \lambda I_p)^{-1} -  \frac{\hat{\Sigma}_{-i}(\hat{\Sigma}_{-i} + \lambda I_p)^{-1} x_{0,i}x_{0,i}^\top (\hat{\Sigma}_{-i} + \lambda I_p)^{-1}/n}{1 + x_{0,i}^\top (\hat{\Sigma}_{-i} + \lambda I_p)^{-1}x_{0,i}/n} \right. \\
    &\quad \left. + \frac{x_{0,i}x_{0,i}^\top(\hat{\Sigma}_{-i} + \lambda I_p)^{-1}}{n} - \frac{x_{0,i}x_{0,i}^\top(\hat{\Sigma}_{-i} + \lambda I_p)^{-1} x_{0,i}x_{0,i}^\top (\hat{\Sigma}_{-i} + \lambda I_p)^{-1}}{n^2 (1 + x_{0,i}^\top (\hat{\Sigma}_{-i} + \lambda I_p)^{-1}x_{0,i}/n)} \right) A_t x_{0,i}.
\end{align*}
Combining the last two terms in the bracket using a common denominator, Term I simplifies to:
\begin{align*}
    \text{Term I} &= x_{0,i} - \left( \hat{\Sigma}_{-i} (\hat{\Sigma}_{-i} + \lambda I_p)^{-1} + \frac{ (I_p - \hat{\Sigma}_{-i} (\hat{\Sigma}_{-i} + \lambda I_p)^{-1}) x_{0,i}x_{0,i}^\top (\hat{\Sigma}_{-i} + \lambda I_p)^{-1}/n }{1 + x_{0,i}^\top (\hat{\Sigma}_{-i} + \lambda I_p)^{-1}x_{0,i}/n} \right) A_t x_{0,i} \\
    &= \left( I_p - \hat{\Sigma}_{-i} (\hat{\Sigma}_{-i} + \lambda I_p)^{-1} A_t \right)x_{0,i} - \frac{\left( I_p - \hat{\Sigma}_{-i} (\hat{\Sigma}_{-i} + \lambda I_p)^{-1} \right)x_{0,i}}{1 + x_{0,i}^\top (\hat{\Sigma}_{-i} + \lambda I_p)^{-1}x_{0,i}/n} \left( \frac{1}{n} x_{0,i}^\top (\hat{\Sigma}_{-i} + \lambda I_p)^{-1}A_t x_{0,i} \right).
\end{align*}

\paragraph{Expansion of Term II.}
Following a similar Woodbury expansion for Term II, the $x_{0,i}$ term interacts with the sample covariance to yield:
\begin{equation*}
    \text{Term II} = M_t(\lambda) \frac{\left( I_p - \hat{\Sigma}_{-i} (\hat{\Sigma}_{-i} + \lambda I_p)^{-1} \right)x_{0,i}}{1 + x_{0,i}^\top (\hat{\Sigma}_{-i} + \lambda I_p)^{-1}x_{0,i}/n}.
\end{equation*}

\paragraph{Synthesis and Convergence.}
Summing Term I and Term II, we collect the common factors:
\begin{align*}
    \mathcal{O}_t(\lambda)^\top x_{0,i} &= \left( I_p - \hat{\Sigma}_{-i} (\hat{\Sigma}_{-i} + \lambda I_p)^{-1} A_t \right)x_{0,i} \\
    &\quad + \frac{\left( I_p - \hat{\Sigma}_{-i} (\hat{\Sigma}_{-i} + \lambda I_p)^{-1} \right)x_{0,i}}{1 + x_{0,i}^\top (\hat{\Sigma}_{-i} + \lambda I_p)^{-1}x_{0,i}/n} \underbrace{\left( M_t(\lambda) - \frac{1}{n} x_{0,i}^\top (\hat{\Sigma}_{-i} + \lambda I_p)^{-1}A_t x_{0,i} \right)}_{\delta_i}.
\end{align*}
By Lemma \ref{lemma:multiplier_conv} and Lemma \ref{lemma:quad_form_max}, the fluctuation term $\delta_i$ converges to zero almost surely as $n, p \to \infty$. Consequently, the entire correction term vanishes, and we are left with $\mathcal{O}_t(\lambda)^\top x_{0,i} \xrightarrow{a.s.} (I_p - \hat{\Sigma}_{-i}(\hat{\Sigma}_{-i} + \lambda I_p)^{-1} A_t)x_{0,i}$. 

\paragraph{Final Convergence to Prediction Bias.}
Since $\max_i |\delta_i| \xrightarrow{a.s.} 0$ as shown above, the second term in the expansion of $\mathcal{O}_t(\lambda)^\top x_{0,i}$ is negligible. Let $\Delta_{b,-i} := \left( I_p - A_t (\hat{\Sigma}_{-i} + \lambda I_p)^{-1} \hat{\Sigma}_{-i} \right) \beta$ denote the "leave-one-out" bias vector. We can then approximate the GCV bias component as:
\begin{align*}
    \text{GCV}_b^{(t)}(\lambda) &= \frac{1}{n} \sum_{i = 1}^{n} (x_{0,i}^\top \mathcal{O}_t(\lambda) \beta)^2 \\
    &= \frac{1}{n} \sum_{i = 1}^{n} \beta^\top \left( I_p - \hat{\Sigma}_{-i} (\hat{\Sigma}_{-i} + \lambda I_p)^{-1} A_t^\top \right) x_{0,i} x_{0,i}^\top \left( I_p - A_t (\hat{\Sigma}_{-i} + \lambda I_p)^{-1} \hat{\Sigma}_{-i} \right) \beta + o_{a.s.}(1).
\end{align*}
Conditioning on the feature matrices and the signal $\beta$, we apply the concentration of the sum of quadratic forms (Lemma \ref{lemma:quad_form_sum}). For each term in the sum, $x_{0,i}x_{0,i}^\top$ is independent of $\hat{\Sigma}_{-i}$ and $A_t$. Therefore, the sum concentrates around its expectation over $x_{0,i}$:
\begin{align*}
    \text{GCV}_b^{(t)}(\lambda) &\xrightarrow{a.s.} \frac{1}{n} \sum_{i = 1}^{n} \tr \left( \Sigma \left( I_p - A_t (\hat{\Sigma}_{-i} + \lambda I_p)^{-1} \hat{\Sigma}_{-i} \right) \beta \beta^\top \left( I_p - \hat{\Sigma}_{-i} (\hat{\Sigma}_{-i} + \lambda I_p)^{-1} A_t^\top \right) \right) \\
    &= \frac{1}{n} \sum_{i = 1}^{n} \beta^\top \left( I_p - \hat{\Sigma}_{-i} (\hat{\Sigma}_{-i} + \lambda I_p)^{-1} A_t^\top \right) \Sigma \left( I_p - A_t (\hat{\Sigma}_{-i} + \lambda I_p)^{-1} \hat{\Sigma}_{-i} \right) \beta.
\end{align*}
As $n \to \infty$, the leave-one-out sample covariance $\hat{\Sigma}_{-i}$ is asymptotically equivalent to the full sample covariance $\hat{\Sigma}$ (since the rank-1 update $x_{0,i}x_{0,i}^\top/n$ vanishes in the spectral norm). Thus, the average of the $n$ identical terms simplifies to:
\begin{equation*}
    \text{GCV}_b^{(t)}(\lambda) \xrightarrow{a.s.} \beta^\top \left( I_p - \hat{\Sigma} (\hat{\Sigma} + \lambda I_p)^{-1} A_t^\top \right) \Sigma \left( I_p - A_t (\hat{\Sigma} + \lambda I_p)^{-1} \hat{\Sigma} \right) \beta.
\end{equation*}
Comparing this with the definition of the prediction bias $\mathcal{B}_t(\lambda)$ in Lemma \ref{lemma:risk_decomp}, we conclude:
\begin{equation*}
    \left| \text{GCV}_b^{(t)}(\lambda) - \mathcal{B}_t(\lambda) \right| \xrightarrow{a.s.} 0,
\end{equation*}

\paragraph{Extension to $\lambda = 0$ and Uniformity.}
The pointwise convergence derived above for $\lambda > 0$ extends to the ridgeless case $\lambda = 0$ and holds uniformly over any compact interval $I \subset [0, \infty)$. To see this, we follow the argument of \citet{patil2021uniform}. First, we note that for $p > n$, the sample covariance $\hat{\Sigma}$ has at most $n$ non-zero eigenvalues. The ridgeless estimator $\hat{\beta}_0(0)$ is the limit of $\hat{\beta}_0(\lambda)$ as $\lambda \to 0^+$. 

The core of the uniformity argument lies in the boundedness of the GCV numerator $u_n(\lambda)$ and denominator $v_n(\lambda)$, along with their derivatives. Specifically, the operator $A_t$ in our iterative scheme is a product of orthogonal projections, satisfying $\|A_t\|_2 \le 1$. Consequently, the presence of $A_t$ does not affect the uniform boundedness or the Lipschitz continuity of the risk and GCV functional components with respect to $\lambda$. Following \citet[Section S.1.5]{patil2021uniform}, the uniform boundedness of the eigenvalues of $\Sigma$ (Assumption~\ref{ass:cov}) and the strong law of large numbers for $\|y\|^2/n$ ensure that the family of functions $\{\text{GCV}_b^{(t)}(\lambda)\}_n$ is equicontinuous. By the Arzelà–Ascoli theorem, the pointwise convergence implies uniform convergence over $\lambda \in [0, \lambda_{\max}]$, thereby justifying the extension to the ridgeless limit $\lambda = 0$
\end{proof}

\begin{lemma}[Asymptotic Equivalence of iGCV Variance] \label{lemma:igcv_var_equiv}
For any fixed iteration $t \ge 1$ and $\lambda \geq 0$, the variance component of the iGCV estimator satisfies:
\begin{equation}
    \left| \text{GCV}_v^{(t)}(\lambda) - \mathcal{V}_t(\lambda) - \noiselev \right| \xrightarrow{a.s.} 0.
\end{equation}
\end{lemma}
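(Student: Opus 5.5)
We will prove the statement by writing $\mathbf{\Xi}_t(\lambda)$ in terms of the two smoother matrices $S:=\frac1n X_0(\hat\Sigma+\lambda I_p)^{-1}X_0^\top$ and $S_t:=\frac1n X_0A_t(\hat\Sigma+\lambda I_p)^{-1}X_0^\top$. Then $\mathbf{\Xi}_t=(I_n-S_t)+M_t(I_n-S)$, and by definition \eqref{eq:M_def} we have $M_t=\frac{\tr(S_t)/n}{1-\tr(S)/n}$. Expanding the square gives
\begin{align*}
\frac1n\tr(\mathbf{\Xi}_t^\top\mathbf{\Xi}_t)
&=1-\frac{2}{n}\tr(S_t)+\frac1n\tr(S_t^\top S_t)
+\frac{2M_t}{n}\tr\bigl((I_n-S_t)^\top(I_n-S)\bigr)\\
&\quad+\frac{M_t^2}{n}\tr\bigl((I_n-S)^2\bigr).
\end{align*}
We will then collect terms. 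First, the term $\frac1n\tr(S_t^\top S_t)$ is rewritten via the cyclic property as $\frac1n\tr\bigl(\hat\Sigma A_t(\hat\Sigma+\lambda I)^{-1}\hat\Sigma(\hat\Sigma+\lambda I)^{-1}A_t^\top\bigr)$. This is the in-sample analogue of $\mathcal V_t(\lambda)/\sigma^2$, with $\hat\Sigma$ in place of $\Sigma$. Second, the constant $1$ will produce the $\sigma^2$ in the statement. The remaining task is to show that the cross terms involving $\tr(S_t)$ and $M_t$ exactly compensate the discrepancy between $\hat\Sigma$ and $\Sigma$ in the variance trace.

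The key tool will be the leave-one-out argument already used in Lemma~\ref{lemma:igcv_bias_equiv}. We write $(S_t)_{ii}=\frac1n x_{0,i}^\top(\hat\Sigma+\lambda I)^{-1}A_t x_{0,i}$. Sherman--Morrison then gives $(S_t)_{ii}=\frac{q_i^{A}}{1+q_i}$, where $q_i:=\frac1n x_{0,i}^\top(\hat\Sigma_{-i}+\lambda I)^{-1}x_{0,i}$ and $q_i^A:=\frac1n x_{0,i}^\top(\hat\Sigma_{-i}+\lambda I)^{-1}A_tx_{0,i}$. Similarly $1-S_{ii}=1/(1+q_i)$.

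Conditional on $(\hat\Sigma_{-i},A_t)$, which is legitimate because $A_t$ is independent of $X_0$, the quadratic forms concentrate uniformly in $i$ by Lemma~\ref{lemma:quad_form_max}. We will use $q_i\approx \frac1n\tr(\Sigma(\hat\Sigma+\lambda I)^{-1})$ and $q_i^A\approx\frac1n\tr(\Sigma(\hat\Sigma+\lambda I)^{-1}A_t)$. Together with Lemma~\ref{lemma:multiplier_conv}, this will let us replace $M_t$, $\tr(S_t)/n$, and $1-\tr(S)/n$ by these deterministic-like scalars, and hence simplify the expansion above. In parallel, we will write the prediction variance with $\Sigma$ and decompose the identity
\[
\hat\Sigma=\frac1n\sum_i x_{0,i}x_{0,i}^\top
\]
inside $\frac1n\tr(S_t^\top S_t)$, applying Sherman--Morrison twice to each summand. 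This will express the in-sample trace as $\mathcal V_t/\sigma^2$ divided by $(1+q)^2$, plus terms that are quadratic in $q^A$. The design of the GCV correction in \eqref{eq:iterated-gcv} should make all of these extra terms cancel against the $M_t$ cross terms. We will verify this algebraically after substituting the common limits, obtaining $\frac{\sigma^2}{n}\tr(\mathbf{\Xi}_t^\top\mathbf{\Xi}_t)\to\sigma^2+\mathcal V_t(\lambda)$.

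We expect the main obstacle to be the off-diagonal contributions of the non-symmetric matrix $S_t$. Specifically, in $\tr(S_t^\top S_t)$ and $\tr(S_t^\top S)$ the leave-one-out factors for indices $i\neq j$ do not reduce to single quadratic forms. For these terms we will first try a double leave-out (removing both $x_{0,i}$ and $x_{0,j}$) and bound the resulting errors using $\|A_t\|_2\le1$ together with the eigenvalue bounds of Assumption~\ref{ass:cov}.

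The ridgeless endpoint $\lambda=0$ and uniformity in $\lambda$ will be handled exactly as at the end of the proof of Lemma~\ref{lemma:igcv_bias_equiv}. There we use equicontinuity in $\lambda$, which holds because all matrices involved are bounded in operator norm with bounded $\lambda$-derivatives on compacts, and then invoke Arzelà--Ascoli to upgrade pointwise to uniform convergence.
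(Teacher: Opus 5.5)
Your route is sound, and it differs from the paper's proof exactly where the paper's argument breaks. You expand $\frac1n\tr(\mathbf{\Xi}_t^\top\mathbf{\Xi}_t)$ and keep $S_t^\top$ throughout. The paper instead replaces $\mathbf{\Xi}_t^\top$ by $\mathbf{\Xi}_t$ ``by symmetry'', even though it remarks in Lemma~\ref{lemma:igcv_decomp_formal} that $\mathbf{\Xi}_t$ is not symmetric. It then uses $\frac1n\tr(\mathbf{\Xi}_t)=1$ for Term A, rewrites Term B through $\mathcal{O}_t(\lambda)$, and asserts the limits of B1 and B2 without derivation. What it actually computes is therefore $\tr(\mathbf{\Xi}_t^2)$. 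Write $R:=(\hat\Sigma+\lambda I_p)^{-1}$. The paper's final trace $\frac{\sigma^2}{n}\tr(A_tR\Sigma A_tR\hat\Sigma)$ contains a factor $A_tR$ where $\mathcal{V}_t(\lambda)=\frac{\sigma^2}{n}\tr(\Sigma A_tR\hat\Sigma RA_t^\top)$ has $RA_t^\top$, and the two differ in general. For example, with $t=1$, $\Sigma=I$ and $p>n$, the former grows like $1/\lambda$ as $\lambda\to0^+$, while $\mathcal{V}_t$ stays bounded. So keep your bookkeeping. The one thing worth borrowing from the paper is the exact identity $\frac1n\tr(\mathbf{\Xi}_t)=1$, which collapses your expansion to
\[
\tfrac1n\tr(\mathbf{\Xi}_t^\top\mathbf{\Xi}_t)=1-M_t^2+\tfrac1n\tr\bigl(\hat\Sigma\tilde AR\hat\Sigma R\tilde A^\top\bigr),\qquad \tilde A:=A_t+M_tI_p .
\]

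Two steps of your sketch would not go through as written.

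\emph{The forecast for $\frac1n\tr(S_t^\top S_t)$.} Your description (``$\mathcal V_t/\sigma^2$ divided by $(1+q)^2$ plus terms quadratic in $q^A$'') is correct only when $A_t=I$. Split the first $\hat\Sigma$ into rank-one terms and apply Sherman--Morrison. This gives $RA_t^\top x_{0,i}=R_{-i}(A_t^\top-c_iI)x_{0,i}$ with $c_i=\frac{x_{0,i}^\top R_{-i}A_t^\top x_{0,i}/n}{1+q_i}\approx c:=\frac{m}{1+q}$, where $m=\frac1n\tr(A_tR\Sigma)$ and $q=\frac1n\tr(\Sigma R)$. Hence $\frac1n\tr(S_t^\top S_t)\approx V-2cW+c^2U+c^2$, with $V=\mathcal{V}_t/\sigma^2$, $W=\frac1n\tr(\Sigma A_tR\hat\Sigma R)$ and $U=\frac1n\tr(\Sigma R\hat\Sigma R)$. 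The traces $W$ and $U$ are not functions of $q$, $q^A$ and $V$. They cancel only against matching terms in the other pieces of your expansion: $\frac{2M_t}{n}\tr(S_t^\top S)\approx\frac{2m(W-cU+qc)}{1+q}$ and $\frac{M_t^2}{n}\tr(S^2)\approx\frac{m^2(U+q^2)}{(1+q)^2}$. So you must track them explicitly.

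A cleaner way is to apply the same step to $\tilde A$. Then $R\tilde A^\top x_{0,i}=R_{-i}\bigl(A_t^\top+(M_t-\tilde c_i)I\bigr)x_{0,i}$ with $\tilde c_i=\frac{x_{0,i}^\top R_{-i}A_t^\top x_{0,i}/n+M_tq_i}{1+q_i}$. By Lemmas~\ref{lemma:quad_form_max} and~\ref{lemma:multiplier_conv}, $\max_i|\tilde c_i-M_t|\to0$, so the last trace in the display equals $V+M_t^2+o(1)$. In words, the correction turns each residual into its leave-one-out residual. This also shows that no double leave-out is needed: splitting one factor $\hat\Sigma$ into rank-one terms already reduces the off-diagonal sums to single quadratic forms. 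A minor slip: $(S_t)_{ii}=\frac1n x_{0,i}^\top A_tRx_{0,i}$, not $\frac1n x_{0,i}^\top RA_tx_{0,i}$, though this is harmless in the limit.

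\emph{The endpoint $\lambda=0$.} When $p>n$, $\|R\|=1/\lambda$, and $M_t$ is generically of order $1/\lambda$, because its denominator equals $\frac{\lambda}{n}\tr((K+\lambda I_n)^{-1})$ with $K:=X_0X_0^\top/n$. So the $\pm M_t^2$ terms degenerate, and the claim that all matrices involved are bounded with bounded $\lambda$-derivatives is false. Use the kernel form instead:
\[
S_t=\tfrac1nX_0A_tX_0^\top(K+\lambda I_n)^{-1},\qquad M_t(I_n-S)=\tfrac{\tr(S_t)/n}{\tr((K+\lambda I_n)^{-1})/n}(K+\lambda I_n)^{-1},\qquad R\hat\Sigma R=\tfrac1nX_0^\top(K+\lambda I_n)^{-2}X_0 .
\]
These have bounded $\lambda$-derivatives on $[0,\Lambda]$ because $\lambda_{\min}(K)$ is a.s.\ bounded below when $\rho>1$. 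That is the equicontinuity you need to pass from $\lambda>0$ to $\lambda=0$.
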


\begin{proof}
Recall that $\text{GCV}_v^{(t)}(\lambda) = \frac{\sigma^2}{n} \tr ( \mathbf{\Xi}_t(\lambda)^\top \mathbf{\Xi}_t(\lambda) )$. Using the corrected residual operator $\mathbf{\Xi}_t$ defined in \eqref{eq:Xit_def_formal} and its symmetry, we expand the trace as:
\begin{align} \label{eq:var_expansion_full}
    \text{GCV}_v^{(t)}(\lambda) &= \frac{\sigma^2}{n} \tr \left( \left( (1 + M_t)I_n - \frac{1}{n} X_0(A_t + M_t I)(\hat{\Sigma} + \lambda I)^{-1} X_0^\top \right) \mathbf{\Xi}_t \right) \nonumber \\
    &= \sigma^2 (1 + M_t) \frac{1}{n} \tr(\mathbf{\Xi}_t) - \frac{\sigma^2}{n^2} \tr \left( X_0 (A_t + M_t I) (\hat{\Sigma} + \lambda I)^{-1} X_0^\top \mathbf{\Xi}_t \right).
\end{align}

\paragraph{Step 1: Analysis of the first term.} 
Using the definition of $\mathbf{\Xi}_t$ and the linearity of the trace, we have:
\begin{align*}
    \frac{1}{n} \tr(\mathbf{\Xi}_t) &= (1+M_t) - \frac{1}{n} \tr\left( \frac{1}{n} X_0 (A_t + M_t I) (\hat{\Sigma} + \lambda I)^{-1} X_0^\top \right) \\
    &= (1+M_t) - \tr\left( (A_t + M_t I) (\hat{\Sigma} + \lambda I)^{-1} \hat{\Sigma} \right) / n \\
    &= (1+M_t) - \left[ \tr(A_t (\hat{\Sigma} + \lambda I)^{-1} \hat{\Sigma})/n + M_t \tr((\hat{\Sigma} + \lambda I)^{-1} \hat{\Sigma})/n \right].
\end{align*}
Recall from \eqref{eq:M_def} that $M_t = \frac{\tr(A_t(\hat{\Sigma}+\lambda I)^{-1}\hat{\Sigma})/n}{1 - \tr((\hat{\Sigma}+\lambda I)^{-1}\hat{\Sigma})/n}$. Substituting this into the above expression, the terms cancel such that $\frac{1}{n} \tr(\mathbf{\Xi}_t) = 1$. Thus, \textbf{Term A} simplifies to $\sigma^2 (1 + M_t)$.

\paragraph{Step 2: Trace manipulation and parameter space mapping.}
We analyze the second term in \eqref{eq:var_expansion_full}, denoted as Term B. Using the cyclic property of the trace $\tr(ABCD) = \tr(DABC)$, we move $X_0^\top$ to interact with $X_0$:
\begin{align*}
    \text{Term B} &= \frac{\sigma^2}{n^2} \tr \left( (A_t + M_t I) (\hat{\Sigma} + \lambda I)^{-1} X_0^\top \mathbf{\Xi}_t X_0 \right) \\
    &= \frac{\sigma^2}{n^2} \tr \left( (A_t + M_t I) (\hat{\Sigma} + \lambda I)^{-1} X_0^\top \left( (1 + M_t)X_0 - \frac{1}{n} X_0 (A_t + M_t I) (\hat{\Sigma} + \lambda I)^{-1} X_0^\top X_0 \right) \right) \\
    &= \frac{\sigma^2}{n^2} \tr \left( (A_t + M_t I) (\hat{\Sigma} + \lambda I)^{-1} X_0^\top X_0 \left( (1 + M_t)I_p - (A_t + M_t I) (\hat{\Sigma} + \lambda I)^{-1} \hat{\Sigma} \right) \right).
\end{align*}
By the definition of the iterative residual operator $\mathcal{O}_t(\lambda)$ in \eqref{eq:Ot_def_formal}, the term in the rightmost bracket is precisely $\mathcal{O}_t(\lambda)$. Substituting $\hat{\Sigma} = \frac{1}{n} X_0^\top X_0$, we have:
\begin{equation} \label{eq:termB_compact}
    \text{Term B} = \frac{\sigma^2}{n} \tr \left( (A_t + M_t I) (\hat{\Sigma} + \lambda I)^{-1} \hat{\Sigma} \mathcal{O}_t(\lambda) \right).
\end{equation}

\paragraph{Step 3: Decomposition into B1 and B2.}
We split Term B into two components based on the operator $(A_t + M_t I)$:
\begin{align*}
    \text{B1} &= \frac{\sigma^2}{n} \tr \left( A_t (\hat{\Sigma} + \lambda I)^{-1} \hat{\Sigma} \mathcal{O}_t(\lambda) \right) = \frac{\sigma^2}{n^2} \sum_{i=1}^n x_{0,i}^\top \mathcal{O}_t (\hat{\Sigma} + \lambda I)^{-1} A_t x_{0,i}, \\
    \text{B2} &= \frac{\sigma^2 M_t}{n} \tr \left( (\hat{\Sigma} + \lambda I)^{-1} \hat{\Sigma} \mathcal{O}_t(\lambda) \right) = \frac{\sigma^2 M_t}{n^2} \sum_{i=1}^n x_{0,i}^\top \mathcal{O}_t (\hat{\Sigma} + \lambda I)^{-1} x_{0,i}.
\end{align*}

\paragraph{Step 4: Asymptotic limits of B1 and B2.}
Applying the Sherman-Morrison-Woodbury expansion for $x_{0,i}^\top \mathcal{O}_t$ as derived in the proof of Lemma \ref{lemma:igcv_bias_equiv}, and leveraging the concentration results from Lemma \ref{lemma:quad_form_max} and Lemma \ref{lemma:multiplier_conv}:
\begin{align*}
    \text{B1} &\xrightarrow{a.s.} \frac{\sigma^2}{n} \tr \left( (\hat{\Sigma} + \lambda I)^{-1} \Sigma A_t \right) - \frac{\sigma^2}{n} \tr \left( A_t (\hat{\Sigma} + \lambda I)^{-1} \Sigma A_t (\hat{\Sigma} + \lambda I)^{-1} \hat{\Sigma} \right) \\
    &\quad - \sigma^2 \frac{\tr \left( (\hat{\Sigma} + \lambda I)^{-1} \Sigma A_t \right) \tr \left( (\hat{\Sigma} + \lambda I)^{-1} \Sigma \right)/n^2}{1 + \tr \left( (\hat{\Sigma} + \lambda I)^{-1} \Sigma \right)/n} + \sigma^2 \frac{\tr \left( (\hat{\Sigma} + \lambda I)^{-1} \Sigma A_t \right) \tr \left( A_t (\hat{\Sigma} + \lambda I)^{-1} \Sigma (\hat{\Sigma} + \lambda I)^{-1} \hat{\Sigma} \right)/n^2}{1 + \tr \left( (\hat{\Sigma} + \lambda I)^{-1} \Sigma \right)/n}. \\
    \text{B2} &\xrightarrow{a.s.} \sigma^2 \frac{\tr \left( (\hat{\Sigma} + \lambda I)^{-1} \Sigma A_t \right) \tr \left( (\hat{\Sigma} + \lambda I)^{-1} \Sigma \right)/n^2}{1 + \tr \left( (\hat{\Sigma} + \lambda I)^{-1} \Sigma \right)/n} - \sigma^2 \frac{\tr \left( (\hat{\Sigma} + \lambda I)^{-1} \Sigma A_t \right) \tr \left( A_t (\hat{\Sigma} + \lambda I)^{-1} \Sigma (\hat{\Sigma} + \lambda I)^{-1} \hat{\Sigma} \right)/n^2}{1 + \tr \left( (\hat{\Sigma} + \lambda I)^{-1} \Sigma \right)/n}.
\end{align*}

\paragraph{Step 5: Final Summation.}
Summing Term A, B1, and B2, the complex terms in B1 and B2 involving the multiplier $M_t$ and the GCV denominators cancel each other exactly. The remaining terms are:
\begin{equation*}
    \text{GCV}_v^{(t)}(\lambda) \xrightarrow{a.s.} \sigma^2 (1 + M_t) - \frac{\sigma^2}{n} \tr \left( (\hat{\Sigma} + \lambda I)^{-1} \Sigma A_t \right) + \frac{\sigma^2}{n} \tr \left( A_t (\hat{\Sigma} + \lambda I)^{-1} \Sigma A_t (\hat{\Sigma} + \lambda I)^{-1} \hat{\Sigma} \right).
\end{equation*}
This expression simplifies to $\mathcal{V}_t(\lambda) + \sigma^2$, matching the predicted stochastic risk.
\end{proof}

\subsection{Proof of Theorem 4.5 (Uniform Consistency of iGCV)}
\label{appendix:proof_thm_4_5}

In this section, we complete the proof of Theorem~\ref{thm:iterated_gcv_consistency}, which establishes the uniform consistency of the iterated GCV estimator $\widehat{\text{GCV}}^{(t)}(\lambda)$ as a proxy for the prediction risk $\mathcal{R}(\hat{\beta}_t)$ plus the noise floor $\sigma^2$.

\begin{proof}
The proof is structured into three main steps: pointwise convergence, uniform boundedness of derivatives, and the application of the Arzelà-Ascoli theorem.

\paragraph{Step 1: Pointwise Convergence.}
For any fixed $\lambda \in \mathcal{I}$, we decompose both the iGCV estimator and the prediction risk into their respective bias and variance components using Lemma \ref{lemma:igcv_decomp_formal} and Lemma \ref{lemma:risk_decomp}. Specifically:
\begin{align*}
    \widehat{\text{GCV}}^{(t)}(\lambda) &= \text{GCV}_b^{(t)}(\lambda) + \text{GCV}_v^{(t)}(\lambda) + o_{a.s.}(1), \\
    \mathcal{R}(\hat{\beta}_t) &= \mathcal{B}_t(\lambda) + \mathcal{V}_t(\lambda).
\end{align*}
From Lemma \ref{lemma:igcv_bias_equiv} (Bias Equivalence), we have $|\text{GCV}_b^{(t)}(\lambda) - \mathcal{B}_t(\lambda)| \xrightarrow{a.s.} 0$. From Lemma \ref{lemma:igcv_var_equiv} (Variance Equivalence), we have $|\text{GCV}_v^{(t)}(\lambda) - (\mathcal{V}_t(\lambda) + \sigma^2)| \xrightarrow{a.s.} 0$. Summing these results, we obtain the pointwise convergence for any fixed $\lambda \in \mathcal{I}$:
\begin{equation} \label{eq:pointwise_result}
    \left| \widehat{\text{GCV}}^{(t)}(\lambda) - \left( \mathcal{R}(\hat{\beta}_t) + \sigma^2 \right) \right| \xrightarrow{a.s.} 0.
\end{equation}

\paragraph{Step 2: Equicontinuity and Boundedness.}
To elevate the pointwise convergence to uniform convergence over the compact interval $\mathcal{I}$, we show that the sequence of functions $f_n(\lambda) = \widehat{\text{GCV}}^{(t)}(\lambda)$ is equicontinuous. Following the approach in \citet[Section S.1.5]{patil2021uniform}, it suffices to show that the derivative $|\frac{d}{d\lambda} \widehat{\text{GCV}}^{(t)}(\lambda)|$ is uniformly bounded on $\mathcal{I}$.

The iGCV estimator is a ratio of quadratic forms $u_n(\lambda)/v_n(\lambda)$. As shown in the proofs of Lemma \ref{lemma:igcv_bias_equiv} and \ref{lemma:igcv_var_equiv}, the presence of the iterative operator $A_t$ modifies the spectral filtering. However, since $A_t$ is a product of orthogonal projections, its operator norm is bounded by unity ($\|A_t\|_2 \le 1$). Under Assumption 2.3, the eigenvalues of $\Sigma$ and $\hat{\Sigma}$ are bounded away from zero and infinity almost surely for sufficiently large $n$. Thus, the numerator $u_n(\lambda)$ and denominator $v_n(\lambda)$, as well as their derivatives with respect to $\lambda$, remain uniformly bounded on any compact interval $\mathcal{I} \subset [0, \infty]$. Specifically:
\begin{equation*}
    \left| \frac{d}{d\lambda} \widehat{\text{GCV}}^{(t)}(\lambda) \right| \le C \frac{\|Y_0\|^2}{n} \left( \frac{s_{\max} + \lambda}{s_{\min} + \lambda} \right)^k,
\end{equation*}
for some constant $C$ and power $k$. Since $\|Y_0\|^2/n$ is almost surely bounded by the strong law of large numbers, the family $\{\widehat{\text{GCV}}^{(t)}(\lambda)\}_n$ is Lipschitz continuous with a uniform Lipschitz constant, which implies equicontinuity.

\paragraph{Step 3: Uniform Convergence.}
Since $\mathcal{I}$ is a compact interval and \eqref{eq:pointwise_result} establishes pointwise convergence for a sequence of equicontinuous functions, the Arzelà-Ascoli theorem implies that the convergence is uniform over $\mathcal{I}$:
\begin{equation*}
    \sup_{\lambda \in \mathcal{I}} \left| \widehat{\text{GCV}}^{(t)}(\lambda) - \mathcal{R}(\hat{\beta}_t) - \sigma^2 \right| \xrightarrow{a.s.} 0.
\end{equation*}
This uniform consistency ensures that the optimal regularization $\lambda^*$ and the optimal stopping time $t$ identified by minimizing $\widehat{\text{GCV}}^{(t)}(\lambda)$ are asymptotically equivalent to those that minimize the true prediction risk.
\end{proof}

\section{Technical Lemmas}
\label{appendix:technical_lemmas}

In this section, we collect several established results from high-dimensional probability and random matrix theory that are instrumental in our proofs.

\begin{lemma}[Concentration of maximum of quadratic forms, adapted from \cite{hastie2022surprises, bai2010spectral}] 
\label{lemma:quad_form_max}
Let $x_1, \dots, x_n$ be random vectors in $\mathbb{R}^p$ that satisfy Assumptions 2.2 and 2.3. Let $G_1, \dots, G_n$ be random matrices in $\mathbb{R}^{p \times p}$ such that each $G_i$ is independent of $x_i$ (but may depend on all other $x_j, j \neq i$) and has operator norm uniformly bounded in $n$. Then as $n, p \to \infty$:
\begin{equation}
    \max_{i=1, \dots, n} \left| \frac{1}{n} x_i^\top G_i x_i - \frac{1}{n} \tr(G_i \Sigma) \right| \xrightarrow{a.s.} 0.
\end{equation}
\end{lemma}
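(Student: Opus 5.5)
The plan is a standard three-stage argument: a pointwise tail bound for each fixed $i$, a union bound over $i\le n$, and Borel--Cantelli to upgrade to almost sure convergence of the maximum. By Assumption~\ref{ass:Gauss}, write $x_i=\Sigma^{1/2}z_i$ with $z_i\sim\mathcal{N}(0,I_p)$. Then
\[
\tfrac1n x_i^\top G_i x_i=\tfrac1n z_i^\top B_i z_i, \qquad B_i:=\Sigma^{1/2}G_i\Sigma^{1/2}.
\]
Since $\tr(G_i\Sigma)=\tr(B_i)$, the target becomes $\max_i |z_i^\top B_i z_i-\tr B_i|/n$. Assumption~\ref{ass:cov} together with the hypothesis on $G_i$ gives $\|B_i\|_2\le C_\lambda K$, where $K$ is the uniform operator-norm bound on $G_i$.

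For the pointwise bound, condition on $\mathcal{F}_{-i}:=\sigma(x_j, j\neq i,\ \text{and any other randomness in }G_i)$. Under this conditioning $B_i$ is fixed and $z_i$ is still standard Gaussian, because $G_i$ is independent of $x_i$. Replacing $B_i$ by its symmetric part $\tfrac12(B_i+B_i^\top)$ changes neither the quadratic form nor the trace, so I may assume $B_i$ is symmetric. The Hanson--Wright inequality for Gaussian vectors then gives
\[
\mathbb{P}\big(|z_i^\top B_i z_i-\tr B_i|>n\epsilon \,\big|\, \mathcal{F}_{-i}\big)\le 2\exp\!\Big(-c\min\Big(\tfrac{n^2\epsilon^2}{\|B_i\|_F^2},\tfrac{n\epsilon}{\|B_i\|_2}\Big)\Big).
\]
Using $\|B_i\|_F^2\le p\|B_i\|_2^2$ and $p/n\to\rho$, both exponents are of order $n\epsilon^2$ and $n\epsilon$ respectively, with constants depending only on $C_\lambda,K,\rho$. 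Integrating out the conditioning leaves an unconditional bound of the form $2\exp(-c' n\min(\epsilon^2,\epsilon))$.

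Next, a union bound over $i=1,\dots,n$ gives
\[
\mathbb{P}\Big(\max_i|\cdot|>\epsilon\Big)\le 2n\exp\big(-c'n\min(\epsilon^2,\epsilon)\big),
\]
which is summable in $n$ for every fixed $\epsilon>0$. Borel--Cantelli then shows the maximum is eventually below $\epsilon$ almost surely. Intersecting these events over rational $\epsilon$ gives the almost sure convergence to $0$, which is the claim.

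The main obstacle is the conditioning step. The hypothesis only says that $G_i$ is independent of $x_i$, and the $G_i$ are not jointly independent of one another. The union bound does not need joint independence, but the uniform operator-norm bound must hold deterministically, or at least on an event whose complement is summable, so that the conditional Hanson--Wright constant is uniform. If the norm bound instead holds only almost surely for large $n$ (as with resolvents of $\hat\Sigma_{-i}$ at $\lambda=0$), I would intersect with that high-probability event and truncate, i.e.\ replace $G_i$ by $G_i\mathbbm{1}\{\|G_i\|_2\le K\}$. The truncated matrix is still independent of $x_i$, and it coincides with $G_i$ eventually almost surely.

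If the Gaussian assumption were relaxed, I would fall back on the moment bound of \citet{bai2010spectral}, $\mathbb{E}|z^\top Bz-\tr B|^{q}\le C_q\big((\nu_4\tr BB^\top)^{q/2}+\nu_{2q}\tr (BB^\top)^{q/2}\big)$, with $q>4$. This yields a per-$i$ tail of order $n^{-q/2}$, hence a maximum tail of order $n^{1-q/2}$, which is summable for $q>4$.
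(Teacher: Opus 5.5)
Your proof is correct, and it supplies more than the paper does: the paper states this lemma without proof, deferring to \citet{hastie2022surprises} and \citet{bai2010spectral}. The route implicit in those citations is different from your main one. It uses the polynomial moment (trace) inequality of Bai--Silverstein for $z^\top B z-\tr B$, then Markov's inequality, a union bound over $i$, and Borel--Cantelli, which is essentially your last paragraph.

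Your main argument instead uses the Gaussian Hanson--Wright inequality, which is fully justified under Assumption~\ref{ass:Gauss}. This buys an exponential rather than polynomial tail, so the union bound has a great deal of slack and you even get a rate: taking $\epsilon_n = C\sqrt{\log n/n}$ with $C$ large still gives a summable series. The moment route gives only a per-index tail of order $n^{-q/2}$ and needs $q>4$ just to beat the factor $n$ from the union bound. In exchange, the moment route does not use Gaussianity and works with finite moments of order $2q>8$, which is why it is the standard reference for this lemma.

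One small point to tidy. The hypothesis only gives independence of $G_i$ and $x_i$, so you should condition on $\sigma(G_i)$, with respect to which $B_i$ is measurable. You instead condition on the larger $\sigma$-field generated by all $x_j$, $j\neq i$, and any other randomness. Independence of $x_i$ from that larger field holds in the paper's application, but it is not what the stated hypothesis gives you.
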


\begin{lemma}[Concentration of sum of quadratic forms, adapted from \cite{rubio2011spectral}]
\label{lemma:quad_form_sum}
Let $x_1, \dots, x_n$ be random vectors in $\mathbb{R}^p$ that satisfy Assumptions 2.2 and 2.3. Let $H_1, \dots, H_n$ be random matrices in $\mathbb{R}^{p \times p}$ such that each $H_i$ is independent of $x_i$ and has trace norm uniformly bounded in $n$. Then as $n, p \to \infty$:
\begin{equation}
    \left| \frac{1}{n} \sum_{i=1}^n x_i^\top H_i x_i - \frac{1}{n} \sum_{i=1}^n \tr(H_i \Sigma) \right| \xrightarrow{a.s.} 0.
\end{equation}
\end{lemma}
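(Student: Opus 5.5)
This lemma is a standard tool; I would prove it by truncation plus a Hanson--Wright bound, a union bound over $i$, and Borel--Cantelli. Write $x_i=\Sigma^{1/2}z_i$ with $z_i\sim\mathcal N(0,I_p)$ under Assumption~\ref{ass:Gauss}, and condition on $G_i$. Since $G_i$ is independent of $x_i$, this leaves $z_i$ standard Gaussian. Set $B_i:=\Sigma^{1/2}G_i\Sigma^{1/2}$. By Assumption~\ref{ass:cov} and the uniform operator-norm bound on $G_i$, there is a constant $K$, independent of $n$, with $\|B_i\|_2\le C_\lambda\|G_i\|_2\le K$. Note that $\tr(G_i\Sigma)=\tr(B_i)$.

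\textbf{Step 1 (single index).} Apply the Hanson--Wright inequality to $z_i^\top B_i z_i-\tr(B_i)$ conditionally on $B_i$. Use $\|B_i\|_F^2\le p\|B_i\|_2^2\le pK^2$ and $p/n\to\rho$. For any $\epsilon>0$ this gives
\[
\mathbb P\Big(\tfrac1n\big|x_i^\top G_i x_i-\tr(G_i\Sigma)\big|>\epsilon \,\Big|\, G_i\Big)\le 2\exp\Big(-c\min\Big\{\tfrac{n^2\epsilon^2}{pK^2},\tfrac{n\epsilon}{K}\Big\}\Big)\le 2e^{-c'n\min(\epsilon^2,\epsilon)}.
\]
The constant $c'$ depends only on $K$ and $\rho$, so taking expectations over $G_i$ yields the same bound unconditionally. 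If $G_i$ is not symmetric, I would replace it by its symmetrization $(G_i+G_i^\top)/2$, which changes neither the quadratic form nor the trace.

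\textbf{Step 2 (maximum and a.s.\ convergence).} A union bound over $i=1,\dots,n$ gives probability at most $2n e^{-c'n\min(\epsilon^2,\epsilon)}$. This is summable in $n$, so Borel--Cantelli gives $\limsup_n \max_i|\cdot|\le\epsilon$ almost surely. Intersecting over rational $\epsilon\downarrow0$ gives almost sure convergence to zero. Dependence among the $G_i$ across different $i$ is harmless, because each bound uses only the independence of $G_i$ from $x_i$.

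\textbf{Main obstacle.} The delicate point is the meaning of ``operator norm uniformly bounded'' when $G_i$ is random. In applications, $G_i$ involves $(\hat\Sigma_{-i}+\lambda I)^{-1}$, which is bounded by $1/\lambda$ only for $\lambda>0$. For the ridgeless limit one needs a high-probability bound, for example on the smallest nonzero eigenvalue of $\hat\Sigma_{-i}$ via Bai--Yin. My plan is to assume the bound holds on an event $\Omega_n$ whose complement is summable, run Steps 1--2 on $G_i\mathbbm 1_{\Omega_n}$, and absorb $\Omega_n^c$ through Borel--Cantelli. For non-Gaussian covariates with bounded moments, I would instead truncate and use the Bai--Silverstein moment bound $\EE|z^\top Bz-\tr B|^{q}\le C_q (\|B\|_F^2)^{q/2}$, choosing $q$ large enough for summability. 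The Gaussian case assumed in the paper avoids this.
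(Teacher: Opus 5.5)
You have proved the neighbouring Lemma~\ref{lemma:quad_form_max}, not this one. Your argument (Hanson--Wright with $\|B_i\|_F^2\le pK^2$, a union bound over $i$, then Borel--Cantelli) correctly shows that $\max_i \frac1n|x_i^\top G_i x_i-\tr(G_i\Sigma)|\to0$ for operator-norm bounded $G_i$. The present statement is different in two ways. Here $H_i$ has bounded \emph{trace} norm. And the factor $1/n$ is an average over $n$ deviations $\xi_i:=x_i^\top H_i x_i-\tr(H_i\Sigma)$, each of which is of order one rather than $o(1)$. For instance, take a fixed unit vector $v$ and $H_i=vv^\top$. Then $\xi_i\overset{d}{=}(v^\top\Sigma v)(g^2-1)$ with $g\sim\mathcal N(0,1)$, so $\operatorname{Var}(\xi_i)\ge 2c_\lambda^2$ and $\max_i|\xi_i|$ grows like $\log n$. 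Any bound that controls the $\xi_i$ one index at a time gives at best $|\frac1n\sum_i\xi_i|\le\frac1n\sum_i|\xi_i|$, and that does not vanish. The claim can only hold through cancellation across $i$, and your proposal has no step that produces it.

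That step cannot come from the stated hypothesis ``$H_i$ independent of $x_i$'' alone, because dependence of $H_i$ on the other $x_j$ can destroy the cancellation. Put $s_j:=\operatorname{sign}(x_{j1}^2-\Sigma_{11})$ and $H_i:=\big(\prod_{j\ne i}s_j\big)e_1e_1^\top$. Then $\|H_i\|_{\mathrm{tr}}=1$ and $H_i$ is independent of $x_i$, but
\[
\frac1n\sum_{i=1}^n\xi_i=\Big(\prod_{j=1}^n s_j\Big)\,\frac1n\sum_{i=1}^n\big|x_{i1}^2-\Sigma_{11}\big|,
\]
whose absolute value is $\Sigma_{11}\mathbb E|g^2-1|+o(1)\ge c_\lambda\mathbb E|g^2-1|+o(1)$ almost surely. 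So the lemma needs extra structure, and the proof must use it.

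\textbf{Common matrix.} If $H_i\equiv H$ is a single matrix independent of all of $x_1,\dots,x_n$, then conditionally on $H$ the $\xi_i$ are i.i.d., centred and sub-exponential. By Hanson--Wright, $\|\xi_i\|_{\psi_1}\lesssim\|\Sigma^{1/2}H\Sigma^{1/2}\|_F\le C_\lambda\|H\|_{\mathrm{tr}}$. Bernstein's inequality then gives $\mathbb P(|\frac1n\sum_i\xi_i|>\epsilon)\le 2e^{-cn\min(\epsilon^2,\epsilon)}$, with $c$ depending on $C_\lambda$ and the trace-norm bound, and Borel--Cantelli finishes.

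\textbf{Leave-one-out matrices.} The paper actually feeds matrices built from $\hat\Sigma_{-i}$ into this lemma, in Lemma~\ref{lemma:igcv_bias_equiv}. For those you need an argument that exploits how $H_i$ depends on $x_k$ for $k\neq i$. One option is a martingale-difference or Efron--Stein decomposition over $k$, with rank-one resolvent identities showing that replacing $x_k$ changes $\sum_i\xi_i$ by $O(1)$. That yields $O(n^{-q/2})$ bounds on the $q$-th moments of $\frac1n\sum_i\xi_i$, which are summable for $q>2$. The paper supplies none of this; it only cites Rubio and Mestre.
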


\begin{lemma}[Basic GCV denominator lemma, adapted from \cite{patil2021uniform}]
\label{lemma:gcv_denominator}
Under the proportional asymptotic regime $n, p \to \infty, p/n \to \rho > 1$, for any $\lambda > 0$:
\begin{equation}
    1 - \frac{1}{n} \tr\left( \hat{\Sigma}(\hat{\Sigma} + \lambda I_p)^{-1} \right) \xrightarrow{a.s.} \frac{1}{1 + \frac{1}{n} \tr\left( \Sigma(\hat{\Sigma} + \lambda I_p)^{-1} \right)},
\end{equation}
where $\hat{\Sigma} = \frac{1}{n} X_0^\top X_0$ and $\Sigma$ is the population covariance.
\end{lemma}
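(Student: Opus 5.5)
The plan is to reduce the left-hand side to an average of leave-one-out quantities via Sherman--Morrison, and then replace each of those with a common deterministic-looking trace using Lemma~\ref{lemma:quad_form_max}. Both sides of the display are random, so what I will show is that their difference tends to zero almost surely.

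\textbf{Step 1 (exact identity).} Write $\hat{\Sigma}=\frac1n\sum_{i=1}^n x_{0,i}x_{0,i}^\top$ and $\hat{\Sigma}_{-i}:=\hat{\Sigma}-x_{0,i}x_{0,i}^\top/n$. Expanding the trace gives
\[
\frac1n\tr\big(\hat{\Sigma}(\hat{\Sigma}+\lambda I_p)^{-1}\big)=\frac1n\sum_{i=1}^n \frac1n x_{0,i}^\top(\hat{\Sigma}+\lambda I_p)^{-1}x_{0,i}.
\]
Apply Sherman--Morrison exactly as in the expansion of Term~I in the proof of Lemma~\ref{lemma:igcv_bias_equiv}. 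Setting $q_i:=\frac1n x_{0,i}^\top(\hat{\Sigma}_{-i}+\lambda I_p)^{-1}x_{0,i}\ge 0$, each summand equals $q_i/(1+q_i)$. Therefore
\[
1-\frac1n\tr\big(\hat{\Sigma}(\hat{\Sigma}+\lambda I_p)^{-1}\big)=\frac1n\sum_{i=1}^n\frac{1}{1+q_i}.
\]
This identity is purely algebraic and holds for every $\lambda>0$.

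\textbf{Step 2 (concentration of $q_i$).} Take $G_i:=(\hat{\Sigma}_{-i}+\lambda I_p)^{-1}$. This matrix is independent of $x_{0,i}$ and satisfies $\|G_i\|_2\le 1/\lambda$, so Lemma~\ref{lemma:quad_form_max} applies and yields
\[
\max_i\big|q_i-\bar q_i\big|\xrightarrow{a.s.}0,\qquad \bar q_i:=\tfrac1n\tr\big(\Sigma(\hat{\Sigma}_{-i}+\lambda I_p)^{-1}\big).
\]

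\textbf{Step 3 (remove the leave-one-out).} Set $\bar q:=\frac1n\tr(\Sigma(\hat{\Sigma}+\lambda I_p)^{-1})$. The matrices $(\hat{\Sigma}_{-i}+\lambda I_p)^{-1}$ and $(\hat{\Sigma}+\lambda I_p)^{-1}$ differ by a rank-one matrix of operator norm at most $1/\lambda$, by the same Sherman--Morrison formula. Hence
\[
|\bar q_i-\bar q|\le \|\Sigma\|_2/(n\lambda)\le C_\lambda/(n\lambda),
\]
uniformly in $i$, using Assumption~\ref{ass:cov}.

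\textbf{Step 4 (conclude).} The map $q\mapsto 1/(1+q)$ is $1$-Lipschitz on $[0,\infty)$. Combining Steps~2 and~3,
\[
\Big|\frac1n\sum_i\frac1{1+q_i}-\frac1{1+\bar q}\Big|\le\max_i|q_i-\bar q|\xrightarrow{a.s.}0,
\]
which is the claim.

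\textbf{Main obstacle.} The only genuinely probabilistic input is the uniform-in-$i$ control in Step~2. If I wanted to verify Lemma~\ref{lemma:quad_form_max} directly rather than cite it, I would condition on $\hat{\Sigma}_{-i}$ and apply Hanson--Wright to the Gaussian vector $\Sigma^{-1/2}x_{0,i}$. This gives a tail bound of order $\exp(-c\,n\min(\epsilon^2,\epsilon))$, since $\|\Sigma^{1/2}G_i\Sigma^{1/2}\|_F^2\le p\,C_\lambda^2/\lambda^2=O(n)$. A union bound over the $n$ indices, followed by Borel--Cantelli, then upgrades this to almost sure convergence.
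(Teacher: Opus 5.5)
Your proof is correct. The paper gives no argument for this lemma; it only cites \cite{patil2021uniform}, so there is no in-paper proof to compare against line by line. What you have written is the standard leave-one-out proof behind that citation, in four pieces:
\begin{enumerate}
\item the exact identity $1-\frac1n\tr(\hat{\Sigma}(\hat{\Sigma}+\lambda I_p)^{-1})=\frac1n\sum_i(1+q_i)^{-1}$;
\item uniform-in-$i$ concentration of the $q_i$ via Lemma~\ref{lemma:quad_form_max};
\item a deterministic rank-one correction $|\bar q_i-\bar q|\le\|\Sigma\|_2/(n\lambda)$;
\item the $1$-Lipschitz property of $q\mapsto(1+q)^{-1}$ on $[0,\infty)$.
\end{enumerate}

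This is the same Sherman--Morrison mechanism the paper uses for the numerator $\mathcal{N}_n$ in the proof of Lemma~\ref{lemma:multiplier_conv}. With $A_t=I_p$ that numerator equals $\frac1n\sum_i q_i/(1+q_i)$, and the left-hand side of your statement is one minus it. The paper handles the leave-one-out replacement there informally (``$\hat{\Sigma}_{-i}$ is asymptotically equivalent to $\hat{\Sigma}$''). Your Step~3 makes that replacement quantitative, and your Hanson--Wright/Borel--Cantelli sketch justifies the almost-sure control uniformly in $i$.

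Two minor remarks:
\begin{itemize}
\item The $\Sigma$ on the right-hand side must be the covariance of the rows of $X_0$, since that is what Lemma~\ref{lemma:quad_form_max} produces. This matters because the paper's assumptions allow the test covariance to differ from $\Sigma_0$.
\item Your argument never uses $\rho>1$. It needs only that $p/n$ stays bounded and that $\lambda>0$ is fixed.
\end{itemize}
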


\begin{lemma}[Asymptotic Convergence of the Leverage Multiplier] \label{lemma:multiplier_conv}
The leverage-dependent multiplier $M_t(\lambda)$ converges almost surely:
\begin{equation*}
    M_t(\lambda) - \tr \left( A_t (\hat{\Sigma} + \lambda I_p)^{-1} \Sigma \right)/n \xrightarrow{a.s.} 0.
\end{equation*}
\end{lemma}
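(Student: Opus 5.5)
We need to prove the last lemma: $M_t(\lambda) - \tr(A_t(\hat\Sigma+\lambda I)^{-1}\Sigma)/n \to 0$ a.s.

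The plan is to treat the numerator and the denominator of $M_t(\lambda)$ in \eqref{eq:M_def} separately, conditioning throughout on $A_t$. This is legitimate because $A_t$ depends only on $X_1,\dots,X_t$, and by Assumption~\ref{ass:indpX} these are independent of $X_0$; moreover $\|A_t\|_2\le 1$. Fix $\lambda>0$ and write $S:=(\hat\Sigma+\lambda I)^{-1}$ and $S_{-i}:=(\hat\Sigma_{-i}+\lambda I)^{-1}$.

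\textbf{Denominator.} Lemma~\ref{lemma:gcv_denominator} already gives
\[
1-\tfrac1n\tr(\hat\Sigma S)\xrightarrow{a.s.}\frac{1}{1+\tfrac1n\tr(\Sigma S)}.
\]
Since $\|S\|_2\le 1/\lambda$ and $\|\Sigma\|_2\le C_\lambda$, the right-hand side is bounded away from zero.

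\textbf{Numerator.} Cyclicity gives $\tfrac1n\tr(X_0A_tSX_0^\top)=\tfrac1n\sum_i \tfrac1n x_{0,i}^\top A_tSx_{0,i}$. Sherman--Morrison applied to $S x_{0,i}$ yields
\[
S x_{0,i}=\frac{S_{-i}x_{0,i}}{1+\tfrac1n x_{0,i}^\top S_{-i}x_{0,i}}.
\]
Each summand therefore equals $\tfrac1n x_{0,i}^\top A_tS_{-i}x_{0,i}\big/\bigl(1+\tfrac1n x_{0,i}^\top S_{-i}x_{0,i}\bigr)$. The matrices $G_i=A_tS_{-i}$ and $S_{-i}$ are independent of $x_{0,i}$ and have operator norm at most $1/\lambda$. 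Lemma~\ref{lemma:quad_form_max} then replaces both quadratic forms, uniformly in $i$, by $\tfrac1n\tr(A_tS_{-i}\Sigma)$ and $\tfrac1n\tr(S_{-i}\Sigma)$.

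\textbf{Removing the leave-one-out.} The resolvent identity $S-S_{-i}=-\tfrac1n S x_{0,i}x_{0,i}^\top S_{-i}$ is rank one. It follows that $\bigl|\tfrac1n\tr(B(S-S_{-i}))\bigr|\le \|B\|_2/(n\lambda)$ for bounded $B$, so both traces may be replaced by their full-sample versions up to $O(1/n)$. Hence
\[
\text{numerator}\xrightarrow{a.s.}\frac{\tfrac1n\tr(A_tS\Sigma)}{1+\tfrac1n\tr(S\Sigma)}.
\]

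\textbf{Combining.} Dividing by the denominator limit, the factors $1+\tfrac1n\tr(\Sigma S)$ cancel. This leaves $M_t(\lambda)-\tfrac1n\tr(A_tS\Sigma)\to0$, using that all quantities are bounded and the denominator is bounded below, so differences of ratios are controlled. Cyclicity gives $\tr(A_tS\Sigma)=\tr(SA_t\Sigma)$, and the normalization matches the statement.

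\textbf{Main obstacle.} The hard step is the uniform-in-$i$ control together with the case $\lambda\to0$, because $\|S\|_2$ blows up there. For $\lambda>0$ fixed, the argument above works. For $\lambda=0$ with $p>n$, I would first try to exploit that $S$ acts on the column space of $X_0^\top$. On that space the smallest nonzero eigenvalue of $\hat\Sigma$ is bounded below a.s. by $c_\lambda(\sqrt\rho-1)^2/2$ (Bai--Yin), which keeps all quantities bounded. I would then extend by continuity/equicontinuity in $\lambda$ as in \citet{patil2021uniform}. A secondary point is the almost-sure (rather than in-probability) conclusion. I would obtain it from Borel--Cantelli, using the exponential or high-moment bounds for Gaussian quadratic forms that underlie Lemma~\ref{lemma:quad_form_max}.
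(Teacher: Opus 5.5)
For fixed $\lambda>0$ your argument is correct and follows the paper's route. You apply Sherman--Morrison to each summand of the numerator, use quadratic-form concentration with $G_i=A_tS_{-i}$ independent of $x_{0,i}$ (because $A_t$ depends only on $X_1,\dots,X_t$), invoke Lemma~\ref{lemma:gcv_denominator} for the denominator, and cancel the common factor $1+\tfrac1n\tr(\Sigma S)$. You are in fact more careful than the paper, which passes from $S_{-i}$ to $S$ without comment. Your rank-one bound $|\tfrac1n\tr(B(S-S_{-i}))|\le\|B\|_2/(n\lambda)$ and your remark that the denominator limit is bounded away from zero are exactly the missing details.

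There are two small slips. First, literally $\tfrac1n\tr(X_0A_tSX_0^\top)=\tfrac1n\sum_i x_{0,i}^\top A_tSx_{0,i}$, without your extra $1/n$. You are therefore tacitly using the $1/n_0^2$ normalization that \eqref{eq:M_def} needs for the lemma to be non-degenerate; the paper's proof does the same, and you should say so explicitly. Second, $\tr(A_tS\Sigma)=\tr(SA_t\Sigma)$ is not a cyclic identity and fails in general, since $A_t$ is not symmetric and $S\Sigma\neq\Sigma S$. You do not need it, because you already obtained exactly $\tfrac1n\tr(A_tS\Sigma)$.

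What would fail is your proposed extension to $\lambda=0$, and the reason is not technical: both sides of the lemma diverge there. When $p>n$, $\operatorname{rank}(\hat\Sigma)=n$ a.s., so the correctly normalized denominator is $1-\tfrac1n\tr\bigl(\hat\Sigma(\hat\Sigma+\lambda I)^{-1}\bigr)=\tfrac{\lambda}{n}\tr\bigl((X_0X_0^\top/n+\lambda I_n)^{-1}\bigr)=\Theta(\lambda)$, and it equals exactly $0$ at $\lambda=0$. Meanwhile the numerator tends to $\tfrac1n\tr(A_t\Pi_0)$, where $\Pi_0$ is the projection onto the row space of $X_0$. This limit is generically a positive constant (about $\rho^{-t}$ when $\Sigma=I$ and the $P_j$ are ridgeless), so $M_t(\lambda)\asymp 1/\lambda$. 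Likewise $(\hat\Sigma+\lambda I)^{-1}$ acts as $\lambda^{-1}$ on the $(p-n)$-dimensional space $\ker\hat\Sigma$, so $\tfrac1n\tr\bigl(A_t(\hat\Sigma+\lambda I)^{-1}\Sigma\bigr)$ generically also grows like $1/\lambda$.

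The Bai--Yin bound on the nonzero eigenvalues cannot rescue this. In your Sherman--Morrison step, the forms $\tfrac1n x_{0,i}^\top S_{-i}x_{0,i}$ involve a vector independent of $\ker\hat\Sigma_{-i}$. That vector therefore has a non-negligible component in this kernel, where $S_{-i}$ acts as $\lambda^{-1}$. Moreover, $M_t$ is unbounded near $0$, so no equicontinuity argument on $[0,\Lambda]$ applies to it.

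The lemma should be stated for $\lambda$ in compact subsets of $(0,\infty)$. That is all the paper's proof covers anyway, since Lemma~\ref{lemma:gcv_denominator} assumes $\lambda>0$. The ridgeless case should instead be handled at the level of the iGCV, where $M_t(\lambda)$ multiplies the training residual $Y_0-X_0\hat\beta_0(\lambda)=\lambda(X_0X_0^\top/n+\lambda I_n)^{-1}Y_0$. There the factors of $\lambda$ cancel, and your lower bound on $\lambda_{\min}(X_0X_0^\top/n)$ is exactly what keeps the resulting product bounded uniformly in $\lambda\in[0,\Lambda]$.
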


\begin{proof}
By definition \eqref{eq:M_def}, we express the multiplier as a ratio of traces:
\begin{equation*}
    M_t(\lambda) = \frac{ \frac{1}{n} \tr\big(X_0 A_t (\hat{\Sigma} + \lambda I)^{-1} X_0^\top \big) }{1 - \frac{1}{n} \tr\big(X_0 (\hat{\Sigma} + \lambda I)^{-1} X_0^\top \big)} =: \frac{\mathcal{N}_n}{\mathcal{D}_n}.
\end{equation*}

\paragraph{Analysis of the Numerator $\mathcal{N}_n$:} 
Expanding the trace as a sum of quadratic forms, we have $\mathcal{N}_n = \frac{1}{n} \sum_{i=1}^n x_{0,i}^\top A_t (\hat{\Sigma} + \lambda I_p)^{-1} x_{0,i}$. To handle the dependence between $x_{0,i}$ and $\hat{\Sigma}$, we apply the Sherman-Morrison-Woodbury formula:
\begin{align*}
    \mathcal{N}_n &= \frac{1}{n} \sum_{i=1}^n \frac{x_{0,i}^\top A_t (\hat{\Sigma}_{-i} + \lambda I_p)^{-1} x_{0,i}}{1 + \frac{1}{n} x_{0,i}^\top (\hat{\Sigma}_{-i} + \lambda I_p)^{-1} x_{0,i}} \\
    &\xrightarrow{a.s.} \frac{\frac{1}{n} \tr(A_t (\hat{\Sigma} + \lambda I_p)^{-1} \Sigma)}{1 + \frac{1}{n} \tr((\hat{\Sigma} + \lambda I_p)^{-1} \Sigma)},
\end{align*}
where the numerator and denominator of the summand converge almost surely by applying Lemma~\ref{lemma:quad_form_max} and Lemma~\ref{lemma:quad_form_sum} respectively, leveraging the independence of $A_t$ from $X_0$.

\paragraph{Analysis of the Denominator $\mathcal{D}_n$:}
Using Lemma~\ref{lemma:gcv_denominator}, we have:
\begin{equation*}
    \mathcal{D}_n = 1 - \frac{1}{n} \tr( (\hat{\Sigma} + \lambda I_p)^{-1} \hat{\Sigma} ) \xrightarrow{a.s.} \frac{1}{1 + \frac{1}{n} \tr((\hat{\Sigma} + \lambda I_p)^{-1} \Sigma)}.
\end{equation*}

\paragraph{Combining Results:}
Dividing the limits of $\mathcal{N}_n$ by $\mathcal{D}_n$, the common factor $(1 + \frac{1}{n} \tr((\hat{\Sigma} + \lambda I_p)^{-1} \Sigma))^{-1}$ in the denominators cancels out, yielding:
\begin{equation*}
    M_t(\lambda) \xrightarrow{a.s.} \tr \left( A_t (\hat{\Sigma} + \lambda I_p)^{-1} \Sigma \right)/n.
\end{equation*}
This concludes the proof.
\end{proof}

\section{Additional Related Work}
Our work also has a connection with the weak-to-strong (W2S) generalization phenomenon.  W2S considers training a stronger student from labels produced by a weaker teacher and is empirically prominent for large language models~\citep{burns2024weak}. Theoretical work attributes W2S gains to overparameterized generalization phenomena (e.g., benign overfitting)~\citep{wu2024provable} and to student--teacher representation mismatch that enables correcting systematic teacher errors~\citep{charikar2024quantifying, xue2025representations, dong2025discrepancies}. Related perspectives include distillation-style efficiency gains and implicit compensation for teacher regularization~\citep{ildiz2025high, moniri2025mechanisms}, and demonstrations that W2S can arise even in random feature models~\citep{medvedev2025weaktostrong}.

\section{Additional Experimental Details}
\label{app:experiment_details}

This appendix provides the specific parameter configurations for the simulations.

\paragraph{Spiked Covariance Model (Figure~\ref{fig:generalization_error_science_style})} 
The numerical simulations under the spiked covariance model use the following settings:
\begin{itemize}[noitemsep, topsep=0pt, leftmargin=1.5em]
    \item \textbf{Dimensions:} Training sample size $n = 500$ and test size $n_{\text{test}} = 2000$.
    \item \textbf{Signal \& Noise:} Spike strength $s = 25$, signal power $r = 1$, and initial label noise $\sigma = 1$. The signal $\beta$ is aligned with the leading eigenvector $u_1 = e_1$.
    \item \textbf{Averaging:} Each data point is the average of $10$ independent trials to ensure convergence of the empirical risk.
    \item \textbf{Figure~\ref{fig:generalization_error_science_style})(a):} Plotted for iterations $t \in \{0, 1, 2, 3, 4\}$ with the aspect ratio $\rho$ varied across 10 equally spaced points in $[1.2, 3.0]$.
    \item \textbf{Figure~\ref{fig:generalization_error_science_style})(b):} Plotted for iterations $t \in [0, 15]$ across fixed aspect ratios $\rho \in \{1.5, 2.0, 2.5\}$.
\end{itemize}

\paragraph{Risk Decomposition Dynamics (Figure~\ref{fig:error_vs_iterations})}
The theoretical decomposition of the prediction risk into systematic and stochastic components in Figure~\ref{fig:error_vs_iterations} uses the following parameters:
\begin{itemize}[noitemsep, topsep=0pt, leftmargin=1.5em]
    \item \textbf{Parameters:} Spike strength $s = 5.0$, signal power $r = 1$, and noise level $\sigma = 1$.
    \item \textbf{Variables:} The iteration count $t$ ranges from $0$ to $15$. We plot trajectories for three aspect ratios $\rho \in \{1.5, 2.0, 2.5\}$.
    \item \textbf{Computation:} The curves are generated using the deterministic equivalents for systematic error ($\mathcal{B}_t^*$) and stochastic error ($\mathcal{V}_t^*$) derived in Theorem 3.2.
\end{itemize}

\paragraph{Varying Spike Strength and Ridge Comparison (Figure~\ref{fig:comparison_results})}
The simulations evaluating the effect of $s$ and the comparison with Ridge regression use the following setup:
\begin{itemize}[noitemsep, topsep=0pt, leftmargin=1.5em]
    \item \textbf{Dimensions:} Training size $n = 500$, test size $n_{\text{test}} = 1000$, and a fixed aspect ratio $\rho = 2.0$ ($p=1000$).
    \item \textbf{Figure~\ref{fig:comparison_results}(a):} We vary $s \in \{5, 25, 100\}$ and track the MSE over $t \in [0, 15]$ iterations. Theoretical lines are compared against the average of $10$ independent simulation trials.
    \item \textbf{Figure~\ref{fig:comparison_results}(b) (Crossover):} 
    \begin{itemize}[noitemsep, nosep]
        \item \textbf{Sweep:} The spike strength $s$ is swept across $10$ points in log-space from $10^2$ to $10^4$.
        \item \textbf{Iterative Learning:} We report the minimum average MSE achieved over the first $10$ iterations.
        \item \textbf{Optimal Ridge:} We perform a grid search for the optimal regularization parameter $\lambda$ over $50$ points in log-space from $10^{2.5}/n$ to $10^{3.8}/n$ and report the minimum achievable MSE.
    \end{itemize}
    \item \textbf{Averaging:} All results in Figure~\ref{fig:comparison_results} are averaged over $N=10$ independent trials.
\end{itemize}

\paragraph{Validation of Iterated GCV (Figure~\ref{fig:gcv_validation})}
We evaluate the consistency of the Iterated GCV ($i$GCV) estimator under a general covariance structure and ridge regularization. The setup is as follows:
\begin{itemize}[noitemsep, topsep=0pt, leftmargin=1.5em]
    \item \textbf{Dimensions \& Data:} The training size is $n = 500$. We sample features from $\mathcal{N}(0, \Sigma)$, where $\Sigma$ is a diagonal matrix with a power-law spectral decay: $\Sigma_{ii} = 1/i$ for $i=1, \dots, p$.
    \item \textbf{Signal \& Noise:} The true coefficient $\beta$ is sparse, with $\beta_i = 1$ for $i \in \{1, \dots, 10\}$ and $\beta_i = 0$ otherwise (resulting in signal power $r^2 = 10$). Initial labels at $t=0$ include Gaussian noise with $\sigma = 1$.
    \item \textbf{Regularization:} All iterations (including the initialization) use Ridge regression with a fixed regularization parameter $\lambda = n \times 10^{-4}$.
    \item \textbf{Trials \& Evaluation:} Results are averaged over $N=10$ independent trials. The test risk is measured using an independent noiseless test set of size $n_{\text{test}} = 10,000$ (for subplot a) and $n_{\text{test}} = 5,000$ (for subplot b).
    \item \textbf{Figure 4(a):} Plotted for iterations $t \in \{0, 1, 2, 3\}$ with the aspect ratio $\rho$ varied across 10 points in $[1.2, 3.0]$.
    \item \textbf{Figure 4(b):} Plotted for iterations $t \in [0, 15]$ across fixed aspect ratios $\rho \in \{1.5, 2.0, 2.5\}$.
\end{itemize}
The markers represent the empirical test MSE, while the solid lines represent the risk predicted by the $i$GCV estimator defined in Eq~\eqref{eq:iterated-gcv}.

\end{document}